
\documentclass[10pt,journal,compsoc]{IEEEtran}
%
\usepackage[utf8]{inputenc} 
\usepackage[T1]{fontenc}    
\usepackage{hyperref}       
\usepackage{url}            
\usepackage{booktabs}       
\usepackage{amsmath}
\usepackage{amsthm}
\usepackage{amssymb}
\usepackage{amsfonts}       
\usepackage{nicefrac}       
\usepackage{microtype}      
\usepackage{cleveref}
\usepackage{bm}
\usepackage{multirow}
\usepackage{balance}
\usepackage{graphics}
\usepackage{graphicx}
\usepackage{float}
\usepackage{subfigure}
\usepackage{epstopdf}
\usepackage{proof}
\usepackage{multicol}
\usepackage{bbm}
\usepackage{caption}
\usepackage{pifont}
\usepackage{titlesec}
\usepackage{titletoc}
\usepackage{lipsum}
\usepackage{atbegshi}
\usepackage{etoolbox}
\usepackage{ragged2e}
\usepackage{newclude}
\usepackage{colortbl}
\usepackage{marvosym}

\usepackage{xcolor}         
\definecolor{ballblue}{rgb}{0.13, 0.67, 0.8}
\definecolor{lightseagreen}{rgb}{0.13, 0.7, 0.67}
\definecolor{darkturq}{HTML}{00CED1}
\definecolor{org}{HTML}{F8A145}
\definecolor{blu}{HTML}{63ACE5}
\definecolor{c1}{HTML}{8ECFC9}
\definecolor{c2}{HTML}{FFBE7A}
\definecolor{c3}{HTML}{FA7F6F}
\definecolor{softred}{HTML}{FE8A71}
\definecolor{softblue}{HTML}{63ACE5}

\hypersetup
{
    colorlinks=true,
    linkcolor=blue,
    citecolor=darkturq 	 	
}

\usepackage{algorithmic}
\usepackage{algorithm}

\newcommand{\cmark}{\ding{51}}

\usepackage{thm-restate}

\newtheorem{thm}{Theorem}

\newtheorem{lem}{Lemma}
\newtheorem{rem}{Remark}
\newtheorem{asm}{Assumption}

\newcommand{\OP}[1]{{\color{org}(\textit{OP#1})}}

\newcommand{\Eqref}[1]{Eq.~(\ref{#1})}
\newcommand{\Tbref}[1]{Tab.~\ref{#1}}
\newcommand{\Fgref}[1]{Fig.~\ref{#1}}

\newcommand{\Thmref}[1]{Thm.~\ref{#1}}
\newcommand{\Lemref}[1]{Lem.~\ref{#1}}

\newcommand{\Asmpref}[1]{Asmp.~\ref{#1}}

\renewcommand{\cref}[1]{Sec.~\ref{#1}}

\newcommand{\Remref}[1]{Rem.~\ref{#1}}

\def \xbari {\bar{\bm{x}}_i}

\def \wij {w_{ij}}
\def \vij {v_{ij}}
\def \whatij {\hat{w}_{ij}}

\def \sij {\zi^\top\zj}
\def \sjl {\zj^\top\zl}

\def \xi {\bm{x}_i}
\def \xj {\bm{x}_j}
\def \zi {\bm{z}_i}
\def \zj {\bm{z}_j}
\def \zl {\bm{z}_l}

\def \phat {\hat{\bm{p}}}
\def \qhat {\hat{\bm{q}}}
\def \one {\mathbbm{1}}

\def \Expt {\mathbb{E}}

\def \Prob {\mathbb{P}}

\def \eg {\textit{e.g.}}
\def \ie {\textit{i.e.}}
\def \etal {\textit{et al.}}
\def \wrt {w.r.t.}


%

\setlength{\parskip}{\lineskip}

%
\ifCLASSOPTIONcompsoc
  \usepackage[nocompress]{cite}
\else
  \usepackage{cite}
\fi
%

%
\ifCLASSINFOpdf
\else
\fi
\hyphenation{op-tical net-works semi-conduc-tor}

\begin{document}

\makeatletter
\newcommand{\discardpages}[1]{
  \xdef\discard@pages{#1}
  \AtBeginShipout{
    \renewcommand*{\do}[1]{
      \ifnum\value{page}=##1\relax%
        \AtBeginShipoutDiscard
        \gdef\do####1{}
      \fi%
    }%
    \expandafter\docsvlist\expandafter{\discard@pages}
  }%
}
\newif\ifkeeppage
\newcommand{\keeppages}[1]{
  \xdef\keep@pages{#1}
  \AtBeginShipout{
    \keeppagefalse%
    \renewcommand*{\do}[1]{
      \ifnum\value{page}=##1\relax%
        \keeppagetrue
        \gdef\do####1{}
      \fi%
    }%
    \expandafter\docsvlist\expandafter{\keep@pages}
    \ifkeeppage\else\AtBeginShipoutDiscard\fi
  }%
}
\makeatother

%

\title{Semantic Concentration for Self-Supervised Dense Representations Learning}
%
%
%
%

\author{Peisong Wen, 
        Qianqian Xu,~\IEEEmembership{Senior Member,~IEEE,}
        Siran Dai, \\
        Runmin Cong,~\IEEEmembership{Senior Member,~IEEE,}
        and~Qingming Huang,~\IEEEmembership{Fellow,~IEEE}
\IEEEcompsocitemizethanks{
\IEEEcompsocthanksitem Peisong Wen is with the School of Computer Science and Technology, University of Chinese Academy of Sciences, Beijing 101408, China (email: \texttt{wenpeisong@ucas.ac.cn}).\protect\\
\IEEEcompsocthanksitem Qianqian Xu is with the State Key Laboratory of AI Safety, Institute of Computing Technology, Chinese Academy of Sciences, Beijing 100190, China, and also with Peng Cheng Laboratory, Shenzhen 518055, China (email: \texttt{xuqianqian@ict.ac.cn}).\protect\\
\IEEEcompsocthanksitem Siran Dai with State Key Laboratory of Information Security (SKLOIS),
Institute of Information Engineering, Chinese Academy of Sciences, Beijing 100093, China, and also with the School of Cyber Security, University of Chinese Academy of Sciences, Beijing 100049, China (email: \texttt{daisiran@iie.ac.cn}).\protect\\
\IEEEcompsocthanksitem Runmin Cong is with the School of Control Science and Engineering and the Key Laboratory of Machine Intelligence and System Control, Ministry of Education, Shandong University, Jinan 250061, Shandong, China (email: \texttt{rmcong@sdu.edu.cn}).\protect\\
\IEEEcompsocthanksitem Qingming Huang is with the School of Computer Science and Technology, University of Chinese Academy of Sciences, Beijing 101408, China, also with the Key Laboratory of Big Data Mining and Knowledge Management (BDKM), University of Chinese Academy of Sciences, Beijing 101408, China, and also with the Key Laboratory of Intelligent Information Processing, Institute of Computing Technology, Chinese Academy of Sciences, Beijing 100190, China (e-mail: \texttt{qmhuang@ucas.ac.cn}).\protect
}
}
\markboth{Journal of \LaTeX\ Class Files,~Vol.~14, No.~8, August~2015}%
{Shell \MakeLowercase{\textit{et al.}}: Bare Demo of IEEEtran.cls for Computer Society Journals}
%



\IEEEtitleabstractindextext{%
\justifying

\begin{abstract}
  Recent advances in image-level self-supervised learning (SSL) have made significant progress, yet learning dense representations for patches remains challenging. Mainstream methods encounter an over-dispersion phenomenon that patches from the same instance/category scatter, harming downstream performance on dense tasks. This work reveals that image-level SSL avoids over-dispersion by involving implicit semantic concentration. Specifically, the non-strict spatial alignment ensures intra-instance consistency, while shared patterns, \textit{i.e.}, similar parts of within-class instances in the input space, ensure inter-image consistency. Unfortunately, these approaches are infeasible for dense SSL due to their spatial sensitivity and complicated scene-centric data. These observations motivate us to explore explicit semantic concentration for dense SSL. First, to break the strict spatial alignment, we propose to distill the patch correspondences. Facing noisy and imbalanced pseudo labels, we propose a noise-tolerant ranking loss. The core idea is extending the Average Precision (AP) loss to continuous targets, such that its decision-agnostic and adaptive focusing properties prevent the student model from being misled. Second, to discriminate the shared patterns from complicated scenes, we propose the object-aware filter to map the output space to an object-based space. Specifically, patches are represented by learnable prototypes of objects via cross-attention. Last but not least, empirical studies across various tasks soundly support the effectiveness of our method. Code is available in \url{https://github.com/KID-7391/CoTAP}.
\end{abstract}

\begin{IEEEkeywords}
  Self-supervised Learning, Ranking-based Loss, Image Segmentation
\end{IEEEkeywords}}

\maketitle
\IEEEdisplaynontitleabstractindextext
\IEEEpeerreviewmaketitle

\IEEEraisesectionheading{\section{Introduction}\label{sec:introduction}}
\IEEEPARstart{L}{earning} semantic visual representations without human annotation is a long-standing goal in the computer vision community. Recent swift progress in Self-Supervised Learning (SSL) is gradually realizing it. Ideally, visual representations should satisfy \textit{diversity} between distinct instances, maintain cross-view \textit{alignment} of the same instance, and ensure \textit{concentration} of samples within categories. For diversity, contrastive SSL methods explicitly increase the gap between negative sample pairs \cite{chen2020improved}. On the other hand, non-contrastive methods prevent model collapse via asymmetric techniques such as additional projector \cite{byol}, centering and sharpening \cite{dino}, online clustering \cite{swav}, and uniformity regularization of embedding space \cite{balestrierocontrastive}. By getting rid of the costly negative pair construction, non-contrastive SSL enjoys significant advantages in both efficiency and effectiveness, which has led to its popularity in recent advancements. Regarding alignment, mainstream methods learn the invariance of different views augmented from the same image \cite{dino,swav,wu2018unsupervised,byol}, including multiscale random cropping. These SSL methods surpass supervised ones on state-of-the-art network architectures such as Vision Transformer (ViT) \cite{dosovitskiyimage}. 

Despite their notable success in image-level tasks \cite{tomasev2022pushing}, the performance of these methods on dense tasks such as semantic segmentation and object detection still requires enhancement. The primary deficiency lies in the inadequate alignment of fine-grained representation. As an early trial, Pinheiro \etal~\cite{o2020unsupervised} propose to align representations of matched patches, significantly boosting performance in dense vision tasks. In a similar vein, Ziegler \etal~\cite{leopart} develop the self-distillation framework to learn object parts rather than whole images or instances. This approach outperforms models pretrained in a supervised manner across various image segmentation tasks.

However, unlike image-level SSL methods, existing dense SSL methods might fail to capture the global semantic consistency between different objects across images. This limitation can lead to over-segmentation in dense tasks \cite{li2022dynamic}. As seen in \Fgref{fig:over-disp-1}, the strict spatial alignment leads to performance degradation as training progresses.
We argue that \textbf{without a semantic concentration term, the diversity loss will disperse \textit{potential positive pairs, \ie, patches from different instances/images belonging to the same category}}. Consequently, as illustrated in \Fgref{fig:over-disp-2}, parts belonging to the same category or even an individual instance are misclassified into different categories. Such a phenomenon is called \textit{over-dispersion} in this work.

\begin{figure}[t]
  \centering
  \subfigure[Performance degradation in the later period caused by the lack of semantic concentration (SC).]{
    \includegraphics[scale=0.53]{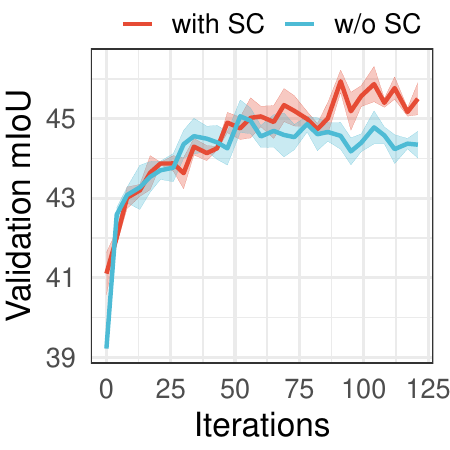}
    \hspace{2mm}
    \label{fig:over-disp-1}
    \includegraphics[scale=0.53]{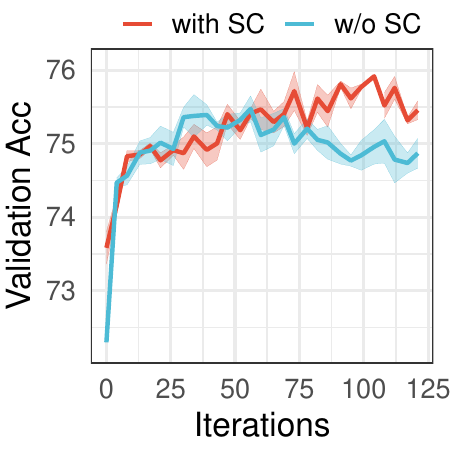}
  }
  \subfigure[Mis-segmentation caused by over-dispersed representations. Patches from the same instance (left) or category (right) are separated.]{
    \hspace{-2mm}
    \label{fig:over-disp-2}
    \includegraphics[scale=0.26]{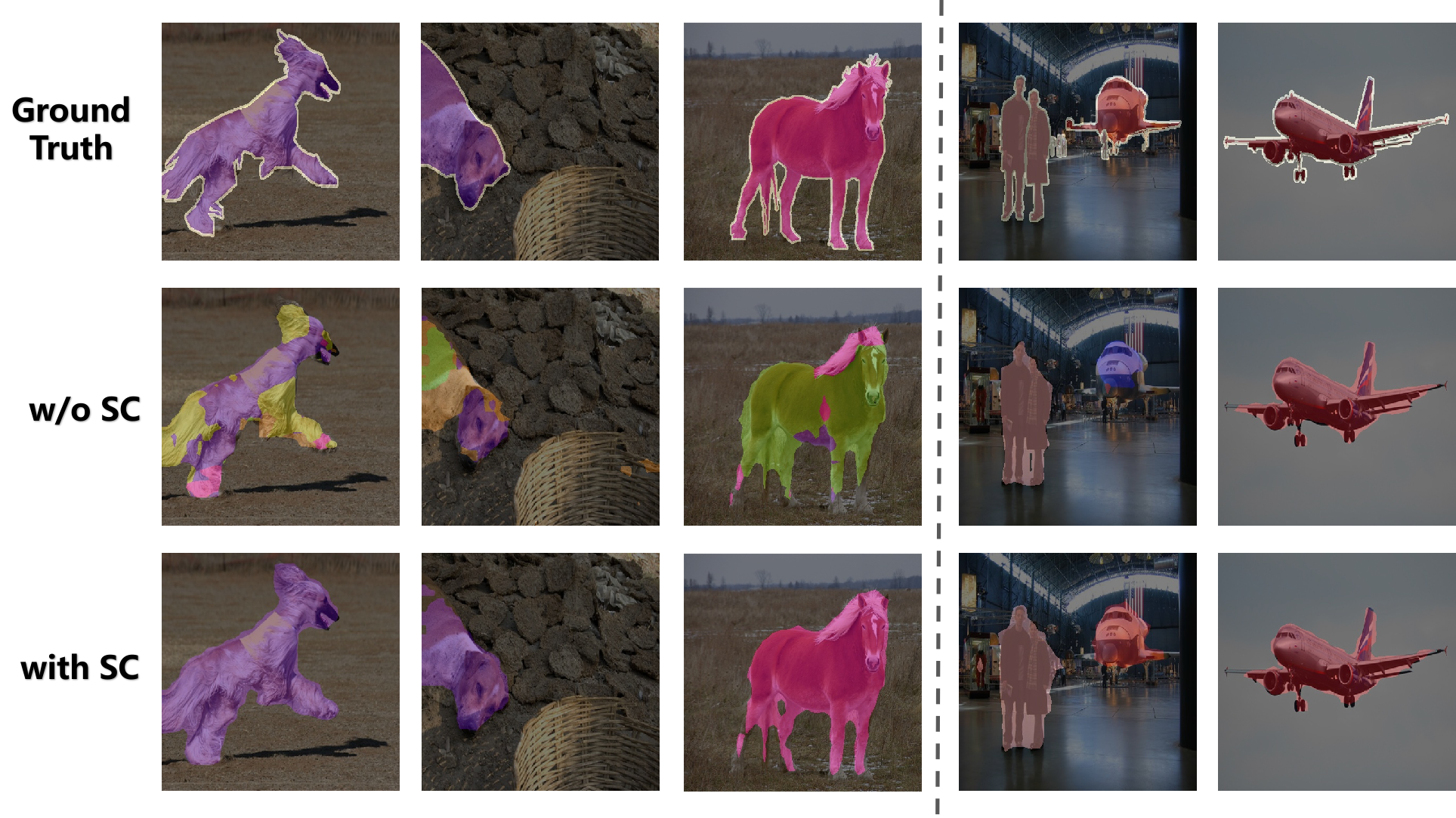}
  }
  \caption{Strict spatial alignment applied in dense SSL leads to poor semantic concentration. }
  \label{fig:demo_over_dispersion}
\end{figure}

Although some image-level SSL methods also use instance discrimination loss, they escape from the over-segmentation phenomenon. To inherit the success to dense SSL, we explore the semantic concentration mechanism of image-level SSL. By analyzing how the optimal representations influence downstream performance, we identify two crucial components: \textbf{1)} Non-strict spatial alignment via random cropping, which keeps the intra-instance consistency; \textbf{2)} Shared patterns via proper augmentations, \ie, objects from the same category exhibit similar local features in the input space, which preserves the inter-image consistency. Unfortunately, these strategies are infeasible for dense SSL. Specifically, fine-grained representation is more susceptible to spatial shifts, making semantic consistency of random crops unguaranteed, particularly in scene-centric images. Second, since scene-centric images are occupied by backgrounds, the commonly used self-attention might be suboptimal to discriminate the shared patterns from complicated scenes.

Motivated by these observations, we aim to improve the semantic concentration in dense SSL on top of the self-distillation framework \cite{dino}:

First, to complement the non-strict spatial alignment that patch discrimination losses lack, we propose to distill the patch correspondences between image pairs.
This approach mitigates over-dispersion by learning the inter-image consistency without involving complicated multi-class clustering. However, a technical challenge is that the soft pseudo labels are imbalanced, noisy, and continuous. To address this, we leverage the Average Precision (AP) loss, a noise-tolerant ranking loss with adaptive weights, to avoid the decision bias introduced by imbalanced distributions and noises. To be compatible with the continuous pseudo labels output by the target model, we extend the original AP loss that only applies to binarized labels to the \textit{\textbf{Co}ntinuous-\textbf{T}arget \textbf{A}verage \textbf{P}recision (\textbf{CoTAP})} loss.

Second, aiming to capture the shared patterns, we propose the \textit{\textbf{O}bject-\textbf{A}ware \textbf{F}ilter (\textbf{OAF})} to map the output features to an object-based space. First, we propose to learn object prototypes from the simple objects of object-centric images. Afterward, inspired by the cross-attention \cite{chen2021crossvit} which is effective in capturing cross-instance information, we treat the patch features as queries to aggregate object prototypes in the last attention block. In this way, the patches are encoded in a subspace spanned by the object prototypes, which is more effective in capturing the shared patterns by eliminating the scene interference.

The main contributions are summarized as follows:
\begin{itemize}
  \item Analytically, we reveal the practical flaw of existing dense SSL frameworks regarding semantic concentration. By analyzing the implicit concentration mechanism of image-level SSL methods, we demonstrate the necessity of developing semantic concentration techniques for dense SSL.
  \item Methodologically, we propose a self-distillation framework for semantic-concentrated dense SSL. To avoid over-dispersion, we propose to explicitly concentrate the potential positive pairs with our noise-tolerant ranking loss. To highlight the shared pattern, we propose an object-aware filter to capture the fine-grained features.
  \item Empirically, we conduct experiments on both image-level and dense downstream tasks to soundly support our conclusions. Moreover, the evaluation results on a variety of benchmarks validate the effectiveness of our proposed framework.
\end{itemize}

\section{Prior Art}\label{sec:related_work}
\subsection{Image-level Self-supervised Learning}

Self-supervised learning (SSL) aims to break through the bottleneck of limited human-annotated data by learning generalizable representations from unlabeled data. In the computer vision community, the mainstream methods encode representations at the image level. Some early work designed various proxy tasks to force the model to learn meaningful representations, \eg, inpainting \cite{pathak2016context}, jigsaw puzzles solving \cite{noroozi2016unsupervised}, and rotation prediction \cite{gidarisunsupervised}. Among them, contrastive learning \cite{chen2020improved, wu2018unsupervised}, which considers augmentations of the same image as positive pairs and others as negative, achieves remarkable success. However, these methods require a large batch size \cite{chen2020simple} or storing negative examples \cite{he2020momentum}. More recently, non-contrastive SSL methods are proposed to overcome this shortage \cite{swav, dino, grill2020bootstrap}. The core idea is only forcing representations of augmentations from the same image to be similar, while introducing asymmetric techniques to avoid trivial solutions, such as centering and sharpening \cite{dino}, additional projector \cite{byol}, and whitening loss \cite{weng2022investigation}. Motivated by the denoising autoencoder \cite{vincent2008extracting}, He \etal~\cite{he2022masked} break away from direct constraints in embedding space and propose to reconstruct masked images in the input space. At the same time, motivated by representative methods on social-aware image understanding \cite{tang2016tri,li2018deep}, another technique routine explores clustering over unlabeled data to capture cross-image information explicitly \cite{zhuang2019local, caron2019unsupervised, huang2019unsupervised}. Although these methods are surpassed by non-contrastive methods, they inspire the clustering methods on dense prediction \cite{cho2021picie, wenself}.

\subsection{Self-supervised Learning for Dense Tasks}

Despite the success of SSL on image-level downstream tasks, the improvement on dense prediction tasks such as semantic segmentation is less significant \cite{he2019rethinking,su2023flsl,zhang2020causal}. This gap raises a wave of research into self-supervised segmentation.
Similar to the image-level problem, researchers propose to learn pixel-wise representations by forcing local features to remain constant after different transformations \cite{o2020unsupervised, xie2021propagate}. However, these methods commonly rely on object-centric images. 
To overcome this deficiency, Wen \etal~\cite{wenself} propose to perform semantic grouping with learnable prototypes. To avoid overfitting to the low-level visual cues caused by clustering, Cho \etal~\cite{cho2021picie} further constrain the geometric and photometric invariance. In this way, both object-centric data and scene-centric data can be segmented without any annotations or preprocessing. Li \etal~\cite{li2022univip} further explore the correlation of scene and instances.




Recently, researchers have found that with a proper model architecture, effective dense representations can be learned even if only image-level similarity is constrained \cite{dino}. Motivated by this fact, Melas \etal~\cite{melas2022deep} propose to formulate the semantic segmentation and localization tasks as graph partitioning, and decompose an image into several objects with spectral methods. Hamilton \etal~\cite{hamiltonunsupervised} propose to add a tiny model on top of the ViT pretrained with DINO \cite{dino}, and then distill the pixel-pixel similarity into high-quality discrete semantic labels. Despite the performance improvement on fully unsupervised segmentation, the above methods fall on transfer learning since the main encoder is not updated. Consequently, Ziegler \etal~\cite{leopart} extend the image-level DINO into patch-level to learn object parts, resulting in outstanding dense representations. Inspired by the masked autoencoder, Zhou \etal~\cite{zhou2021ibot} and Liu \etal~\cite{liu2025future} propose to reconstruct the masked patches in the embedding space, which significantly improves the DINO framework on dense tasks. These techniques demonstrate substantial enhancements, particularly in dense tasks like cellular image representation \cite{dai2025exploring}.

However, in this work, we argue that the dense representations trained by such non-contrastive methods suffer from over-dispersion, leading to performance degradation on dense classification tasks. Su \etal~\cite{su2023flsl} notice a similar phenomenon and attributed it to the fact that dense tasks rely on local semantics rather than global instance-level semantics. Accordingly, they proposed bi-level clustering, which leverages the equivalence between self-attention and Mean-Shift clustering updates. By aligning dense features with cluster centers, this approach learns representations that simultaneously capture both local and global semantics. Similarly, Zhou \etal~\cite{zhou2022mugs} also propose a clustering-based method for dense SSL. This work takes a further step to reveal the critical role of non-strict alignment and shared patterns in SSL, and shows that their absence leads to excessive feature dispersion. To address this issue, we propose distilling pseudo-class labels with a noise-robust soft-label loss, thereby achieving fine-grained feature-to-feature alignment.

\subsection{Theoretical Analysis for Self-supervised Learning}
To build a unified analytical framework for both contrastive and non-contrastive SSL, recent work reveals that some non-contrastive SSL methods (\eg, BYOL \cite{byol}, SimSiam \cite{simsiam}) enjoy similar properties to contrastive methods under certain conditions. For example, both recover spectral embedding optimization problems \cite{balestrierocontrastive}, and learn features that enable uniform clustering \cite{assran2022hidden}. Garrido \etal~\cite{garrido2023on} further show the duality between contrastive and non-contrastive methods and conduct empirical studies on the performance gap. By exploring the dynamics of non-contrastive methods, Tian \etal~\cite{tian2021understanding} point out that the projector used in BYOL and SimSiam leads to aligned eigenspace w.r.t. the correlation matrices of the two branches. On top of this, Richemond \etal~\cite{richemond2023edge} show that the optimal predictor should be close to an orthogonal projection, and exponential moving average and stop gradient are efficient orthonormalization mechanisms. However, the above methods are limited to certain methods, thus infeasible for the latest methods like DINO. To fill this gap, Zhuo \etal~\cite{zhuotowards} propose a unified theory named Rank Differential Mechanism (RDM), where mainstream asymmetric techniques used in non-contrastive SSL potentially enlarge the effective rank of the online branch. The above analyses explain the properties of learned representations, but how the learned encoder generalizes to unknown data is still unclean. To fill this gap, Huang \etal~\cite{huang2022towards} point out that rich augmentations are critical to the clustering and generalization of SSL methods.

On top of the existing research, this work further investigates the mechanism differences between image-level and dense SSL in terms of semantic concentration. Specifically, the theoretical analysis reveals that the non-strict spatial alignment and shared patterns constitute the implicit semantic concentration of image-level SSL, while dense SSL lacks semantic concentration. This guides us to propose semantic concentration techniques for dense SSL. 


\section{Implicit Semantic Concentration Mechanism in Image-level SSL}

\begin{table*}
  \caption{Downstream accuracy under different types of augmentations: (a) random cropping; (b) random Gaussian blur; (c) color dropping; (d) color distortion.}
  \label{tab:effect_aug}
  \centering
  \begin{tabular}{cccc|cccc|cccc}
  \toprule
  \multicolumn{4}{c|}{Transformations} & \multicolumn{4}{c|}{CIFAR-10} &\multicolumn{4}{c}{CIFAR-100} \\
  (a) & (b) & (c) & (d) & SimCLR  & Barlow Twins  & MoCo  & SimSiam & SimCLR  & Barlow Twins  & MoCo  & SimSiam \\ 
  \midrule
  \cmark  &  \cmark  &  \cmark  &  \cmark  & \cellcolor[rgb]{0.569 0.817 0.794}88.5 & \cellcolor[rgb]{0.569 0.817 0.794}85.4 & \cellcolor[rgb]{0.561 0.813 0.790}89.7 & \cellcolor[rgb]{0.566 0.816 0.792}89.3 & \cellcolor[rgb]{1.000 0.759 0.508}55.4 & \cellcolor[rgb]{1.000 0.760 0.509}55.2 & \cellcolor[rgb]{1.000 0.754 0.497}62.5 & \cellcolor[rgb]{1.000 0.759 0.507}60.3 ~\\
  ~  &  \cmark  &  \cmark  &  \cmark & \cellcolor[rgb]{0.889 0.953 0.947}53.1 & \cellcolor[rgb]{0.925 0.968 0.964}39.6 & \cellcolor[rgb]{0.892 0.954 0.948}53.6 & \cellcolor[rgb]{0.921 0.967 0.962}36.8 & \cellcolor[rgb]{1.000 0.984 0.966}19.5 & \cellcolor[rgb]{1.000 0.957 0.913}17.7 & \cellcolor[rgb]{1.000 0.949 0.896}26.4 & \cellcolor[rgb]{1.000 0.990 0.980}8.2 ~\\
  \cmark  &  \cmark  &  \cmark  &  ~  & \cellcolor[rgb]{0.614 0.836 0.815}83.5 & \cellcolor[rgb]{0.595 0.828 0.806}82.0 & \cellcolor[rgb]{0.587 0.825 0.803}86.8 & \cellcolor[rgb]{0.592 0.827 0.805}85.4 & \cellcolor[rgb]{1.000 0.824 0.639}45.1 & \cellcolor[rgb]{1.000 0.785 0.560}50.4 & \cellcolor[rgb]{1.000 0.784 0.558}57.0 & \cellcolor[rgb]{1.000 0.799 0.588}51.4 ~\\
  ~  &  \cmark  &  \cmark  &  ~  & \cellcolor[rgb]{1.000 1.000 1.000}40.8 & \cellcolor[rgb]{0.983 0.993 0.992}32.1 & \cellcolor[rgb]{1.000 1.000 1.000}41.9 & \cellcolor[rgb]{1.000 1.000 1.000}25.2 & \cellcolor[rgb]{1.000 1.000 1.000}16.9 & \cellcolor[rgb]{1.000 1.000 1.000}9.6 & \cellcolor[rgb]{1.000 1.000 1.000}17.0 & \cellcolor[rgb]{1.000 1.000 1.000}6.0 ~\\
  \cmark  &  \cmark  &  ~  &  ~  & \cellcolor[rgb]{0.798 0.914 0.903}63.2 & \cellcolor[rgb]{0.705 0.875 0.859}67.8 & \cellcolor[rgb]{0.695 0.870 0.854}75.1 & \cellcolor[rgb]{0.742 0.890 0.877}63.3 & \cellcolor[rgb]{1.000 0.931 0.858}28.0 & \cellcolor[rgb]{1.000 0.871 0.736}34.1 & \cellcolor[rgb]{1.000 0.875 0.744}40.2 & \cellcolor[rgb]{1.000 0.910 0.816}26.3 ~\\
  ~  &  \cmark  &  ~  &  ~  & \cellcolor[rgb]{0.960 0.983 0.981}45.2 & \cellcolor[rgb]{1.000 1.000 1.000}29.9 & \cellcolor[rgb]{0.942 0.975 0.972}48.2 & \cellcolor[rgb]{0.975 0.989 0.988}28.9 & \cellcolor[rgb]{1.000 0.959 0.917}23.4 & \cellcolor[rgb]{1.000 0.998 0.996}10.0 & \cellcolor[rgb]{1.000 0.973 0.945}22.0 & \cellcolor[rgb]{1.000 0.995 0.990}7.2 ~\\
  \bottomrule
  \end{tabular}%
\end{table*}

\subsection{Intuitive Comprehension}
\label{subsec:motivations}

While image-level SSL methods primarily focus on aligning augmentations of the same image, they avoid over-dispersion and achieve category-aware clustering. To migrate the successful practice of image-level SSL, in this subsection, we take a deep dive into its underlying mechanism and explain the difference between dense SSL and image-level SSL.

According the over-dispersion phenomenon, the target of SSL could be decomposed into: \textbf{1)} Patches of an individual instance share similar representations; \textbf{2)} Instances from the same category should be consistently represented. Subsequently, we employ the illustrative example in \Fgref{fig:implicit} to show how image-level SSL methods accomplish the two objectives.

Intuitively, by aligning the randomly cropped images, parts of the same object will be concentrated. A crucial prior is that different parts of the same instance frequently co-occurr within the same image, leading to target \textbf{1)}.
As for target \textbf{2)}, we argue that the clustering effect stems from the \textit{shared patterns} across intra-class instances. Specifically, with proper augmentations, local parts of different instances could appear similar in the input space. For instance, as shown in \Fgref{fig:implicit}, the faces of two dogs differ in color and posture, yet grayscale conversion and rotation can reveal the underlying shared pattern. With a Lipschitz continuous encoder, the corresponding part representations are still similar, effectively linking different instances.

In a nutshell, the core components that facilitate the implicit semantic concentration of image-level SSL are:
\begin{itemize}
  \item \textbf{Non-strict spatial alignment} achieved by aligning random crops, which leads to consistent representations for parts of an individual instance.
  \item \textbf{Shared patterns.} With proper augmentations, objects from the same category share some similar parts in the input space, resulting in consistent representations across intra-class instances.
\end{itemize}

To support our hypothesis, following Huang \etal \cite{huang2022towards}, we assess the impact of augmentations including random cropping for various image-level SSL methods \cite{chen2020simple,zbontar2021barlow,he2020momentum,simsiam}. The results are shown in \Tbref{tab:effect_aug} (see Appendix {\color{blue}B.1} for implementation details). Accordingly, we have two observations:
\textbf{1)} The exclusion of random cropping results in a notable decline in performance across all groups, especially for CIFAR-100 with more categories. This highlights the necessity of non-strict spatial alignment in achieving semantic concentration.
\textbf{2)} In most of the settings, the downstream performance is correlated to the richness of augmentations. Adequate augmentations allow for the unveiling of shared patterns among instances within the same category, despite differences in the original appearance.

Building on the above analysis, we argue that, compared with image-level SSL, \textbf{the over-dispersion in dense SSL arises from the absence of non-strict spatial alignment and the weakening of shared patterns.} First, due to the lack of fine-grained correspondence annotations, dense SSL typically constrains representation alignment only at the exact spatial location, and fails to enforce intra-instance representation consistency through random cropping. Second, to capture multi-object relationships in complex scenes, some dense SSL methods \cite{leopart,hamiltonunsupervised} adopt scene-centric datasets like COCO instead of instance-centric datasets such as ImageNet-1k. In this case, the influence of background and distractors reduces the frequency of shared patterns. Among these two factors, missing non-strict spatial alignment plays the leading role.


\begin{figure}[t]
  \centering
  \includegraphics[scale=0.6]{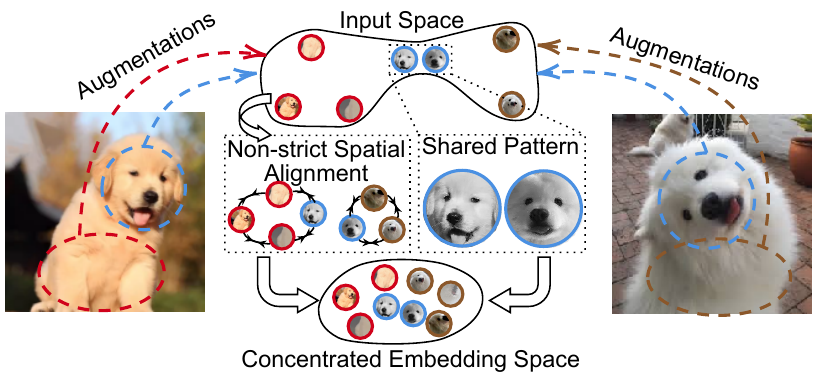}
  \caption{Implicit semantic concentration of image-level SSL.}
  \label{fig:implicit}
\end{figure}

\subsection{Problem Formulation}
\label{subsec:setting}

In addition to the intuitive explanations, the remainder of this section focuses on quantitatively discussing how implicit semantic concentration affects downstream task performance. The subsequent theoretical discussion serves merely as an additional justification of the above hypotheses and is not essential for understanding the method or experiments.
It is worth noting that the following theoretical results hold for both image-level and dense SSL. The main difference lies in the values of certain parameters, as shown in \Remref{rem:ideal}, \Remref{rem:spa-align}, and \Remref{rem:shared-pattern}.

\textbf{Notations.}
For the sake of presentation, we simplify the (dense) SSL as a classification problem with $K$ categories. For the dense problem, each patch is treated as a sample. Specifically, consider a dataset $\bar{\mathcal{D}} = \{\xbari\}_{i=1}^{\bar{N}}$, where $\xbari \in \mathcal{X}$ represents the natural images sampled from the input space $\mathcal{X}$. Typically, each natural image is augmented for $m$ times, yielding an augmented dataset $\mathcal{D} = \{\xi\}_{i=1}^{N}$, where $N = m\times \bar{N}$. Let $y: \mathcal{X} \mapsto \{1,\cdots,K\}$ be a ground-truth labelling function such that $y(\bm{x})$ correctly identifies the category of instance $\bm{x}$. To construct the adjacency matrix, denote $\wij = w(\xi, \xj) = \mathbbm{1}[y(\xi) = y(\xj)]$. Notice that neither $y$ nor $w$ is accessible during the pre-training phase. Therefore, we estimate $\wij$ with $\whatij$, \eg, $\whatij = \mathbbm{1}[\xi \textit{\text{ and }} \xj \textit{\text{ are augmented frome the same image}}]$. For all $k\in [K]$, let $\mathcal{D}^{(k)} = \{j \in [N]~|~y(\xj) = k\}$ be the index set of $k$-th category, and $N_k = |\mathcal{D}^{(k)}|$. For all $i \in [N]$, the in-class samples are divided into two subsets: $\mathcal{T}_{1}^{(i)} = \{j \in [N]~|~\wij = 1 ~\&~ \whatij = 0\}$, encompassing samples sharing the same label as $i$ but not augmented from the same image, and $\mathcal{T}_{2}^{(i)} = \{j \in [N] ~|~ \whatij = 1\}$, including samples augmented from the same image as $i$. 

Our target is to learn an embedding function $f \mapsto \mathbb{R}^d$, such that $f(\bm{x})$ provides a $d$-dimensional representation of the image/patch $\bm{x}$ on downstream tasks. We assume that all representations are normalized: $\|f(\bm{x})\| = r$. Following Huang \etal \cite{huang2022towards}, in this work we consider a simple $k$ Nearest Neighbor ($k$-NN) classifier: 
\begin{equation}
\label{eq:defi_knn_cls}
  G_f (\bm{x}) = \arg \min_{k\in [K]} \|f(\bm{x}) - \bm{\mu}_k\|,
\end{equation}
where $\bm{\mu}_k = \mathbb{E}_{\bm{x}\in \mathcal{D}, y(\bm{x}) = k}[f(\bm{x})]$. On top of this, the downstream performance is evaluated with the error rate:
\begin{equation*}
  \mathcal{E}(f) = \mathbb{P}_{\bm{x}\in \mathcal{D}}[G_f(\bm{x}) \neq y(\bm{x})].
\end{equation*}

Given the varied objective functions utilized by SSL methods, we consider a linear combination of the alignment term and the diversity term to simplify the analysis:
\begin{equation*}
  \mathcal{L}_{SSL} = \mathcal{L}_{align} + \alpha \cdot \mathcal{L}_{div},
\end{equation*}
where $\alpha$ is a hyperparameter to control the trade-off between alignment and diversity. The above formulation covers most of the mainstream SSL methods. For example, contrastive SSL methods \cite{chen2020improved,he2020momentum} use the InfoNCE loss (with only one negative sample):
\begin{equation*}
  \begin{aligned}
    &\mathcal{L}_{INCE} \\
    =& - \sum_{i,j,k=1}^N \whatij (1 - \hat{w}_{ik}) \log\left(\frac{e^{f(\bm{x}_i)^\top f(\bm{x}_j)}}{e^{f(\bm{x}_i)^\top f(\bm{x}_j)} + e^{f(\bm{x}_i)^\top f(\bm{x}_k)}}\right) \\
    =& \underbrace{- (N - m)\sum_{i,j,k=1}^N \whatij f(\bm{x}_i)^\top f(\bm{x}_j)}_{\mathcal{L}_{INCE-align}} \\
    & + \underbrace{\sum_{i,j=1}^N \whatij (1 - \hat{w}_{ik}) \log\left(e^{f(\bm{x}_i)^\top f(\bm{x}_j)} + e^{f(\bm{x}_i)^\top f(\bm{x}_k)}\right)}_{\mathcal{L}_{INCE-div}}.
  \end{aligned}
\end{equation*}
Some non-contrastive SSL methods such as Barlow Twins \cite{zbontar2021barlow} directly constrain the cross-correlation of the features:
\begin{equation*}
  \begin{aligned}
    \mathcal{L}_{BT} = \underbrace{\sum_{i=1}^d (1 - C_{ii})^2}_{\mathcal{L}_{BT-align}} + \alpha \underbrace{\sum_{i=1}^d\sum_{j\neq i} C_{ij}^2}_{\mathcal{L}_{BT-div}},
  \end{aligned}
\end{equation*}
where $C_{ij}$ is the correlation:
\begin{equation*}
  C_{ij} = \frac{1}{\bar{N} M} \sum_{k,l=1}^N \hat{w}_{kl} f_i(\bm{x}_k)f_j(\bm{x}_l).
\end{equation*}
The output is normalized such that $\frac{1}{N}\sum_{k} f_i(\bm{x}_k)^2 = 1$ for each dimension $i$. Some other methods \cite{dino,leopart} leverage an implicit diversity term, which is hard to formulate explicitly. To sum up, $\mathcal{L}_{align}$ forces $f(\bm{x}_i) \approx f(\bm{x}_j)$ if $\whatij = 1$, while $\mathcal{L}_{div}$ achieves its minimum(s) when the representations are uniformly distributed. For the simplification of analysis, in this work, we focus on the following objective:
\begin{equation*}
  \begin{aligned}
    \OP{0}~~\min_{f}~~\mathcal{L}_{SSL} &= \mathcal{L}_{align} + \alpha \mathcal{L}_{div}, \\
    \mathcal{L}_{align} &= \frac{1}{2N^2} \sum_{i,j=1}^N \whatij \|f(\bm{x}_i) - f(\bm{x}_j)\|^2, \\
    \mathcal{L}_{div} &= \frac{1}{2} \Big\|\bm{C}_f - \frac{r^2}{d}\mathbf{I}_{d}\Big\|_F^2,
  \end{aligned}
\end{equation*}
where $\bm{C}_f = \frac{1}{N} \sum_{i=1}^{N} f(\bm{x}_i)f(\bm{x}_i)^\top$ is the feature correlation matrix and $\mathbf{I}_d$ is a d-dimension identity matrix. Notice that the outputs are normalized, \ie, $\|f(\bm{x})\| = r$, thus $\mathcal{L}_{align}$ is equivalent to $\mathcal{L}_{INCE-align}$ and $\mathcal{L}_{BT-align}$ to a certain extent:
\begin{equation*}
  \begin{aligned}
    &~~~~~~~~~~~~N^2\mathcal{L}_{align} = \mathcal{L}_{INCE-align} / (N-m) + N^2r^2, \\
    &N / d \cdot \mathcal{L}_{align} + 1 - r^2/d \leq \mathcal{L}_{BT-align} \leq N \mathcal{L}_{align} + d - r^2.
  \end{aligned}
\end{equation*}
Although the diversity terms in SSL methods are various, recent research \cite{zhuo2023towards} reveals a unified theoretical understanding that most non-contrastive SSL methods reduce the effective rank \cite{roy2007effective} of $\bm{C}_f$, which achieves its minimum when $C_f = \frac{r^2}{d}\bm{I}_d$. Similarly, contrastive SSL methods prefer uniformly distributed representations, which can be effectively measured by $\mathcal{L}_{div}$.

\subsection{Theoretical Analysis}
Current objective \OP{0} has limitations in explaining the semantic concentration of existing image-level SSL methods. Intuitively, if both the capacity of $f$ and the number of samples are sufficiently large, then $f(\bm{x}_i) = f(\bm{x}_j)$ if $\whatij = 1$ and $f(\bm{x}_i)^\top f(\bm{x}_j) \approx 0$ otherwise (see Rem. \ref{rem:case_fail} for formal discussions). This solution violates the practical observation \cite{he2020momentum, dino, hamiltonunsupervised} that $f(\bm{x}_i)$ is closed to $f(\bm{x}_j)$ if $\bm{x}_i$ and $\bm{x}_j$ are semantically similar even if $\whatij = 0$. Therefore, to reflect the shared pattern mechanism, the encoder $f$ should be Lipschitz-continuous, \ie, insensitive to a small shift in the input space:

\begin{asm}[\textbf{Lipschitz Continous Encoder}]
\label{asm:lip_cont}
  Given any two samples $\bm{x},\bm{x}'\in \mathcal{X}$, we have $\|f(\bm{x}) - f(\bm{x}')\| \leq \phi_f \|\bm{x} - \bm{x}'\|$, where $\phi_f$ is a finite constant.
\end{asm}
The above assumption brings constrains on similar sample pairs: $\|f(\bm{x}_i) - f(\bm{x}_j)\| \leq \phi_f\|\xi - \xj\|$, or equivalently:
\begin{equation*}
  f(\bm{x}_i)^\top f(\bm{x}_j) \geq r^2 - \frac{\phi_f^2\|\xi - \xj\|^2}{2}.
\end{equation*}
For the sake of presentation, denote the representation matrix as $\bm{Z} \in \mathbb{R}^{d\times N}$, where the $i$-th column of $\bm{Z}$ is $\zi = f(\bm{x}_i)$, then \OP{0} under the above constraints can be reformulated as:
\begin{equation*}
  \begin{aligned}
    &~~~~\OP{1}~~\min_{\bm{Z}}~~\mathcal{L}_{SSL} = \mathcal{L}_{align} + \alpha \mathcal{L}_{div}, \\
    &\mathcal{L}_{align} = \frac{1}{2N^2} \sum_{i,j=1}^N \whatij \|\zi - \zj\|^2, \\
    &~~\mathcal{L}_{div} = \frac{1}{2} \Big\|\bm{C}_f - \frac{r^2}{d}\mathbf{I}_{d}\Big\|_F^2 = \frac{1}{2} \Big\|\frac{1}{N}\sum_{i=1}^N \zi\zi^\top - \frac{r^2}{d}\mathbf{I}_{d}\Big\|_F^2, \\
    &s.t. ~~~~\zi^\top\zj \geq r^2 - \frac{\phi_f^2\|\xi - \xj\|^2}{2}, ~~\forall i,j \in [N].
  \end{aligned}
\end{equation*}
In the rest of this subsection, we explore how the optimal solutions of \OP{1} perform in the downstream task. Proofs are attached in  Appendix {\color{blue}A}. Intuitively, with a sufficiently large embedding space, $\mathcal{L}_{align}$ shrinks the cross-view distance, while $\mathcal{L}_{div}$ prefers irrelevant representations that push away the inter-class samples. As for the intra-class distance, it is controlled by the constraints. This motivates us to consider the intra-class similarities in the input space:
\begin{asm}
\label{asm:in-of-class-dist}
  There exist two constants $-r^2 \leq d_T \leq r^2$, $0 \leq q_T \leq 1$ such that $\mathbb{P}_{j\in \mathcal{T}_2^{(i)}, l\in \mathcal{T}_1^{(j)}}[\|\xj - \bm{x}_l\|^2 \geq \frac{2(r^2 - d_T)}{\phi_f^2}] \leq q_T$ for any sample $\xi \in \mathcal{D}$.
\end{asm}
\begin{rem}
  \Asmpref{asm:in-of-class-dist} always holds for any data distribution if $d_T \rightarrow -r^2$ or $q_T \rightarrow 1$, and $q_T$ is monotonically increasing to one as $d_T$ increases. From the perspective of semantic concentration, data with well-defined patterns should enjoy a high $q_T$ even though $d_T$ is close to $r^2$. 
\end{rem}

To begin with, we provide a sufficient condition that an augmented sample $\bm{x}_i \in \mathcal{D}$ can be correctly classified by the $k$-NN classifier (defined in \Eqref{eq:defi_knn_cls}):

\begin{restatable}{lem}{ClsCorrect}
\label{lem:condition_cls_correct}
  For any sample $\xi \in \mathcal{D}$, it can be correctly classified, \ie, $G_f(\xi) = y(\xi)$ if 
  \begin{equation*}
    \begin{aligned}
      &\underbrace{\mathbb{E}_{j\in \mathcal{T}_2^{(i)}, l\in \mathcal{D}^{(y(\xi))}}[\bm{z}_j^\top \bm{z}_l]}_{\text{intra-class similarity}}
      - \underbrace{\mathbb{E}_{j\in \mathcal{T}_2^{(i)}, l\in \mathcal{D}^{(k)}}[\bm{z}_j^\top \bm{z}_l]}_{\text{cross-view similarity}} \\
      &\quad\quad\quad\quad\quad\quad\geq 4r \underbrace{\big\|\zi - \mathbb{E}_{j\in \mathcal{T}_2^{(i)}}[\zj]\big\|}_{\text{inter distance}}, ~~\forall k\in [K] \setminus \{y(\xi)\}.
    \end{aligned}
  \end{equation*}
\end{restatable}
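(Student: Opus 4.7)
The plan is to convert the Euclidean-distance classifier into an inner-product inequality, and then propagate the hypothesis (which concerns the cross-view mean $\bar{\bm{z}}_i := \mathbb{E}_{j \in \mathcal{T}_2^{(i)}}[\zj]$) to $\zi$ via the splitting $\zi = \bar{\bm{z}}_i + (\zi - \bar{\bm{z}}_i)$ together with Cauchy--Schwarz. By linearity of expectation the hypothesis is nothing but
\[
\bar{\bm{z}}_i^\top (\bm{\mu}_{y(\xi)} - \bm{\mu}_k) \geq 4r\,\|\zi - \bar{\bm{z}}_i\|, \qquad \forall k \neq y(\xi),
\]
since $\bm{\mu}_k = \mathbb{E}_{l\in \mathcal{D}^{(k)}}[\bm{z}_l]$. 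Expanding $\|\zi - \bm{\mu}_k\|^2 - \|\zi - \bm{\mu}_{y(\xi)}\|^2 \geq 0$ under $\|\zi\|=r$ shows that the classification target is equivalent to the inner-product inequality
\[
2\zi^\top(\bm{\mu}_{y(\xi)} - \bm{\mu}_k) \geq \|\bm{\mu}_{y(\xi)}\|^2 - \|\bm{\mu}_k\|^2.
\]

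The next step is to lower-bound the left-hand side by inserting the decomposition $\zi = \bar{\bm{z}}_i + (\zi - \bar{\bm{z}}_i)$ and applying Cauchy--Schwarz to the residual. Jensen's inequality gives $\|\bm{\mu}_k\| \leq \mathbb{E}[\|\bm{z}_l\|] = r$, so $\|\bm{\mu}_{y(\xi)} - \bm{\mu}_k\| \leq 2r$; combining this with the hypothesis yields
\[
2\zi^\top(\bm{\mu}_{y(\xi)} - \bm{\mu}_k) \geq 2\bar{\bm{z}}_i^\top(\bm{\mu}_{y(\xi)} - \bm{\mu}_k) - 4r\|\zi - \bar{\bm{z}}_i\| \geq 4r\|\zi - \bar{\bm{z}}_i\|.
\]

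The main obstacle is the residual $\|\bm{\mu}_{y(\xi)}\|^2 - \|\bm{\mu}_k\|^2$ on the right of the reformulated goal, since the hypothesis only controls an inner product while this term is a difference of squared norms. I would close this gap via the algebraic identity $\|\bm{\mu}_{y(\xi)}\|^2 - \|\bm{\mu}_k\|^2 = (\bm{\mu}_{y(\xi)} + \bm{\mu}_k)^\top(\bm{\mu}_{y(\xi)} - \bm{\mu}_k)$ and then re-center $\bm{\mu}_{y(\xi)} + \bm{\mu}_k$ around $2\bar{\bm{z}}_i$, so that the uncontrolled direction becomes another Cauchy--Schwarz residual of order $\|\zi - \bar{\bm{z}}_i\|$, bounded again by $\|\bm{\mu}_k\|,\|\bm{\mu}_{y(\xi)}\|\leq r$. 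Carefully tracking constants through this double bookkeeping is precisely what forces the factor $4r$ on the right-hand side of the hypothesis and, once inserted back into the reformulated goal, completes the proof.
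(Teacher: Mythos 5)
Your Steps 1 and 2 are sound and parallel the paper's opening moves: both arguments pivot around the cross-view mean $\bar{\bm{z}}_i := \mathbb{E}_{j\in\mathcal{T}_2^{(i)}}[\zj]$, you correctly translate the hypothesis into $\bar{\bm{z}}_i^\top(\bm{\mu}_{y(\xi)}-\bm{\mu}_k)\ge 4r\|\zi-\bar{\bm{z}}_i\|$, and your Cauchy--Schwarz bound $2\zi^\top(\bm{\mu}_{y(\xi)}-\bm{\mu}_k)\ge 4r\|\zi-\bar{\bm{z}}_i\|$ is correct. The closing step, however, has a genuine gap. You assert that re-centering $\bm{\mu}_{y(\xi)}+\bm{\mu}_k$ around $2\bar{\bm{z}}_i$ makes ``the uncontrolled direction \ldots another Cauchy--Schwarz residual of order $\|\zi-\bar{\bm{z}}_i\|$.'' It is not. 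The vector $2\bar{\bm{z}}_i-\bm{\mu}_{y(\xi)}-\bm{\mu}_k$ is a fixed combination of class/augmentation means whose norm is controlled only by $\|\bar{\bm{z}}_i\|,\|\bm{\mu}_{y(\xi)}\|,\|\bm{\mu}_k\|\le r$ (hence $\le 4r$), and it is completely decoupled from the alignment error $\|\zi-\bar{\bm{z}}_i\|$. After your reformulation the obligation is $2\zi^\top(\bm{\mu}_{y(\xi)}-\bm{\mu}_k)\ge\|\bm{\mu}_{y(\xi)}\|^2-\|\bm{\mu}_k\|^2$, and what you have established is a lower bound of $4r\|\zi-\bar{\bm{z}}_i\|$; the latter can be arbitrarily close to zero while $\|\bm{\mu}_{y(\xi)}\|^2-\|\bm{\mu}_k\|^2$ stays of order $r^2$, so the chain does not close, no matter how carefully constants are tracked.

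The paper's proof avoids this by never isolating the $\|\bm{\mu}\|^2$ terms as a separate obligation. It lower-bounds $\|\zi-\bm{\mu}_k\|^2$ by $(1-\beta)\|\zi-\bar{\bm{z}}_i\|^2+(1-1/\beta)\|\bar{\bm{z}}_i-\bm{\mu}_k\|^2$ via Young's inequality with a tunable $\beta$, and upper-bounds $\|\zi-\bm{\mu}_{y(\xi)}\|^2$ symmetrically with parameter $\alpha$. The expansion $\|\bar{\bm{z}}_i-\bm{\mu}_k\|^2=\|\bar{\bm{z}}_i\|^2+\|\bm{\mu}_k\|^2-2e_k$ with $\|\bar{\bm{z}}_i\|^2+\|\bm{\mu}_k\|^2\le 2r^2$ lets the $\|\bm{\mu}_k\|^2$ term be absorbed into a universal $r^2$ bound, with the sign of the Young multiplier arranged so the inequality points the right way. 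Optimizing $\alpha,\beta$ by AM--GM, invoking the hypothesis, and finally $e_k=\bar{\bm{z}}_i^\top\bm{\mu}_k\ge -r^2$ yields $\|\zi-\bm{\mu}_k\|^2-\|\zi-\bm{\mu}_{y(\xi)}\|^2\ge 2\big(4r-2\sqrt{2(r^2-e_k)}\big)\|\zi-\bar{\bm{z}}_i\|\ge 0$. The free Young parameters are precisely what your single Cauchy--Schwarz step lacks: without them there is no mechanism to trade the squared-norm difference against the cross terms, and the residual you hoped to control never becomes small.
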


\begin{rem}
  \Lemref{lem:condition_cls_correct} highlights that accuracy is affected by three factors: cross-view (alignment), inter-class similarity (uniformity), and intra-class similarity (semantic concentration). Huang \etal \cite{huang2022towards} points out that rich data augmentations lead to sharper concentration, leading to a better generalization. However, we argue that it might simultaneously impact the other factors. Therefore, this work focuses on the trade-off between these factors and delves into the mechanism of implicit semantic concentration.
\end{rem}

Next, we explore how to achieve the above condition through SSL. The following result shows the trade-off between similarities of different sample pairs:
\begin{restatable}{lem}{SumDelta}
\label{lem:sum_delta}
  For any $i \in [N]$, $k \in [K]\setminus \{y(\xi)\}$, consider the following terms reflecting the representation similarities:
  \begin{equation*}
    \begin{aligned}
      \Delta^{(i)}_1 &= \sum_{j\in \mathcal{T}_1^{(i)}}\big(r^4 - (\sij)^2\big), \\
      \Delta^{(i)}_2 &= \sum_{j\in \mathcal{T}_2^{(i)}}\Big(\big(r^2 - \frac{1}{4\alpha}\big)^2 - \big(\sij - \frac{1}{4\alpha}\big)^2\Big), \\
      \Delta^{(i,k)} &= \sum_{j\in \mathcal{D}^{(k)}}\big(r^4 - (\sij)^2\big).
    \end{aligned}
  \end{equation*}
  The following equality holds when $\bm{Z}$ achieves the optimum of \OP{1}:
  \begin{equation*}
    \Delta^{(i)}_1 + \Delta^{(i)}_2 + \sum_{k\in [K]\setminus \{y(\xi)\}}\Delta^{(i,k)} \geq \big(1 - \frac{1}{d}\big)N r^4.
  \end{equation*}
\end{restatable}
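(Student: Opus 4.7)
The plan is to reduce the LHS to a clean expression involving the quadratic $\zi^\top\bm{C}_f\zi$ and the local alignment mass, and then to close the argument by combining the first-order optimality of \OP{1} with the spectral bound $\|\bm{C}_f\|_F^2\geq r^4/d$. My first move would be to observe that $\mathcal{T}_1^{(i)}$, $\mathcal{T}_2^{(i)}$, and $\{\mathcal{D}^{(k)}\}_{k\neq y(\xi)}$ partition $[N]$, so the three summations merge index-wise. Factoring each $\Delta_2^{(i)}$ summand as
\[
\Big(r^2-\tfrac{1}{4\alpha}\Big)^2-\Big(\sij-\tfrac{1}{4\alpha}\Big)^2 \;=\; \big(r^4-\sij^2\big)-\tfrac{1}{2\alpha}(r^2-\sij),
\]
makes plain that every $j\in[N]$ contributes an $r^4-\sij^2$ term, while pairs with $\whatij=1$ carry an additional $-\tfrac{1}{2\alpha}(r^2-\sij)$. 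Since $\sum_{j=1}^N \sij^2 = N\,\zi^\top\bm{C}_f\zi$, the full LHS collapses to $Nr^4 - N\,\zi^\top\bm{C}_f\zi - \tfrac{1}{2\alpha}\sum_{j}\whatij(r^2-\sij)$, so the claim becomes the scalar bound $N\,\zi^\top\bm{C}_f\zi + \tfrac{1}{2\alpha}\sum_{j}\whatij(r^2-\sij)\leq \tfrac{Nr^4}{d}$.

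Next I would invoke the optimality conditions for \OP{1}. Writing the Lagrangian with sphere multipliers $\lambda_i$ and Lipschitz multipliers $\gamma_{ij}\geq 0$ (one per constraint $\sij\geq r^2-\tfrac{\phi_f^2\|\xi-\xj\|^2}{2}$), stationarity in $\zi$ reads $-\sum_j\whatij z_j + \alpha N\,\bm{C}_f\zi - N^2\sum_j\gamma_{ij}z_j = \lambda_i N^2\,\zi$. Taking an inner product with $\zi$ and applying complementary slackness produces a scalar identity that eliminates $\sum_j\whatij\sij$ in favour of $\zi^\top\bm{C}_f\zi$ and a non-negative dual term. The other ingredient is Cauchy--Schwarz on the eigenvalues of $\bm{C}_f$: since $\Tr(\bm{C}_f)=r^2$ and $\bm{C}_f\succeq 0$, one has $\|\bm{C}_f\|_F^2\geq r^4/d$, with equality iff $\bm{C}_f=(r^2/d)\mathbf{I}_d$. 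Substituting the stationarity relation into the reduced LHS and invoking this Frobenius bound then yields the target $(1-1/d)Nr^4$ threshold.

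The main obstacle will be upgrading the bound from an average statement to a \emph{per-index} statement. Summing the reduced LHS over $i$ gives the clean expression $(1-\tfrac{1}{d})N^2 r^4 - 2N^2\mathcal{L}_{div} - \tfrac{N^2}{2\alpha}\mathcal{L}_{align}$, so the aggregate inequality is immediate from the non-negativity of $\mathcal{L}_{div}$ and $\mathcal{L}_{align}$; refining this to each individual $i$ requires careful tracking of how the Lipschitz-active pairs contribute. One must check that the non-negative multipliers $\gamma_{ij}$ combined with the vanishing slack on the active Lipschitz constraints tighten rather than loosen the per-$i$ bound, and possibly isolate degenerate regimes (e.g.\ $\bar{N}<d$) where the ideal configuration $\bm{C}_f=(r^2/d)\mathbf{I}_d$ is unattainable. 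Once this sign bookkeeping is settled, plugging the stationarity identity into the reduced LHS and invoking the Frobenius bound delivers the desired inequality.
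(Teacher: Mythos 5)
Your algebraic reduction in the first paragraph is correct and in fact mirrors the decomposition the paper exploits: expanding the $\Delta_2^{(i)}$ summand as $(r^4-\sij^2)-\tfrac{1}{2\alpha}(r^2-\sij)$ and using $\sum_j \sij^2 = N\,\zi^\top\bm{C}_f\zi$ gives exactly the reduction you state, so the target is equivalent to $N\,\zi^\top\bm{C}_f\zi + \tfrac{1}{2\alpha}\sum_j\whatij(r^2-\sij)\leq \tfrac{Nr^4}{d}$.

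From there, however, the argument goes off the rails on a sign. Your claim that ``the aggregate inequality is immediate from the non-negativity of $\mathcal{L}_{div}$ and $\mathcal{L}_{align}$'' is backwards: since the aggregate LHS equals $(1-\tfrac{1}{d})N^2 r^4 - 2N^2\mathcal{L}_{div} - \tfrac{N^2}{2\alpha}\mathcal{L}_{align}$, non-negativity of the two losses gives $\sum_i(\text{LHS}_i)\leq (1-\tfrac{1}{d})N^2r^4$, i.e.\ the reverse of what you want. The lemma is not a consequence of ``loss $\geq 0$''; it is a first-order optimality fact, and the whole content of the argument lies in showing that stationarity forces those loss terms to actually vanish (or be absorbed into a non-negative Lagrange-multiplier remainder). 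Similarly, the Frobenius bound $\|\bm{C}_f\|_F^2\geq r^4/d$ that you plan to invoke cuts the wrong way: what is required is an upper bound on $\zi^\top\bm{C}_f\zi$, and that upper bound comes only from the stationarity identity, not from a lower bound on $\|\bm{C}_f\|_F^2$.

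There is also no ``upgrade from average to per-index'' step in the correct proof, so the obstacle you flag is a red herring. The paper writes the KKT stationarity column equation for $\zi$ (namely $-\bm{Z}\bm{L} = 2\alpha N(\bm{C}_f-\tfrac{r^2}{d}\mathbf{I})\bm{Z}$ with $\bm{L}$ built from $a_{ij}=\whatij+\lambda_{ij}\vij$), takes the inner product of its $i$-th column with $\zi$, and uses complementary slackness to identify $\lambda_{ij}\vij(r^2-\sij)=\lambda_{ij}\geq 0$; this yields a per-$i$ identity whose remainder is manifestly non-negative, giving the inequality directly for each $i$. Note also that when you introduce a separate sphere multiplier $\lambda_i$ and take the inner product with $\zi$, you create a $\lambda_i r^2$ term whose sign is uncontrolled, which is exactly what the paper's formulation avoids. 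To repair your argument you would need to (i) drop the Frobenius-bound step, (ii) reverse the role of stationarity so that it supplies the upper bound on $\zi^\top\bm{C}_f\zi+\tfrac{1}{2\alpha}\sum_j\whatij(r^2-\sij)$, and (iii) carry out the per-$i$ inner-product computation with the Lipschitz multipliers $\lambda_{ij}$ and complementary slackness rather than appealing to any averaged statement.
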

\begin{rem}
  It can be verified that when $\alpha \leq 1 / (4r^2)$, both $\Delta^{(i)}_1$ and $\Delta^{(i,k)}$ are non-negative and monotonically decreasing \wrt~ $|\sij|$, while $\Delta^{(i)}_2$ is non-positive and monotonically increasing. This lemma indicates that the cross-view similarity is positively correlated with the other two terms. Hence, without \Asmpref{asm:in-of-class-dist}, the alignment operation will sacrifice the semantic concentration.
\end{rem}
\begin{rem}
\label{rem:case_fail}
  The equation holds for \OP{0}. Consider $\phi_f \rightarrow \infty$ and $d \rightarrow \infty$, we get $\Delta^{(i)}_1 = \Delta^{(i,k)} = r^2$, $\Delta^{(i)}_2 = 0$, or equivalently $\sij = \whatij r^2$. In this case, the encoder fails to capture the semantic concentration. 
\end{rem}

\begin{thm}[Informal]
\label{thm:main}
  Let \Asmpref{asm:in-of-class-dist} hold and assume that all categories have $n = \bar{N} / K$ samples. Consider a proper feature dimension such that $K / 2 \leq d \leq K$. Denote $p_T = \sup_{i\in [N]}\Prob_{j,l \in \mathcal{T}_2^{(i)}}\left[\sjl \leq \delta_{T} \right]$. By minimizing $\mathcal{L}_{SSL}$ and 
  setting $\delta_T \geq d_T \geq 10r^2/13$, there exists positive scales $C_1 \leq (3d_T + \delta_T - r^2) / 2, C_2 > 0, C_3 \geq 2r^2$, such that for all $q_T, p_T$ with $C_1 > C_2 \sqrt{p_T} + C_3 \sqrt{q_T}$, the error rate is upper bounded:
  \begin{equation*}
    \mathcal{E}(f) = O\left(\frac{r^2 - \delta_T + r^2 \sqrt{p_T}}{(C_1 - C_2 \sqrt{p_T} - C_3 \sqrt{q_T})^2}\right).
  \end{equation*}
\end{thm}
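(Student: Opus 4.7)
The approach is to show that the sufficient condition in \Lemref{lem:condition_cls_correct} holds with high probability over a uniformly random $\xi \in \mathcal{D}$, and to convert the failure probability into the claimed error-rate bound via a Markov/Chebyshev argument. For fixed $\xi$, abbreviate the three quantities in \Lemref{lem:condition_cls_correct} as $A_i$ (intra-class similarity), $B_{i,k}$ (cross-class similarity to class $k$), and $C_i = \|\zi - \mathbb{E}_{j\in \mathcal{T}_2^{(i)}}[\zj]\|$ (inter distance), so that correct classification follows from $A_i - \max_{k \neq y(\xi)} B_{i,k} \geq 4 r C_i$. I would separately lower-bound $A_i$, upper-bound $\max_k B_{i,k}$, and upper-bound $C_i$.

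First I would derive the \textbf{lower bound on $A_i$.} Decompose $\mathcal{D}^{(y(\xi))}$ into $\mathcal{T}_2^{(i)}$ and $\mathcal{T}_1^{(i)}$ (the singleton $\{i\}$ is negligible). For pairs in $\mathcal{T}_2^{(i)} \times \mathcal{T}_2^{(i)}$, the definition of $p_T$ directly gives $\sjl \geq \delta_T$ with probability $\geq 1 - p_T$. For $\mathcal{T}_2^{(i)} \times \mathcal{T}_1^{(j)}$, \Asmpref{asm:lip_cont} combined with \Asmpref{asm:in-of-class-dist} yields $\|\zj - \zl\|^2 \leq \phi_f^2 \|\xj - \bm{x}_l\|^2 \leq 2(r^2 - d_T)$, hence $\sjl \geq d_T$, with probability $\geq 1 - q_T$. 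Using the trivial bound $\sjl \geq -r^2$ on the complementary events produces a lower bound on $A_i$ that is linear in $\delta_T, d_T$, with error terms proportional to $p_T, q_T$.

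Next I would \textbf{upper-bound $\max_{k \neq y(\xi)} B_{i,k}$ via \Lemref{lem:sum_delta}.} Rearranging gives $\sum_{k \neq y(\xi)} \Delta^{(i,k)} \geq (1 - 1/d)\,N r^4 - \Delta^{(i)}_1 - \Delta^{(i)}_2$. Since each $\Delta^{(i,k)} \leq n r^4$, there are $K-1$ terms, and $K/2 \leq d \leq K$ forces $(1 - 1/d)\,N \geq (K-2) n$, so on average each $\Delta^{(i,k)}$ is close to $n r^4$. Lower-bounding $\Delta^{(i)}_1$ using the Lipschitz translation applied to $\mathcal{T}_1^{(i)}$ (at the cost of an additive $q_T$ slack) and observing that $\Delta^{(i)}_2$ is negative but controlled by the alignment term, Cauchy–Schwarz on $\sum_{j \in \mathcal{D}^{(k)}} \sij^2 = n r^4 - \Delta^{(i,k)}$ (averaged over $i' \in \mathcal{T}_2^{(i)}$ to recover $B_{i,k}$) yields an upper bound of the form $B_{i,k} \leq $ small constant $+ \, O(\sqrt{p_T}) + O(\sqrt{q_T})$. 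Combined with Step 1, this pins down the constants $C_1, C_2, C_3$; in particular the hypothesis $d_T \geq 10 r^2 / 13$ is exactly what is needed to guarantee $C_1 \leq (3 d_T + \delta_T - r^2)/2$ is strictly positive under the stated slack condition.

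Finally I would \textbf{upper-bound $C_i$ and close the Markov argument.} Expanding $C_i^2 = r^2 - 2\,\mathbb{E}_{j \in \mathcal{T}_2^{(i)}}[\sij] + \mathbb{E}_{j, l \in \mathcal{T}_2^{(i)}}[\sjl]$ and applying the $p_T$-based lower bound on in-family similarities yields $\mathbb{E}[C_i^2] = O(r^2 - \delta_T + r^2 p_T)$. The failure event from \Lemref{lem:condition_cls_correct} is contained in $\{C_i > (C_1 - C_2 \sqrt{p_T} - C_3 \sqrt{q_T})/(4 r)\}$, and Markov's inequality on $C_i^2$ (via $\mathbb{E}[C_i] \leq \sqrt{\mathbb{E}[C_i^2]}$, which introduces the $\sqrt{p_T}$ in the numerator) yields the advertised rate. \textbf{The main obstacle} I expect is Step 2: the optimality identity in \Lemref{lem:sum_delta} constrains per-$i$ sums $\Delta^{(i,k)}$ of \emph{squared} similarities, whereas $B_{i,k}$ is a per-pair average of \emph{signed} similarities over $\mathcal{T}_2^{(i)} \times \mathcal{D}^{(k)}$; bridging these requires a careful double average over $i' \in \mathcal{T}_2^{(i)}$ together with explicit tracking of $\Delta^{(i)}_1$ and $\Delta^{(i)}_2$ so that the three constants $C_1, C_2, C_3$ and the threshold $10 r^2/13$ come out with matching signs.
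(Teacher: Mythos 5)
Your plan tracks the paper's proof in outline — invoke \Lemref{lem:condition_cls_correct}, lower-bound the intra-class similarity, upper-bound the cross-class similarity via \Lemref{lem:sum_delta} together with a second-moment-to-mean conversion, and close with a Markov-type argument on the inter-distance $\|\zi - \mathbb{E}_{j\in\mathcal{T}_2^{(i)}}[\zj]\|$ — and your Step~2 is essentially the paper's \Lemref{lem:bounds_of_expt}(c); the $\sqrt{p_T},\sqrt{q_T}$ genuinely enter there because the optimality identity controls sums of squares and you must pass through Jensen. (Your phrase ``lower-bounding $\Delta^{(i)}_1$'' should read ``upper-bounding'': to lower-bound $\Delta^{(i,k)}$, and hence upper-bound $\sum_{j\in\mathcal{D}^{(k)}}(\sij)^2$, you need upper bounds on $\Delta^{(i)}_1$ and $\Delta^{(i)}_2$.) Where you genuinely diverge from the paper is in Steps~1 and~3. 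In Step~1 you condition directly on the $p_T$- and $q_T$-events and fall back on $\sjl\geq -r^2$ on their complements, giving an intra-class bound whose error is \emph{linear} in $p_T,q_T$; the paper instead routes this term through the second-moment quantities $\Delta^{(i)}_1,\Delta^{(i)}_2$ and a variance estimate (\Lemref{lem:var_bnd}), which is how $\sqrt{q_T}$ and $\sqrt{p_T}$ appear already in \Lemref{lem:bounds_of_expt}(a),(b). Your direct route is simpler and strictly tighter, and since $p_T\leq\sqrt{p_T}$ for $p_T\in[0,1]$ it still yields the stated $O$-form (with a larger clean part of $C_1$ and smaller coefficients on the error terms). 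Step~3, however, misattributes the numerator's $\sqrt{p_T}$: a straight Markov inequality on the squared inter-distance combined with your linear lower bound on $\mathbb{E}_{j,l\in\mathcal{T}_2^{(i)}}[\sjl]$ yields a numerator of order $r^2-\delta_T+r^2 p_T$, with \emph{no} square root, while the Jensen step $\mathbb{E}[C_i]\leq\sqrt{\mathbb{E}[C_i^2]}$ you cite would instead produce $\sqrt{r^2-\delta_T+O(p_T)}$, which is not the theorem's form either. In the paper the $r^2\sqrt{p_T}$ in the numerator comes from \Lemref{lem:bounds_of_expt}(b) — a second-moment lower bound on $\mathbb{E}_{j,l\in\mathcal{T}_2^{(i)}}[\sjl]$ — paired with the observation that $\|\zj-\mathbb{E}_{l\in\mathcal{T}_2^{(j)}}[\zl]\|\geq\sqrt{2}a$ forces $\mathbb{E}_l[\sjl]\leq r^2-a^2$. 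So your conclusion holds, and is in fact slightly sharper, but the rationale you offer for the $\sqrt{p_T}$ in the numerator would mislead a reader.
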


\begin{rem}[Ideal encoder]
\label{rem:ideal}
  Given fixed $d_T, \delta_T$, the error rate $\mathcal{E}(f)$ decreases to $O\big((r^2 - \delta_T) / C_1^2\big)$ as $q_T,p_T \rightarrow 0$, \ie, indicating that intra-class samples are almost certainly well-aligned. Consider a special case that $d_T, \delta_T \rightarrow r^2$ and $q_T, p_T \rightarrow 0$, we achieve an optimal classifier with $\mathcal{E}(f) \rightarrow 0$.
\end{rem}

\begin{rem}[Role of non-strict spatial alignment]
\label{rem:spa-align}
  In contrast, consider a bad case where augmented views are poorly aligned, \ie, $p_T \rightarrow 1$ for a small $\delta_T < r^2$. In this case, the upper bound is larger than $O\big((2r^2 - \delta_T) / (C_1 - C_2)^2\big)$, leading to a weak classifier.
\end{rem}

\begin{rem}[Role of shared pattern]
\label{rem:shared-pattern}
  Another failure case is that the intra-class samples are dispersed in the input space, \ie, $q_T \rightarrow 1$ with a small $d_T$. In this situation, \OP{1} degenerates into an unconstrained problem, and $C_1 - C_2 \sqrt{p_T} - C_3 \sqrt{q_T} \leq 3r^2/2 - 2 \sqrt{q_T} r^2 $ tends to zero or even a negative number, making the error rate unbounded.
\end{rem}

\begin{rem}
  The condition $K / 2 \leq d \leq K$ ensures the embedding space has a proper capacity, allowing representations of irrelevant classes to be approximately orthogonal with a sufficient capacity. On the other hand, an over-complicated embedding space might lead to the curse of dimensionality.
\end{rem}

In summary, the mechanism behind implicit semantic concentration is two-fold:
\begin{itemize}
  \item[\textbf{a)}] The shared patterns depend on the part similarities in the input space, described by $(d_T, q_T)$. With a large $d_T$ and a small $q_T$, the encoder groups instances into category-aware clusters via the correlation between local pattern similarity and category, thereby \textbf{instance-discriminative losses can produce semantic-aware embeddings across images}.
  \item[\textbf{b)}] Non-strict spatial alignment depends on $(\delta_T, p_T)$ determined by the augmentations of difference views. With a large $\delta_T$ and a small $p_T$, \textbf{it avoids over-dispersion by connecting parts of an object cropped from the same images, even if their appearances exhibit wide disparities.}
\end{itemize}

\begin{figure*}[t]
  \centering
  \vspace{-2mm}
  \includegraphics[scale=1]{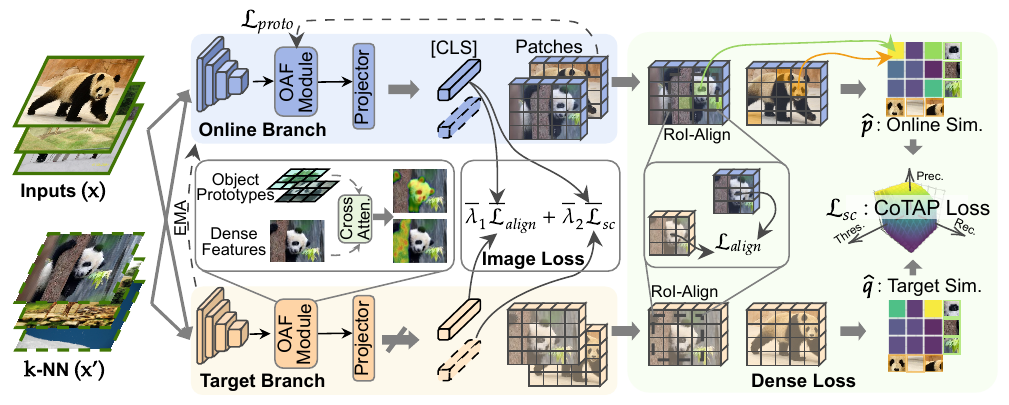}
  \vspace{-2mm}
  \caption{An overview of the proposed framework. The model consists of the target branch and the online branch. The image-level representations of the input image and its $k$-NN are aligned with $\bar{\mathcal{L}}_{align}$ and $\bar{\mathcal{L}}_{sc}$. Patch representations in the same position are aligned with $\mathcal{L}_{align}$, and the correspondences are distilled with the proposed CoTAP loss.}
  \label{fig:overview}
\end{figure*}

\section{Methodology}
\label{sec:method}

\subsection{Motivations}
As analyzed in the previous subsection, non-strict spatial alignment and shared patterns build the foundation of an SSL algorithm. However, this paper argues that \textbf{these key elements may have been overlooked in previous studies on dense SSL, leading to a weak semantic concentration.}

For the sake of comparison, we discuss some practical operations connected to our hypothesis. As a standard setup, state-of-the-art methods \cite{byol,dino,simsiam,swav,li2023sere} on image-level SSL align views randomly cropped from various positions and scales. These random crops from object-centric datasets such as ImageNet \cite{russakovsky2015imagenet} preserve semantic consistency with a high probability. This practice verifies the necessity of \textit{non-strict spatial alignment}. Moreover, with proper transformations such as color distortion and color dropping, the input space is mapped such that the parts of semantically similar objects are clustered \cite{huang2022towards}.

On the contrary, dense SSL methods apply a different strategy. First, mainstream methods \cite{leopart,o2020unsupervised} utilize \textit{strict spatial alignment}. Specifically, two views are aligned in a patch-to-patch manner where only patches from the same position are matched. It contradicts the non-strict spatial alignment principle, leading to over-dispersion. Second, since dense tasks require discriminative features from complicated backgrounds, scene-centric images are preferred. However, due to the low proportion of foreground objects, the distance between object parts is distorted by scenes, reducing the similarity in the input space.

\subsection{Overview}

Given the above challenges, investigating the mechanics of semantic concentration within the context of dense SSL emerges as a promising direction for developing effective dense representations. In this section, we present a dense SSL framework with explicit semantic concentration components in a self-distillation manner. 

The structure of our proposed framework is illustrated in \Fgref{fig:overview}. The model contains two branches: the online branch and the target branch. First, we introduce conventional alignment losses for both image-level and dense alignment. Then we leverage a straightforward yet effective strategy as explicit semantic concentration at the image level. Furthermore, we propose two techniques aimed at enhancing semantic concentration for dense representations.
The key idea consists of two aspects: \textbf{1)} In \cref{subsec:method:slap}, to compensate the absence of the non-strict spatial alignment, we propose to align patches across images by distilling the patch correspondences via a noise-tolerant ranking loss; \textbf{2)} In \cref{subsec:method:obj_filter}, we propose to magnify the shared pattern effect by filtering the object-centric features through the employment of learnable object prototypes.

\textbf{Basic notations.}
We consider an unlabeled dataset $\mathcal{D} = \{\xi \in \mathcal{X}\}_{i=1}^N$ where the input space is set to natural image space $\mathcal{X} \subset \mathbb{R}^{H_I \times W_I \times 3}$. Given a natural image $\bm{x}$, we create multiple views via random augmentations. For the sake of presentation, we consider two views $v_1(\bm{x})$ and $v_2(\bm{x})$. The dense encoder contains two branches $f_t, f_o: \mathcal{X}\mapsto \mathbb{R}^{H \times W \times D}$, where $f_t, f_o$ are called target branch and online branch respectively, $H, W$ are the height and width of the feature map respectively, $D$ is the number of feature dimensions. Besides, the model also contains two branches to encoder the image-level representations: $\bar{f}_t, \bar{f}_o: \mathcal{X}\mapsto \mathbb{R}^{D}$. The model architecture is implemented as ViT \cite{dosovitskiyimage}, where the dense and image-level representations are available from the embeddings of patch tokens and the \texttt{[CLS]} token. Therefore, $f_t$ and $\bar{f}_t$ share parameters $\bm{\theta}_t$, and similarly denote $\bm{\theta}_o$. Denote $f_t(\cdot)_i, f_o(\cdot)_i$ as the $i$-th spatial token, where $i \in [HW]$. Here the feature maps are $L_2$ normalized, \ie, $\|f_t(\cdot)_i\| = \|f_o(\cdot)_i\| = 1$. Given two input images $\bm{u}, \bm{v}$, the patch correspondence maps $\bm{S}_t(\bm{u}, \bm{v}),\bm{S}_t(\bm{u}, \bm{v}) \in [0,1]^{HW \times HW}$ are composed of the scaled cosine similarity of all patch pairs $1 \leq i,j \leq HW$:
\begin{equation}
  \begin{aligned}
    \bm{S}_t(\bm{u}, \bm{v})_{ij} = f_t(\bm{u})_i^\top f_t(\bm{v})_j / 2 + 0.5,\\
    \bm{S}_o(\bm{u}, \bm{v})_{ij} = f_o(\bm{u})_i^\top f_o(\bm{v})_j / 2 + 0.5.    
  \end{aligned}
\end{equation}

\textbf{Cross-view alignment.}
Following \cite{dino,leopart,asano2019self}, we normalize the online outputs and utilize the Sinkhorn-Knopp algorithm (denoted as SK) \cite{cuturi2013sinkhorn} to sharpen the distribution of target representations:
\begin{equation}
  \bar{p}(\bm{x}) = \sigma(\bar{f_o}(\bm{x})), ~~\bar{q}(\bm{x}) = \text{SK}(\bar{f_t}(\bm{x})),
\end{equation}
where $\sigma(\cdot)$ refers to the softmax operation along the feature dimension. 
As for dense representations, following \cite{leopart} we extract the overlap area of two views $v_1, v_2$ with RoI-Align (denoted as RoI-A) \cite{he2017mask}:
\begin{equation}
  \begin{aligned}
    \hat{f}_t(v_1(\bm{x})), \hat{f}_o(v_2(\bm{x})) &= \text{RoI-A}(f_t(v_1(\bm{x})), f_o(v_2(\bm{x}))), \\
    p(\bm{x})_i = \sigma(\hat{f}_o(&\bm{x})_i), ~~q(\bm{x})_i = \text{SK}(\hat{f}_o(\bm{x})_i).
  \end{aligned}
\end{equation}
Then we minimize the distribution shift between two branches:
\begin{equation}
\label{eq:}
  \begin{aligned}
    \mathcal{L}_{align}(\bm{x}) &= \frac{1}{HW}\sum_{i=1}^{HW} \ell_{ce}\Big(p(v_1(\bm{x}))_i,~ [q(v_2(\bm{x}))_i]_{sg}\Big), \\
    \bar{\mathcal{L}}_{align}(\bm{x}) &= \ell_{ce}\Big(\bar{p}(v_1(\bm{x})), ~[\bar{q}(v_2(\bm{x}))]_{sg}\Big),
  \end{aligned}
\end{equation}
where $\ell_{ce} (\bm{a}, \bm{b}) = - \sum_{i=1}^D \bm{b}_i \log (\bm{a}_i)$ is the cross-entropy loss, and $[\cdot]_{sg}$ refers to the stop-gradient operation. 

\textbf{Embedding diversity via asymmetric update.} Following the non-contrastive SSL route \cite{dino}, our framework involves no explicit diversity term. Instead, it blocks the parameter gradients of the target branch to avoid collapse solutions and updates the target branch by ensembling the history models of the online branch. Specifically, denote the parameters of target and online branches at step $k$ as $\bm{\theta}_t^{(k)}$ and $\bm{\theta}_o^{(k)}$ respectively, then the target branch is updated with an exponential moving average strategy:
\begin{equation}
  \bm{\theta}_t^{(k+1)} = \beta \bm{\theta}_t^{(k)} + (1 - \beta) \bm{\theta}_o^{(k)},
\end{equation}
where $\beta$ is a tunable hyperparameter. In practice, $\beta$ is gradually increased to $1$ as the training progresses. Similar to the mean teacher \cite{tarvainen2017mean} used in semi-supervised learning, the target branch plays the role of model ensembling and provides guidance for the online branch. The asymmetric update rules of two branches implicitly increase the embedding diversity \cite{dino,zhou2021ibot}.

\textbf{Training pipeline.} Due to the high computational cost to train a ViT-based model from scratch, we deploy the proposed method by fine-tuning existing SSL models. Concretely, at the first stage the model is trained with alignment losses such as $\bar{\mathcal{L}}_{align}$ (DINO \cite{dino}) and $\mathcal{L}_{align}$ (Leopart \cite{leopart}), which provides a good initialization. Afterward, at the second stage, we fine-tune the model for a few epochs by adding the semantic concentration techniques as presented below.



\subsection{Correspondence Distillation via Ranking Loss}
\label{subsec:method:slap}

In this subsection, we present the correspondence distillation to introduce explicit semantic concentration. To verify our hypothesis, we leverage a simple strategy to extend the implicit semantic concentration into an explicit one for image-level representations.

\textbf{Explicit semantic concentration for image-level SSL.} The core idea is to align representations of semantically similar objects. For clean object-centric data, most images contain a single object, thus the inter-image similarity could provide effective guidance. Specifically, given an anchor image $\bm{x}$, we identify its $k$-nearest neighbors to act as potential positive samples and apply the alignment loss:
\begin{equation}
  \bar{\mathcal{L}}_{sc}(\bm{x}) = \ell_{ce}\big(\bar{p}(v_1(\bm{x})), [\bar{q}(v_2(\bm{x}'))]_{sg}\big),
\end{equation}
where $\bm{x}'$ is randomly sampled from $k$-NN of $\bm{x}$. 

To avoid the high computational burden associated with online searching, we leverage a pretrained model to preprocess the $k$-NNs. We extract the deep features of the training set and then search the $k$-NNs based on the \texttt{[CLS]} tokens. In each iteration, we randomly sample a positive image $\bm{x}'$ from the top-$k$ similar images for an input image $\bm{x}$. All these operations are executed during the preprocessing phase, thus the additional time consumption is ignorable. 

\textbf{Patch correspondence distillation.} Despite the significant improvement of simple concentration for image-level representations (see \cref{subsec:ablations}), transferring the simple strategy to patches does not yield the same success, often leading to performance degradation. The main reason lies in that the cross-image similarity cannot reflect whether patches are matched, especially for scene-centric images with diverse objects and scenes. In addition, it is challenging to online sample or process $k$-NN patches.

In this case, without fine-grained annotations, assigning a ``hard'' matched sample to $\bm{x}$ might introduce noises disturbing the model. Instead, we turn to a ``soft'' assignment guided by the correspondences predicted by the target branch, \ie, $\bm{S}_t$ is considered as supervision to refine $\bm{S}_o$. Specifically, given a pair of images $(\bm{u}, \bm{v})$, we extract the soft pseudo labels $\bm{S}_t(v_1(\bm{u}), v_1(\bm{v}))$ indicating whether two patches belong to the same category.

At first glance, the online branch can be updated by minimizing the binary classification loss:
\begin{equation}
  \begin{aligned}
    \mathcal{L}_{sc}(\bm{u}, \bm{v}) = \frac{1}{N_{pair}}\sum_{k=1}^{N_{pair}} & \ell_{bce} \big(\phat_k, \qhat_k\big),
  \end{aligned}
\end{equation}
where $N_{pair} = (HW)^2$, $\ell_{bce}$ refers to the binary cross-entropy loss, $\hat{\bm{p}} = \bm{S}_o(v_1(\bm{u}), v_1(\bm{v}))$, $\hat{\bm{q}} = \bm{S}_t(v_2(\bm{u}), v_2(\bm{v}))$, and $\phat_k,\qhat_k$ refers to the $k$-th patch pair. However, the pseudo labels suffer from two weaknesses: \textbf{1)} The distribution is highly skewed, \ie, the mismatched patch pairs occupy the majority, potentially leading to decision bias; \textbf{2)} Limited by the capability of the target branch, the presence of noise is inevitable. Consequently, the binary cross-entropy loss cannot effectively distill the correspondence information from the target branch.

\textbf{Ranking-based loss for noise-tolerant learning.} In searching for an alternative to instance-wise binary classification loss, we turn to ranking losses. Literature \cite{raghavan1989critical,yang2022auc} suggests that \textit{by learning the relative ranking attributes between samples, ranking losses are insensitive to the skewed distribution}. Besides, as shown in \Fgref{fig:noise_rate}, the noise rate concerning positive samples decreases within the higher scoring intervals, particularly after fine-tuning the model with our approach. Therefore, \textit{by assigning higher weights to sample pairs with top scores, the negative effect of noises can be alleviated}. Therefore, among various ranking losses, the Average Precision (AP) loss is more suitable for the noise distribution. Concretely, given a threshold $t$ that discretizes to $\{0, 1\}$, the AP loss is formulated as \cite{wen2022exploring,wen2024algorithm}:
\begin{equation}
\label{eq:def_auprc}
  \begin{aligned}
    \ell_{AP}(\bm{u}, \bm{v}; t)
    =& \frac{1}{\sum_{i=1}^{N_{pair}}\one[\qhat_i \geq t]} \sum_{\qhat_i \geq t} \frac{R^-(\phat_i)}{R^+(\phat_i) + R^-(\phat_i)} \\
    =& \frac{1}{\sum_{i=1}^{N_{pair}}\one[\qhat_i \geq t]} \sum_{\qhat_i \geq t} g\left(R^-(\phat_i) / R^+(\phat_i)\right),
  \end{aligned}
\end{equation}
where $g(x) = x / (1 + x)$ is a monotonically increasing function, $\mathbbm{1}[\cdot]$ is the indicator function. $R^+,R^-$ denote the rankings of $\phat_i$ among the positive and negative sets respectively, \ie, $R^+(\phat_i) = \sum_{\qhat_j \geq t} \one[\phat_i \leq \phat_j]$, $R^-(\phat_i) = \sum_{\qhat_j < t} \one[\phat_i \leq \phat_j]$.
Consider the basic unit $R^-(\phat_i) / R^+(\phat_i)$, we aim to minimize the loss $R^-(\phat_i)$, while $R^+(\phat_i)$ servers as ranking-based adaptive weights. From this perspective, the loss of the top-$k$ positive pairs is assigned a weight of $1 / k$, which helps mitigate the influence of noise.

However, the original AP loss requires a threshold $t$ to discrete the continuous target $\qhat_i$. In the training process, the optimal threshold shifts due to the score distribution drift, thus a static threshold cannot capture the dynamic conditions. Motived by threshold-metric curve optimization used in multi-label classification \cite{wang2022optimizing} and recent advances on robust learning \cite{ye2023sequence,ye2025towards,ye2024robust}, our approach encompasses multiple thresholds to better adapt to these dynamics, as illustrated in \Fgref{fig:overview_cotap}. Specifically, we propose the \textit{\textbf{Co}ntinuous-\textbf{T}arget \textbf{AP} (\textbf{CoTAP})} loss:
\begin{equation}
\label{eq:def_ctap}
  \begin{aligned}
    \hat{\ell}_{CoTAP}(\bm{u}, \bm{v}) = \frac{1}{N_{pair}}\sum_{k=1}^{N_{pair}} \gamma(\qhat_k) \cdot \ell_{AP}(\bm{u}, \bm{v};\qhat_k),
  \end{aligned}
\end{equation}
where $\gamma$ is a weighting function. $\ell_{CoTAP}$ is a weighted average of the AP loss with a threshold $\qhat_k$ since $\ell_{AP}$ changes only if the threshold belongs to $\qhat$.

\begin{figure}[t]
  \centering
  \vspace{-2mm}
  \subfigure[Pretrained model of DINO.]{
    \includegraphics[scale=0.4]{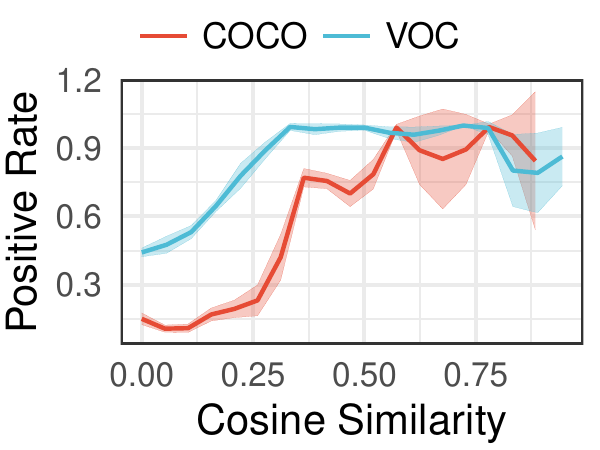}
  }
  \subfigure[Fine-tuned model of ours.]{
    \includegraphics[scale=0.4]{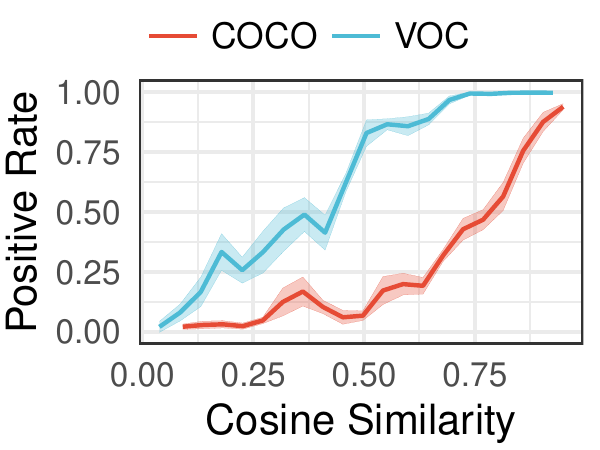}
  }
  \vspace{-2mm}
  \caption{Positive rate \textit{v.s.} similarity.}
  \label{fig:noise_rate}
\end{figure}

It is time-consuming to directly compute the CoTAP loss since it involves three summations. To break through the efficiency bottleneck, we consider an upper bound of $\ell_{CoTAP}$:
\begin{restatable}{prop}{CotapUpBnd}
\label{prop:cotap_up_bnd}
  For any input image pairs $(\bm{u}, \bm{v})$, we have
  \begin{equation*}
    \begin{aligned}
      \hat{\ell}_{CoTAP}(\bm{u}, \bm{v}) \leq \frac{1}{N_{pair}}\sum_{i=1}^{N_{pair}} &\tilde{\gamma}_i \cdot g\Big(
        \psi_i \sum_{\qhat_j < \qhat_i} \one[\phat_i \leq \phat_j]
      \Big), \\
      \psi_i = 1 / \sum_{\qhat_j \geq \qhat_i} \one[\phat_i \leq \phat_j],
      &~~\tilde{\gamma}_i = \sum_{\qhat_k \leq \qhat_i} \gamma(\qhat_k) / r_k,
    \end{aligned}
  \end{equation*}
  where $r_k = \sum_{\qhat_j \geq \qhat_k}1$ is the ranking of $\qhat_k$ among $\qhat$.
\end{restatable}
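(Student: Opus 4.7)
The plan is to prove the bound by a sum-swap followed by a monotonicity argument on $g(R^-/R^+)$ as a function of the threshold. First, I would substitute the definition of $\ell_{AP}$ into $\hat{\ell}_{CoTAP}$, noting that when $t = \qhat_k$ the normalizer $\sum_i \one[\qhat_i \geq t]$ is exactly $r_k$, so
\begin{equation*}
  \hat{\ell}_{CoTAP}(\bm{u},\bm{v}) = \frac{1}{N_{pair}} \sum_{k=1}^{N_{pair}} \frac{\gamma(\qhat_k)}{r_k} \sum_{i:\qhat_i \geq \qhat_k} g\!\left(\frac{R^-_{\qhat_k}(\phat_i)}{R^+_{\qhat_k}(\phat_i)}\right).
\end{equation*}
Next I would swap the order of summation so that $i$ is the outer index and $k$ ranges over all indices with $\qhat_k \leq \qhat_i$; this is the natural way to produce a ``per-pair'' expression whose weighting $\sum_{\qhat_k\leq \qhat_i}\gamma(\qhat_k)/r_k$ is precisely $\tilde{\gamma}_i$.

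The key analytic step is to argue that for fixed $i$, the quantity $g(R^-_t(\phat_i)/R^+_t(\phat_i))$ is non-decreasing in the threshold $t$. This follows from the elementary observation that as $t$ increases the set $\{j:\qhat_j\geq t\}$ shrinks while $\{j:\qhat_j<t\}$ grows, so $R^+_t(\phat_i)$ decreases and $R^-_t(\phat_i)$ increases; since $g(x)=x/(1+x)$ is monotonically increasing on $[0,\infty)$, the composition increases. Consequently, for every $k$ with $\qhat_k \leq \qhat_i$,
\begin{equation*}
  g\!\left(\frac{R^-_{\qhat_k}(\phat_i)}{R^+_{\qhat_k}(\phat_i)}\right) \leq g\!\left(\frac{R^-_{\qhat_i}(\phat_i)}{R^+_{\qhat_i}(\phat_i)}\right) = g\!\left(\psi_i \sum_{\qhat_j<\qhat_i}\one[\phat_i\leq \phat_j]\right),
\end{equation*}
where the last equality just rewrites $R^+_{\qhat_i}(\phat_i)=1/\psi_i$ and plugs in the definition of $R^-_{\qhat_i}(\phat_i)$.

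Finally, assuming $\gamma\geq 0$ (so the factors $\gamma(\qhat_k)/r_k$ are nonnegative), I would pull the $i$-dependent upper bound out of the inner sum over $k$ and recognize the remaining $k$-sum as $\tilde{\gamma}_i$, yielding the claimed inequality. The main obstacle I anticipate is the careful handling of the monotonicity step in the presence of ties among the $\qhat_j$: the argument still goes through, but one must be precise about which inequalities in the indicator functions are strict versus non-strict so that the bound at $t=\qhat_i$ dominates all $t=\qhat_k\leq \qhat_i$ simultaneously. A clean way around this is to treat $\qhat$ as being sorted in decreasing order and to verify the monotonicity claim directly in terms of index rank, so that ties are absorbed without ambiguity.
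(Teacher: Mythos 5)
Your proof is correct and follows essentially the same approach as the paper: both hinge on the monotonicity of $g\bigl(R^-_t(\phat_i)/R^+_t(\phat_i)\bigr)$ in the threshold $t$ (since the index sets $\{j:\qhat_j\geq t\}$ and $\{j:\qhat_j<t\}$ respectively shrink and grow as $t$ increases), dominating each summand at threshold $\qhat_k\leq\qhat_i$ by its value at $\qhat_i$, followed by a double-sum swap that assembles $\tilde{\gamma}_i$. You merely perform the swap before applying the bound rather than after, which is logically immaterial, and you additionally flag the implicit requirement $\gamma\geq 0$ and the tie-handling, both of which the paper leaves tacit.
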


\begin{figure}[t]
  \centering
  \vspace{-2mm}
  \includegraphics[scale=0.84]{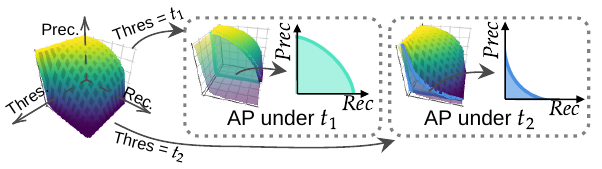}
  \vspace{-2mm}
  \caption{An illustration of the CoTAP loss. The AP losses under all thresholds for target discretization are jointly considered.}
  \label{fig:overview_cotap}
\end{figure}

Here $\psi_i$ is a ranking-based weight, while $\tilde{\gamma}_i$ depends on the choice of $\gamma$ and the ranking of $\qhat$. In fact, for any (non-strict) monotonically increasing $\tilde{\gamma}$, there exists a corresponding $\gamma$. Therefore, we can choose a proper $\tilde{\gamma}$ instead of $\gamma$. Practically, we define $\tilde{\gamma}_i = [\qhat_i - \tau_1]_+$, where $[x]_+ = \max(x,0)$ and $\tau_1$ is a tunable hyperparameter. To make the loss differentiable, following \cite{wen2024algorithm,liu2024not} we utilize the one-side Huber loss $\ell$ as a surrogate loss for $\one[\phat_i \leq \phat_j]$:
\begin{equation}
\label{eq:sur_loss}
  \ell(\phat_i - \phat_j) = \left\{
    \begin{array}{cc}
    \begin{aligned}
        & - 2(\phat_i - \phat_j)/\tau_2 + 1, & \phat_i < \phat_j, \\
        &[1 - (\phat_i \leq \phat_j) / \tau_2]_+^2, & \phat_i \geq \phat_j,
    \end{aligned} 
    \end{array}
  \right.
\end{equation}
where $\tau_2 > 0$ is a tunable hyperparameter. To sum up, the efficient and differential CoTAP loss is formulated as:
\begin{equation}
  \begin{aligned}
    \ell_{CoTAP}(\bm{u}, \bm{v}) \leq \frac{1}{N_{pair}}\sum_{i=1}^{N_{pair}} &\tilde{\gamma}_i \cdot g\Big(
      \psi_i \sum_{\qhat_j < \qhat_i} \ell(\phat_i - \phat_j)]
    \Big), \\
    \psi_i = 1 / \sum_{\qhat_j \geq \qhat_i} \one[\phat_i \leq \phat_j],
    &~~\tilde{\gamma}_i = [\qhat_i - \tau_1]_+,
  \end{aligned}
\end{equation}
where $g(x) = x / (1 + x)$. Intuitively, the above loss achieves its minimal when $\phat_i > \phat_j + \tau_2$ for all $\qhat_i > \qhat_j$, \ie, the correspondence ranking of the online branch is consistent with the target branch, and $\tau_2$ servers as a margin. $\psi_i,\tilde{\gamma}_i$ are adaptive weights to balance the noisy and informative samples. On top of this, we apply the CoTAP loss as the semantic concentration term for patches:
\begin{equation}
  \mathcal{L}_{sc}(\bm{u},\bm{v}) = \ell_{CoTAP}(\bm{u}, \bm{v}),
\end{equation}
and the total loss is a linear combination of the above losses:
\begin{equation}
  \begin{aligned}
    \mathcal{L}_{SSL} = \frac{1}{N} \sum_{\bm{x}\in \mathcal{D}} \Big(&\lambda_1 \mathcal{L}_{align}(\bm{x}) + \bar{\lambda}_1 \bar{\mathcal{L}}_{align}(\bm{x}) \\
    &~~~~~~~~+ \lambda_2 \mathcal{L}_{sc}(\bm{x}, \bm{x}')
    + \bar{\lambda}_2 \bar{\mathcal{L}}_{sc}(\bm{x})\Big),
  \end{aligned}
\end{equation}
where $\{\lambda_{1},\bar{\lambda}_1,\lambda_2,\bar{\lambda}_2\}$ are tunable hyperparameters, $\bm{x}'$ are images randomly sampled from $\mathcal{D}$.

\subsection{Object-aware Filtering via Cross-attention}
\label{subsec:method:obj_filter}
In recent years, the rapid development of attention mechanism \cite{vaswani2017attention,khan2022transformers,chen2021crossvit} significantly enhances the robustness of vision encoders. However, for views randomly augmented from scene-centric images, the self-attention might fail to capture the foreground objects since the dense features are occupied by the background. In this case, it is hard to match the shared local patterns among potential positive samples.

To handle this issue, motivated by the cross-attention module applied in multimodal learning \cite{wei2020multi}, we propose to filter the object-aware features via the cross-attention between image features and representative objects. In this way, each patch is encoded in an object-span space, which reduces the difficulty of capturing shared patterns. Nonetheless, incorporating all possible objects directly would require extensive computational resources. To this end, we propose the \textit{\textbf{O}bject-\textbf{A}ware \textbf{F}ilter (\textbf{OAF})} module to learn object prototypes. An illustration is provided in \Fgref{fig:overview_oaf}.

\textbf{Object prototypes learning.}
Given a set of object-centric images $\bar{\mathcal{D}}\subseteq \mathcal{D}$, we first train a set of prototypes $\bm{U}\in \mathbb{R}^{M \times K_s \times K_s \times D}$ such that its distribution is closed to that of $f_t(\bm{x})$. Specifically, $f_t(\bm{x})$ is downsampled into $K_s \times K_s$, and then flatten $\bm{U}_i$ and $f_t(\bm{x})$ into a $K_s^2 D$ dimensional vector. The result feature matrices are denoted as $\bm{U}^\downarrow \in \mathbb{R}^{M\times(K_s^2D)}$ and $F^{\downarrow} \in \mathbb{R}^{|\bar{\mathcal{D}}|\times(K_s^2D)}$ respectively, where $\|\bm{U}^\downarrow_i\|$ and $\|\bm{F}^\downarrow_i\|$ are normalized to 1. Then we compute the similarities
\begin{equation}
  \bm{\phi} = \sigma(\bm{U}^\downarrow (\bm{F}^\downarrow)^\top / \tau_3),
\end{equation}
where $\sigma$ is the softmax operation along the last dimension, and $\tau_3$ is a tunable hyperparameter to control the temperature. Afterward, the prototypes are updated by minimizing the Shannon entropy:
\begin{equation}
  \mathcal{L}_{proto} = -\frac{1}{|\bar{\mathcal{D}}|M} \sum_{i=1}^M \sum_{j=1}^{|\bar{\mathcal{D}}|} \bm{\phi}_{ij} \log \bm{\phi}_{ij}.
\end{equation}
By minimizing $\mathcal{L}_{proto}$, prototypes get close to their nearest neighbors in $\bm{F}^\downarrow$. Since the images are randomly sampled in each iteration, prototypes are capable of capturing the representative features of foreground objects.

\begin{figure}[t]
  \centering
  \vspace{-2mm}
  \includegraphics[scale=1.1]{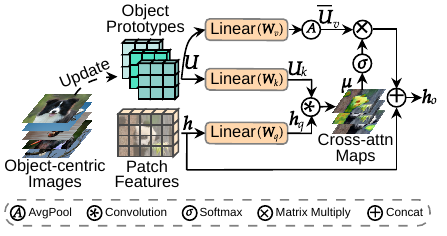}
  \vspace{-2mm}
  \caption{An illustration of the object-aware filter module.}
  \label{fig:overview_oaf}
\end{figure}

\textbf{Prototype filtering.}
Meanwhile, we filter the object-aware information by selecting a linear combination of prototypes to represent the input patches according to the similarities. Given a deep feature map $\bm{h} \in \mathbb{R}^{H\times W \times D}$, we extract the query, key, and value features with a $1\times 1$ convolution:
\begin{equation}
  \begin{aligned}
    \bm{h}_{q} = \bm{W}_{q} * \bm{h}, ~~(\bm{U}_{k})_i = \bm{W}_{k} * \bm{U}_i, ~~(\bm{U}_{v})_i = \bm{W}_{v} * \bm{U}_i,
  \end{aligned}
\end{equation}
where $\bm{W}_{k}, \bm{W}_{v}, \bm{W}_{q} \in \mathbb{R}^{D'\times 1\times 1\times D}$ are the learnable kernels, $*$ refers to the 2D convolution, $D'$ is the output dimension. Afterward, we compute the similarity matrix $\bm{\mu}$ and aggregate the value features:
\begin{equation}
  \bm{\mu} = \texttt{flatten}(\bm{U}_q * \bm{h}_k), ~~ \bar{\bm{U}}_{v} = \texttt{avgpool}(\bm{U}_{v}),
\end{equation}
where $\texttt{flatten}$ means reshaping the $H\times W \times M$ feature map into $(HW)\times M$, and $\texttt{avgpool}$ maps the $M\times K_s\times K_s \times D'$ feature map into $M\times D'$. Finally, we compute the weighted sum and concatenate the result with input features:
\begin{equation}
  \bm{h}_{o} = \bm{h} \oplus \sigma\left(\bm{\mu} / \sqrt{D'}\right) \bar{\bm{U}}_v,
\end{equation}
where $\oplus$ is the concatenation along the feature dimension.

\begin{table*}[t]
  \caption{Quantitative results on COCOStuff-27 (linear segmentation) and ImageNet-1k (linear or $k$-NN classification). The best and the second-best results are highlighted in \textbf{bold} and \underline{underline}, respectively. Best viewed in colors.}
  \centering
  \begin{tabular}{lll|cccccc}
      \toprule
      \multirow{2}{*}{Methods} & \multirow{2}{*}{Arch.} & \multirow{2}{*}{Dataset} & \multicolumn{2}{c}{\textbf{Linear Seg.}} & \multicolumn{2}{c}{\textbf{Linear Cls.}} & \multicolumn{2}{c}{\textbf{$k$-NN Cls.}}\\
      && & mIoU & Acc & Acc@1 & Acc@5 & Acc@1 & Acc@5 \\
      \midrule
      MoCo-V2 \cite{chen2020improved} &  ResNet50  &  IN-1k  & \cellcolor[rgb]{1.000 1.000 1.000}28.2 & \cellcolor[rgb]{1.000 1.000 1.000}61.2 & \cellcolor[rgb]{1.000 1.000 1.000}71.1 &  --  & \cellcolor[rgb]{1.000 1.000 1.000}59.9 & \cellcolor[rgb]{1.000 1.000 1.000}81.1 ~\\
  SwAV \cite{swav} &  ResNet50  &  IN-1k  & \cellcolor[rgb]{1.000 1.000 1.000}39.3 & \cellcolor[rgb]{1.000 1.000 1.000}70.5 & \cellcolor[rgb]{1.000 0.989 0.978}75.3 &  --  & \cellcolor[rgb]{1.000 1.000 1.000}64.6 & \cellcolor[rgb]{1.000 1.000 1.000}83.8 ~\\
  DINO \cite{dino} &  ResNet50  &  IN-1k  & \cellcolor[rgb]{1.000 1.000 1.000}37.1 & \cellcolor[rgb]{1.000 1.000 1.000}67.5 & \cellcolor[rgb]{1.000 0.989 0.978}75.3 & \cellcolor[rgb]{1.000 0.960 0.918}92.6 & \cellcolor[rgb]{1.000 1.000 1.000}64.4 & \cellcolor[rgb]{1.000 1.000 1.000}83.5 ~\\
  SlotCon \cite{wenself} &  ResNet50  &  IN-1k  & \cellcolor[rgb]{1.000 1.000 1.000}39.9 & \cellcolor[rgb]{1.000 1.000 1.000}72.1 & \cellcolor[rgb]{1.000 1.000 1.000}64.8 & \cellcolor[rgb]{1.000 1.000 1.000}87.1 & \cellcolor[rgb]{1.000 1.000 1.000}53.1 & \cellcolor[rgb]{1.000 1.000 1.000}76.4 ~\\
  SlotCon \cite{wenself} &  ResNet50  &  CC  & \cellcolor[rgb]{1.000 1.000 1.000}41.3 & \cellcolor[rgb]{1.000 1.000 1.000}72.7 & \cellcolor[rgb]{1.000 1.000 1.000}48.1 & \cellcolor[rgb]{1.000 1.000 1.000}74.4 & \cellcolor[rgb]{1.000 1.000 1.000}36.0 & \cellcolor[rgb]{1.000 1.000 1.000}58.4 ~\\
  \midrule
  MoCo-V3 \cite{chen2021empirical} &  ViT-S/16  &  IN-1k  & \cellcolor[rgb]{1.000 1.000 1.000}39.0 & \cellcolor[rgb]{1.000 1.000 1.000}72.3 & \cellcolor[rgb]{1.000 1.000 1.000}73.2 & \cellcolor[rgb]{1.000 1.000 1.000}91.2 & \cellcolor[rgb]{0.993 0.830 0.808}74.6 & \cellcolor[rgb]{0.990 0.736 0.703}90.0 ~\\
  SwAV \cite{swav} &  ViT-S/16  &  IN-1k  &  --  &  --  & \cellcolor[rgb]{1.000 1.000 1.000}73.5 &  --  & \cellcolor[rgb]{1.000 1.000 1.000}66.3 &  --  ~\\
  STEGO \cite{hamiltonunsupervised} &  ViT-S/8  &  IN-1k + CC  & \cellcolor[rgb]{1.000 1.000 1.000}36.3 & \cellcolor[rgb]{1.000 1.000 1.000}72.1 &  --  &  --  &  --  &  --  ~\\
  SERE \cite{li2023sere} &  ViT-S/16  &  IN-1k  & \cellcolor[rgb]{0.967 0.986 0.984}44.8 & \cellcolor[rgb]{0.891 0.954 0.948}76.4 & \cellcolor[rgb]{1.000 0.996 0.993}75.1 & \cellcolor[rgb]{1.000 0.987 0.973}92.2 & \cellcolor[rgb]{0.999 0.986 0.984}71.3 & \cellcolor[rgb]{1.000 0.991 0.990}87.1 ~\\
  DINO \cite{dino} &  ViT-S/16  &  IN-1k  & \cellcolor[rgb]{1.000 1.000 1.000}38.8 & \cellcolor[rgb]{1.000 1.000 1.000}71.7 & \cellcolor[rgb]{1.000 0.928 0.853}77.0 & \cellcolor[rgb]{1.000 0.933 0.863}93.0 & \cellcolor[rgb]{0.994 0.834 0.814}74.5 & \cellcolor[rgb]{0.990 0.736 0.703}90.0 ~\\
  Leopart \cite{leopart} &  ViT-S/16  &  IN-1k + CC  & \cellcolor[rgb]{0.867 0.944 0.937}47.2 & \cellcolor[rgb]{0.837 0.931 0.922}77.1 & \cellcolor[rgb]{1.000 1.000 1.000}70.0 & \cellcolor[rgb]{1.000 1.000 1.000}88.9 & \cellcolor[rgb]{1.000 1.000 1.000}55.0 & \cellcolor[rgb]{1.000 1.000 1.000}76.6 ~\\
  iBOT \cite{zhou2021ibot} &  ViT-S/16  &  IN-1k  & \cellcolor[rgb]{0.925 0.968 0.964}45.8 & \cellcolor[rgb]{0.883 0.950 0.944}76.5 & \cellcolor[rgb]{1.000 0.896 0.787}77.9 &  --  & \cellcolor[rgb]{0.992 0.801 0.776}75.2 & \cellcolor[rgb]{0.993 0.815 0.792}89.1 ~\\
  Mugs \cite{zhou2022mugs} &  ViT-S/16  &  IN-1k  & \cellcolor[rgb]{0.855 0.938 0.931}\underline{47.5} & \cellcolor[rgb]{0.821 0.924 0.915}\underline{77.3} & \cellcolor[rgb]{1.000 0.860 0.714}\underline{78.9} & \cellcolor[rgb]{1.000 0.866 0.725}\underline{94.0} & \cellcolor[rgb]{0.991 0.782 0.755}\underline{75.6} & \cellcolor[rgb]{0.988 0.683 0.643}\underline{90.6} ~\\
  \textbf{DINO + Ours} &  ViT-S/16  &  IN-1k + CC  & \cellcolor[rgb]{0.731 0.886 0.871}50.5 & \cellcolor[rgb]{0.697 0.871 0.855}78.9 & \cellcolor[rgb]{1.000 0.853 0.699}79.1 & \cellcolor[rgb]{1.000 0.846 0.684}94.3 & \cellcolor[rgb]{0.987 0.669 0.627}78.0 & \cellcolor[rgb]{0.991 0.771 0.742}89.6 ~\\
  \textbf{Leopart + Ours } &  ViT-S/16  &  IN-1k + CC  & \cellcolor[rgb]{0.735 0.887 0.873}50.4 & \cellcolor[rgb]{0.705 0.875 0.859}78.8 & \cellcolor[rgb]{1.000 0.871 0.736}78.6 & \cellcolor[rgb]{1.000 0.866 0.725}94.0 & \cellcolor[rgb]{0.988 0.702 0.664}77.3 & \cellcolor[rgb]{0.992 0.789 0.762}89.4 ~\\
  \textbf{iBOT + Ours} &  ViT-S/16  &  IN-1k + CC  & \cellcolor[rgb]{0.685 0.866 0.850}\textbf{51.6} & \cellcolor[rgb]{0.673 0.861 0.844}\textbf{79.2} & \cellcolor[rgb]{1.000 0.842 0.677}79.4 & \cellcolor[rgb]{1.000 0.832 0.657}94.5 & \cellcolor[rgb]{0.986 0.650 0.606}78.4 & \cellcolor[rgb]{0.985 0.613 0.564}91.4 ~\\
  \textbf{Mugs + Ours} &  ViT-S/16  &  IN-1k + CC  & \cellcolor[rgb]{0.689 0.868 0.852}51.5 & \cellcolor[rgb]{0.689 0.868 0.851}79.0 & \cellcolor[rgb]{1.000 0.813 0.618}\textbf{80.2} & \cellcolor[rgb]{1.000 0.805 0.602}\textbf{94.9} & \cellcolor[rgb]{0.985 0.626 0.579}\textbf{78.9} & \cellcolor[rgb]{0.983 0.577 0.524}\textbf{91.8} ~\\
  \midrule
  MoCo-V3 \cite{chen2021empirical} &  ViT-B/16  &  IN-1k  &  --  &  --  & \cellcolor[rgb]{1.000 0.939 0.875}76.7 & \cellcolor[rgb]{1.000 0.913 0.822}93.3 &  --  &  --  ~\\
  SERE \cite{li2023sere} &  ViT-B/16  &  IN-1k  & \cellcolor[rgb]{0.781 0.907 0.895}49.3 & \cellcolor[rgb]{0.743 0.891 0.877}78.3 & \cellcolor[rgb]{1.000 0.892 0.780}78.0 & \cellcolor[rgb]{1.000 0.873 0.739}93.9 & \cellcolor[rgb]{0.994 0.844 0.824}74.3 & \cellcolor[rgb]{0.993 0.815 0.792}89.1 ~\\
  Leopart \cite{leopart} &  ViT-B/8  &  IN-1k + CC  & \cellcolor[rgb]{0.752 0.894 0.881}\underline{50.0} & \cellcolor[rgb]{0.712 0.878 0.863}\underline{78.7} &  --  &  --  & \cellcolor[rgb]{1.000 1.000 1.000}65.3 & \cellcolor[rgb]{1.000 1.000 1.000}84.1 ~\\
  DINO \cite{dino} &  ViT-B/16  &  IN-1k  & \cellcolor[rgb]{1.000 1.000 1.000}44.0 & \cellcolor[rgb]{0.977 0.990 0.989}75.3 & \cellcolor[rgb]{1.000 0.885 0.765}78.2 & \cellcolor[rgb]{1.000 0.873 0.739}93.9 & \cellcolor[rgb]{0.991 0.758 0.728}76.1 & \cellcolor[rgb]{0.986 0.639 0.594}91.1 ~\\
  iBOT \cite{zhou2021ibot} &  ViT-B/16  &  IN-1k  & \cellcolor[rgb]{0.768 0.901 0.889}49.6 & \cellcolor[rgb]{0.743 0.891 0.877}78.3 & \cellcolor[rgb]{1.000 0.838 0.669}79.5 &  --  & \cellcolor[rgb]{0.989 0.711 0.675}77.1 & \cellcolor[rgb]{0.984 0.586 0.534}91.7 ~\\
  Mugs \cite{zhou2022mugs} &  ViT-B/16  &  IN-1k  & \cellcolor[rgb]{0.768 0.901 0.889}49.6 & \cellcolor[rgb]{0.790 0.911 0.900}77.7 & \cellcolor[rgb]{1.000 0.799 0.589}\underline{80.6} & \cellcolor[rgb]{1.000 0.812 0.616}94.8 & \cellcolor[rgb]{0.987 0.669 0.627}\underline{78.0} & \cellcolor[rgb]{0.990 0.753 0.723}89.8 ~\\
  DINO \cite{dino} &  ViT-B/8  &  IN-1k  & \cellcolor[rgb]{0.921 0.967 0.962}45.9 & \cellcolor[rgb]{0.883 0.950 0.944}76.5 & \cellcolor[rgb]{1.000 0.817 0.625}80.1 & \cellcolor[rgb]{1.000 0.799 0.588}\underline{95.0} & \cellcolor[rgb]{0.988 0.697 0.659}77.4 & \cellcolor[rgb]{0.983 0.560 0.505}\underline{92.0} ~\\
  \textbf{DINO + Ours} &  ViT-B/16  &  IN-1k + CC  & \cellcolor[rgb]{0.660 0.856 0.838}52.2 & \cellcolor[rgb]{0.658 0.855 0.837}79.4 & \cellcolor[rgb]{1.000 0.813 0.618}80.2 & \cellcolor[rgb]{1.000 0.805 0.602}94.9 & \cellcolor[rgb]{0.985 0.612 0.563}79.2 & \cellcolor[rgb]{0.984 0.595 0.544}91.6 ~\\
  \textbf{iBOT + Ours} &  ViT-B/16  &  IN-1k + CC  & \cellcolor[rgb]{0.590 0.826 0.804}53.9 & \cellcolor[rgb]{0.611 0.835 0.814}80.0 & \cellcolor[rgb]{1.000 0.792 0.574}80.8 & \cellcolor[rgb]{1.000 0.792 0.575}95.1 & \cellcolor[rgb]{0.983 0.569 0.515}80.1 & \cellcolor[rgb]{0.988 0.701 0.663}90.4 ~\\
  \textbf{Mugs + Ours} &  ViT-B/16  &  IN-1k + CC  & \cellcolor[rgb]{0.665 0.858 0.840}52.1 & \cellcolor[rgb]{0.673 0.861 0.844}79.2 & \cellcolor[rgb]{1.000 0.759 0.508}81.7 & \cellcolor[rgb]{1.000 0.765 0.520}95.5 & \cellcolor[rgb]{0.982 0.536 0.478}80.8 & \cellcolor[rgb]{0.982 0.542 0.485}92.2 ~\\
  \textbf{DINO + Ours} &  ViT-B/8  &  IN-1k + CC  & \cellcolor[rgb]{0.557 0.812 0.788}\textbf{54.7} & \cellcolor[rgb]{0.557 0.812 0.788}\textbf{80.7} & \cellcolor[rgb]{1.000 0.745 0.478}\textbf{82.1} & \cellcolor[rgb]{1.000 0.745 0.478}\textbf{95.8} & \cellcolor[rgb]{0.980 0.498 0.435}\textbf{81.6} & \cellcolor[rgb]{0.980 0.498 0.435}\textbf{92.7} ~\\
      \bottomrule
  \end{tabular}
  \label{tab:main}
  \end{table*}
  
  \section{Experiments}
  \label{sec:experiment}
  
  To validate the soundness of the proposed method, we conduct empirical studies to verify the soundness of our analysis and the effectiveness of the proposed framework. In \cref{subsec:details}, we present implementation details of the model pretraining and provide quantitative results on ImageNet and MS COCO in \cref{subsec:main_results}. Due to the limitation of space, details of downstream tasks are attached in  Appendix {\color{blue}B}. Afterward, in \cref{subsec:transfer} we transfer our pretrained model to other datasets of dense downstream tasks including semantic segmentation, object detection, instance segmentation, and video object segmentation. Ablation studies and visualizations are provided in \cref{subsec:ablations} and \cref{subsec:vis}, respectively.
  
  \subsection{Implementation Details}
  \label{subsec:details}
  
  \textbf{Datasets.}
  Following the commonly used setting, all models in this work are pretrained with ImageNet-1k (IN-1k) \cite{russakovsky2015imagenet}. Similar to STEGO \cite{hamiltonunsupervised} and Leopart \cite{leopart}, we fine-tune the pretrained model with our proposed framework on ImageNet-1k and MS COCO (CC) \cite{lin2014microsoft}. Since the scene-centric images in MS COCO violate the basic assumption of image-level losses, we only apply image-level losses to ImageNet-1k.
  
  \textbf{Network architecture.}
  We use ViT-Small with a patch size of 16 by default. We also provide results of ViT-Base with a patch size of 16 or 8. Following \cite{leopart}, we add a 3-layer multilayer perceptron to adjust the feature dimension to 256, where the hidden layers have 2048 dimensions. Accordingly, the prototypes are set to a $64\times 384 \times 3 \times 3$ tensor for ViT-Small and $64\times 768 \times 3 \times 3$ for ViT-Base. Other learnable layers in prototype filtering are set to the same as the self-attention layer used in ViT-Small/ViT-Base \cite{dosovitskiyimage};
  
  \textbf{Optimization strategy.} To avoid the high computational cost of training a ViT model from scratch, we first load the official checkpoints of existing SSL models as initialization. Here the proposed OAF module is initialized from scratch. Afterward, all parameters are fine-tuned with our proposed loss (as summarized in Eq. (14)). At the training phase, each image is randomly cropped into 2 global views ($224\times 224$) and 4 local views ($96\times 96$). Afterward, we apply random standard data augmentations including color jittering, gray scaling, and Gaussian blur. The online branch is optimized with the AdamW \cite{loshchilovdecoupled} optimizer. The learning rate is initialized as $3\times 10^{-5}$ and decays to $10^{-6}$ with a cosine decay schedule. The weight decay is set to $0.024$ and increases to $0.24$ with a cosine schedule. Following \cite{dino}, the target branch is updated by the exponential moving average of the online branch, where the update rate is initialized to $0.9997$ and increases to $1$ with a cosine schedule. The hyperparameters of the loss function are set as follows: $\lambda_1 =$ $\lambda_2 =$ $\bar{\lambda}_1 = $ $\bar{\lambda}_2 = \lambda_3 = 1$, $\tau_1 = -0.2$, $\tau_2 = 0.5$, $\tau_3 = 0.1$. By default, we set the batch size to $256$ and the number of training iterations to $100k$. It takes about $5$ hours to train a ViT-S/16 model on 8 NVIDIA RTX 4090 GPUs.
  
  
  \textbf{Competitors.}
  We compare the proposed methods with 2 types of competitors: \textbf{1) Image-level SSL methods}, including MoCo V2 \cite{chen2020improved}, MoCo V3 \cite{chen2021empirical}, SwAV \cite{swav}, DINO \cite{dino}, and SERE \cite{li2023sere}. These methods aim to learn general representations of images, but some like DINO are also capable of distinguishing dense representations. \textbf{2) Pixel/patch-level SSL methods}, including SlotCon \cite{wenself}, STEGO \cite{hamiltonunsupervised}, CrOC \cite{stegmuller2023croc} Leopart \cite{leopart}, iBOT \cite{zhou2021ibot}, and Mugs \cite{zhou2022mugs}.
  
  
  \begin{table*}
  \caption{Transfer learning results (mIoU) on semantic segmentation with fixed representations.}
  \centering
  \begin{tabular}{ll|cccccccc}
      \toprule
      \multirow{2}{*}{Methods} & \multirow{2}{*}{Dataset} & \multicolumn{2}{c}{\textbf{COCOStuff-27}} & \multicolumn{2}{c}{\textbf{PASCAL VOC}}  & \multicolumn{2}{c}{\textbf{ADE20k}} & \multicolumn{2}{c}{\textbf{Cityscapes}} \\
      & & Linear & FCN & Linear & FCN & Linear & FCN & Linear & FCN \\
      \midrule
      DINO \cite{dino} & IN-1k & \cellcolor[rgb]{1.000 1.000 1.000}38.8 & \cellcolor[rgb]{1.000 1.000 1.000}49.5 & \cellcolor[rgb]{1.000 1.000 1.000}50.7 & \cellcolor[rgb]{1.000 1.000 1.000}66.8 & \cellcolor[rgb]{1.000 1.000 1.000}22.3 & \cellcolor[rgb]{1.000 1.000 1.000}28.9 & \cellcolor[rgb]{1.000 1.000 1.000}42.9 & \cellcolor[rgb]{1.000 1.000 1.000}54.5 ~\\
      iBOT \cite{zhou2021ibot} & IN-1k & \cellcolor[rgb]{0.758 0.897 0.884}45.8 & \cellcolor[rgb]{0.803 0.917 0.906}52.4 & \cellcolor[rgb]{1.000 0.835 0.662}66.0 & \cellcolor[rgb]{1.000 0.871 0.735}72.7 & \cellcolor[rgb]{0.991 0.760 0.730}27.2 & \cellcolor[rgb]{0.989 0.712 0.676}32.0 & \cellcolor[rgb]{1.000 1.000 1.000}42.9 & \cellcolor[rgb]{1.000 1.000 1.000}54.4 ~\\
      Leopart \cite{leopart} & IN-1k + CC & \cellcolor[rgb]{0.723 0.882 0.868}47.2 & \cellcolor[rgb]{0.838 0.931 0.922}51.9 & \cellcolor[rgb]{1.000 0.860 0.713}63.7 & \cellcolor[rgb]{1.000 0.954 0.906}68.9 & \cellcolor[rgb]{0.993 0.819 0.797}26.0 & \cellcolor[rgb]{0.997 0.936 0.928}29.6 & \cellcolor[rgb]{0.699 0.810 0.891}\underline{46.3} & \cellcolor[rgb]{0.896 0.934 0.963}54.8 ~\\
      Mugs \cite{zhou2022mugs} & IN-1k & \cellcolor[rgb]{0.699 0.872 0.856}\underline{47.5} & \cellcolor[rgb]{0.694 0.870 0.854}\underline{54.0} & \cellcolor[rgb]{1.000 0.778 0.545}\underline{71.3} & \cellcolor[rgb]{1.000 0.791 0.573}\underline{76.3} & \cellcolor[rgb]{0.984 0.582 0.530}\underline{30.8} & \cellcolor[rgb]{0.981 0.507 0.446}\underline{34.2} & \cellcolor[rgb]{0.726 0.827 0.901}46.0 & \cellcolor[rgb]{0.510 0.690 0.824}\textbf{\underline{55.9}} ~\\
      SERE \cite{li2023sere} & IN-1k & \cellcolor[rgb]{0.754 0.896 0.883}45.9 & \cellcolor[rgb]{0.872 0.946 0.939}51.4 & \cellcolor[rgb]{1.000 0.808 0.607}68.5 & \cellcolor[rgb]{1.000 0.868 0.731}72.8 & \cellcolor[rgb]{0.989 0.716 0.680}28.1 & \cellcolor[rgb]{0.989 0.722 0.687}31.9 & \cellcolor[rgb]{0.860 0.912 0.950}44.5 & \cellcolor[rgb]{0.931 0.957 0.975}54.7 ~\\
      CrOC \cite{stegmuller2023croc} & IN-1k & \cellcolor[rgb]{0.823 0.925 0.916}43.9 & \cellcolor[rgb]{0.961 0.983 0.981}50.1 & \cellcolor[rgb]{1.000 0.809 0.609}68.4 & \cellcolor[rgb]{1.000 0.882 0.758}72.2 & \cellcolor[rgb]{0.992 0.790 0.763}26.6 & \cellcolor[rgb]{0.995 0.861 0.844}30.4 & \cellcolor[rgb]{0.932 0.957 0.976}43.7 & \cellcolor[rgb]{1.000 1.000 1.000}54.2 ~\\
      \textbf{DINO + Ours} & IN-1k + CC & \cellcolor[rgb]{0.595 0.828 0.806}50.5 & \cellcolor[rgb]{0.632 0.844 0.824}54.9 & \cellcolor[rgb]{1.000 0.775 0.540}71.5 & \cellcolor[rgb]{1.000 0.794 0.577}76.2 & \cellcolor[rgb]{0.985 0.612 0.563}30.2 & \cellcolor[rgb]{0.982 0.535 0.477}33.9 & \cellcolor[rgb]{0.681 0.798 0.885}46.5 & \cellcolor[rgb]{0.615 0.757 0.861}55.6 ~\\
      \textbf{iBOT + Ours} & IN-1k + CC & \cellcolor[rgb]{0.557 0.812 0.788}\textbf{51.6} & \cellcolor[rgb]{0.564 0.815 0.792}55.9 & \cellcolor[rgb]{1.000 0.757 0.503}73.2 & \cellcolor[rgb]{1.000 0.772 0.532}77.2 & \cellcolor[rgb]{0.983 0.562 0.508}31.2 & \cellcolor[rgb]{0.980 0.498 0.435}34.3 & \cellcolor[rgb]{0.654 0.781 0.875}46.8 & \cellcolor[rgb]{0.791 0.868 0.925}55.1 ~\\
      \textbf{Mugs + Ours} & IN-1k + CC & \cellcolor[rgb]{0.560 0.813 0.790}51.5 & \cellcolor[rgb]{0.557 0.812 0.788}\textbf{56.0} & \cellcolor[rgb]{1.000 0.745 0.478}\textbf{74.3} & \cellcolor[rgb]{1.000 0.745 0.478}\textbf{78.4} & \cellcolor[rgb]{0.980 0.498 0.435}\textbf{32.5} & \cellcolor[rgb]{0.980 0.498 0.435}\textbf{34.3} & \cellcolor[rgb]{0.510 0.690 0.824}\textbf{48.4} & \cellcolor[rgb]{0.615 0.757 0.861}55.6 ~\\ 
      \bottomrule
  \end{tabular}
  \label{tab:semantic_seg}
  \end{table*}
  
  \begin{table*}[t]  
    \begin{minipage}[t]{0.55\textwidth}
    \caption{Full fine-tuning on semantic segmentation (mIoU).}
    \centering
      \setlength\tabcolsep{4.6pt}
      \begin{tabular}{l|cccccc}
        \toprule
        \multirow{2}{*}{Methods} & \multicolumn{2}{c}{\textbf{ADE20k}} & \multicolumn{2}{c}{\textbf{Cityscapes}} & \multicolumn{2}{c}{\textbf{COCOStuff-27}} \\
        & UperNet & FPN & UperNet & FPN & UperNet & FPN \\
        \midrule
        DINO \cite{dino}
        & \cellcolor[rgb]{1.000 1.000 1.000}43.8 & \cellcolor[rgb]{0.979 0.991 0.990}43.7 & \cellcolor[rgb]{1.000 0.973 0.945}75.8 & \cellcolor[rgb]{1.000 1.000 1.000}73.9 & \cellcolor[rgb]{1.000 1.000 1.000}58.1 & \cellcolor[rgb]{1.000 1.000 1.000}56.8 ~\\
       iBOT \cite{zhou2021ibot}
        & \cellcolor[rgb]{0.842 0.933 0.925}45.4 & \cellcolor[rgb]{0.873 0.946 0.939}44.7 & \cellcolor[rgb]{1.000 1.000 1.000}75.4 & \cellcolor[rgb]{1.000 0.919 0.834}75.4 & \cellcolor[rgb]{0.995 0.867 0.851}59.0 & \cellcolor[rgb]{0.997 0.916 0.906}57.2 ~\\
       Leopart \cite{leopart} & \cellcolor[rgb]{1.000 1.000 1.000}43.3 & \cellcolor[rgb]{1.000 1.000 1.000}43.5 & \cellcolor[rgb]{1.000 1.000 1.000}75.4 & \cellcolor[rgb]{1.000 0.902 0.800}75.7 & \cellcolor[rgb]{0.998 0.956 0.950}58.4 & \cellcolor[rgb]{0.998 0.958 0.953}57.0 ~\\
       Mugs \cite{zhou2022mugs}
        & \cellcolor[rgb]{0.724 0.883 0.868}\underline{46.6} & \cellcolor[rgb]{0.768 0.901 0.889}\underline{45.7} & \cellcolor[rgb]{1.000 0.826 0.643}\underline{78.0} & \cellcolor[rgb]{1.000 0.810 0.612}\underline{77.4} & \cellcolor[rgb]{0.987 0.660 0.618}\underline{60.4} & \cellcolor[rgb]{0.984 0.603 0.553}\underline{58.7} ~\\
       SERE \cite{li2023sere}
        & \cellcolor[rgb]{0.911 0.962 0.958}44.7 & \cellcolor[rgb]{0.926 0.969 0.965}44.2 & \cellcolor[rgb]{1.000 0.940 0.876}76.3 & \cellcolor[rgb]{1.000 0.881 0.756}76.1 & \cellcolor[rgb]{0.989 0.719 0.684}60.0 & \cellcolor[rgb]{0.987 0.665 0.624}58.4 ~\\
       CrOC \cite{stegmuller2023croc}
        & \cellcolor[rgb]{1.000 1.000 1.000}42.9 & \cellcolor[rgb]{1.000 1.000 1.000}42.4 & \cellcolor[rgb]{1.000 1.000 1.000}74.4 & \cellcolor[rgb]{1.000 0.897 0.789}75.8 & \cellcolor[rgb]{0.998 0.941 0.934}58.5 & \cellcolor[rgb]{0.999 0.979 0.976}56.9 ~\\
       \textbf{DINO + Ours
       } & \cellcolor[rgb]{0.872 0.946 0.939}45.1 & \cellcolor[rgb]{0.873 0.946 0.939}44.7 & \cellcolor[rgb]{1.000 0.873 0.739}77.3 & \cellcolor[rgb]{1.000 0.821 0.634}77.2 & \cellcolor[rgb]{0.994 0.838 0.817}59.2 & \cellcolor[rgb]{1.000 1.000 1.000}56.6 ~\\
       \textbf{iBOT + Ours
       } & \cellcolor[rgb]{0.724 0.883 0.868}46.6 & \cellcolor[rgb]{0.652 0.852 0.834}46.8 & \cellcolor[rgb]{1.000 0.745 0.478}\textbf{79.2} & \cellcolor[rgb]{1.000 0.745 0.478}\textbf{78.6} & \cellcolor[rgb]{0.985 0.616 0.568}60.7 & \cellcolor[rgb]{0.989 0.728 0.694}58.1 ~\\
       \textbf{Leopart + Ours} & \cellcolor[rgb]{0.862 0.941 0.934}45.2 & \cellcolor[rgb]{0.821 0.924 0.914}45.2 & \cellcolor[rgb]{1.000 0.993 0.986}75.5 & \cellcolor[rgb]{1.000 0.788 0.567}77.8 & \cellcolor[rgb]{0.987 0.675 0.635}60.3 & \cellcolor[rgb]{0.993 0.833 0.812}57.6 ~\\
       \textbf{Mugs + Ours} & \cellcolor[rgb]{0.557 0.812 0.788}\textbf{48.3} & \cellcolor[rgb]{0.557 0.812 0.788}\textbf{47.7} & \cellcolor[rgb]{1.000 0.759 0.506}79.0 & \cellcolor[rgb]{1.000 0.761 0.512}78.3 & \cellcolor[rgb]{0.980 0.498 0.435}\textbf{61.5} & \cellcolor[rgb]{0.980 0.498 0.435}\textbf{59.2} ~\\
        \bottomrule
    \end{tabular}
    \label{tab:semantic_seg_ft}
    \end{minipage}
    \hspace{4mm}
      \begin{minipage}[t]{0.45\textwidth}
        \caption{Results on object detection and instance segmentation.}
        \setlength\tabcolsep{3.6pt}
        \centering
          \begin{tabular}{ll|cccccccc}
              \toprule
              \multirow{2}{*}{Methods} & \multirow{2}{*}{Arch.} & \multicolumn{3}{c}{\textbf{Obj. Det.}} & \multicolumn{1}{c}{\textbf{Ins. Seg.}} \\
              & & $\mathrm{AP^b}$ & $\mathrm{AP^b_{50}}$ & $\mathrm{AP^b_{75}}$ & $\mathrm{AP^m}$ \\
              \midrule
              Sup. \cite{liu2021swin} & Swin-T & \cellcolor[rgb]{0.982 0.992 0.991}48.1 & \cellcolor[rgb]{0.986 0.994 0.993}67.1 & \cellcolor[rgb]{0.815 0.922 0.912}52.5 & \cellcolor[rgb]{1.000 0.926 0.848}41.7 ~\\
              MoBY \cite{xie2021self} & Swin-T & \cellcolor[rgb]{0.982 0.992 0.991}48.1 & \cellcolor[rgb]{0.986 0.994 0.993}67.1 & \cellcolor[rgb]{0.865 0.942 0.935}52.1 & \cellcolor[rgb]{1.000 0.947 0.891}41.5 ~\\
              Sup. \cite{liu2021swin} & ViT-S/16 & \cellcolor[rgb]{1.000 1.000 1.000}46.2 & \cellcolor[rgb]{1.000 1.000 1.000}65.9 & \cellcolor[rgb]{1.000 1.000 1.000}49.6 & \cellcolor[rgb]{1.000 1.000 1.000}40.1 ~\\
              iBOT \cite{zhou2021ibot} & ViT-S/16 & \cellcolor[rgb]{0.742 0.890 0.876}49.4 & \cellcolor[rgb]{0.757 0.897 0.884}68.7 & \cellcolor[rgb]{0.717 0.880 0.865}53.3 & \cellcolor[rgb]{1.000 0.830 0.652}42.6 ~\\
              Mugs \cite{zhou2022mugs} & ViT-S/16 & \cellcolor[rgb]{0.668 0.859 0.841}\underline{49.8} & \cellcolor[rgb]{0.728 0.885 0.870}\underline{68.9} & \cellcolor[rgb]{0.692 0.869 0.853}\underline{53.5} & \cellcolor[rgb]{1.000 0.788 0.565}\underline{43.0} ~\\
              \textbf{DINO + Ours} & ViT-S/16 & \cellcolor[rgb]{0.742 0.890 0.876}49.4 & \cellcolor[rgb]{0.771 0.903 0.891}68.6 & \cellcolor[rgb]{0.729 0.885 0.871}53.2 & \cellcolor[rgb]{1.000 0.830 0.652}42.6 ~\\
              \textbf{iBOT + Ours} & ViT-S/16 & \cellcolor[rgb]{0.668 0.859 0.841}49.8 & \cellcolor[rgb]{0.714 0.879 0.863}69.0 & \cellcolor[rgb]{0.643 0.848 0.829}53.9 & \cellcolor[rgb]{1.000 0.798 0.587}42.9 ~\\
              \textbf{Mugs + Ours} & ViT-S/16 & \cellcolor[rgb]{0.557 0.812 0.788}\textbf{50.4} & \cellcolor[rgb]{0.557 0.812 0.788}\textbf{70.1} & \cellcolor[rgb]{0.557 0.812 0.788}\textbf{54.6} & \cellcolor[rgb]{1.000 0.745 0.478}\textbf{43.4} ~\\
              \bottomrule
          \end{tabular}
        \label{tab:obj_det}
      \end{minipage}
    \end{table*}
  
    \begin{table*}[t]
      \begin{minipage}[t]{0.39\textwidth}
        \caption{Quantitative results on video object segmentation.}
        \setlength\tabcolsep{3.6pt}
        \centering
          \begin{tabular}{ll|ccc}
              \toprule
              Methods & Arch. & $\mathcal{J}\&\mathcal{F}$ & $\mathcal{J}_{\mathrm{mean}}$ & $\mathcal{F}_{\mathrm{mean}}$ \\
              \midrule
              DINO \cite{dino} & ViT-S/16 & \cellcolor[rgb]{1.000 1.000 1.000}62.3 & \cellcolor[rgb]{1.000 1.000 1.000}60.7 & \cellcolor[rgb]{1.000 1.000 1.000}63.9 ~\\
              iBOT \cite{zhou2021ibot} & ViT-S/16 & \cellcolor[rgb]{0.996 0.998 0.998}62.4 & \cellcolor[rgb]{0.995 0.998 0.998}60.8 & \cellcolor[rgb]{0.997 0.999 0.998}64.0 ~\\
              Mugs \cite{zhou2022mugs} & ViT-S/16 & \cellcolor[rgb]{0.969 0.987 0.985}63.1 & \cellcolor[rgb]{0.968 0.987 0.985}61.4 & \cellcolor[rgb]{0.966 0.986 0.984}64.9 ~\\
              DINO  \cite{dino} & ViT-B/8 & \cellcolor[rgb]{0.649 0.851 0.832}\underline{71.4} & \cellcolor[rgb]{0.674 0.862 0.844}\underline{67.9} & \cellcolor[rgb]{0.628 0.842 0.822}\underline{74.9} ~\\
              \textbf{DINO + Ours} & ViT-S/16 & \cellcolor[rgb]{0.904 0.959 0.954}64.8 & \cellcolor[rgb]{0.891 0.954 0.948}63.1 & \cellcolor[rgb]{0.915 0.964 0.960}66.4 ~\\
              \textbf{iBOT + Ours} & ViT-S/16 & \cellcolor[rgb]{0.911 0.962 0.958}64.6 & \cellcolor[rgb]{0.901 0.958 0.952}62.9 & \cellcolor[rgb]{0.919 0.966 0.961}66.3 ~\\
              \textbf{Mugs + Ours} & ViT-S/16 & \cellcolor[rgb]{0.915 0.964 0.959}64.5 & \cellcolor[rgb]{0.887 0.952 0.946}63.2 & \cellcolor[rgb]{0.939 0.974 0.971}65.7 ~\\
              \textbf{DINO + Ours} & ViT-B/8 & \cellcolor[rgb]{0.557 0.812 0.788}\textbf{73.8} & \cellcolor[rgb]{0.557 0.812 0.788}\textbf{70.5} & \cellcolor[rgb]{0.557 0.812 0.788}\textbf{77.0} ~\\
              \bottomrule
          \end{tabular}
        \label{tab:video_seg}
      \end{minipage}
      \hspace{4mm}
      \begin{minipage}[t]{0.6\textwidth}
        \caption{Ablation studies over different components of the proposed method on COCOStuff-27 \& ImageNet-1k. The linear classifier is trained with a fast evaluation protocol.}
        \setlength\tabcolsep{4.0pt}
        \centering
        \begin{tabular}{llllll|cccc}
            \toprule
            \multirow{2}{*}{No.} & \multirow{2}{*}{$\bar{\mathcal{L}}_{align}$} & \multirow{2}{*}{$\mathcal{L}_{align}$} & \multirow{2}{*}{$\bar{\mathcal{L}}_{sc}$} & \multirow{2}{*}{$\mathcal{L}_{sc}$} & \multirow{2}{*}{OAF} &  \multicolumn{2}{c}{\textbf{Linear Seg.}} & \textbf{Linear Cls.} & \textbf{$k$-NN Cls.} \\
            &&&&&& mIoU & Acc & Acc@1 & Acc@1 \\
            \midrule
            1 & \cmark &  &  &  &  & \cellcolor[rgb]{1.000 1.000 1.000}40.1 & \cellcolor[rgb]{1.000 1.000 1.000}72.6 & \cellcolor[rgb]{1.000 0.937 0.871}69.9 & \cellcolor[rgb]{0.994 0.839 0.819}74.6 ~\\
            2 &  & \cmark &  &  &  & \cellcolor[rgb]{0.882 0.950 0.944}47.2 & \cellcolor[rgb]{0.832 0.929 0.920}77.1 & \cellcolor[rgb]{1.000 1.000 1.000}55.6 & \cellcolor[rgb]{1.000 1.000 1.000}52.1 ~\\
            3 & \cmark & \cmark &  &  &  & \cellcolor[rgb]{0.941 0.975 0.972}46.6 & \cellcolor[rgb]{0.847 0.935 0.927}77.0 & \cellcolor[rgb]{1.000 0.974 0.946}68.8 & \cellcolor[rgb]{0.997 0.930 0.921}73.7 ~\\
            4 & \cmark &  & \cmark &  &  & \cellcolor[rgb]{1.000 1.000 1.000}40.6 & \cellcolor[rgb]{1.000 1.000 1.000}72.7 & \cellcolor[rgb]{1.000 0.858 0.709}\underline{72.3} & \cellcolor[rgb]{0.982 0.538 0.480}\underline{77.6} ~\\
            5 &  & \cmark &  & \cmark &  & \cellcolor[rgb]{0.606 0.833 0.812}\underline{50.0} & \cellcolor[rgb]{0.587 0.825 0.803}\underline{78.7} & \cellcolor[rgb]{1.000 1.000 1.000}55.3 & \cellcolor[rgb]{1.000 1.000 1.000}52.0 ~\\
            6 & \cmark & \cmark & \cmark & \cmark &  & \cellcolor[rgb]{0.645 0.849 0.831}49.6 & \cellcolor[rgb]{0.618 0.838 0.817}78.5 & \cellcolor[rgb]{1.000 0.884 0.763}71.5 & \cellcolor[rgb]{0.984 0.588 0.537}77.1 ~\\
            7 & \cmark & \cmark & \cmark & \cmark & \cmark & \cellcolor[rgb]{0.557 0.812 0.788}\textbf{50.5} & \cellcolor[rgb]{0.557 0.812 0.788}\textbf{78.9} & \cellcolor[rgb]{1.000 0.745 0.478}\textbf{75.7} & \cellcolor[rgb]{0.980 0.498 0.435}\textbf{78.0} ~\\
            \bottomrule
        \end{tabular}
        \label{tab:ablation}
      \end{minipage}
    \end{table*}

    \begin{figure}[t]
      \centering
      \vspace{-2mm}
      \subfigure[Initialized with DINO]{
        \includegraphics[scale=0.27]{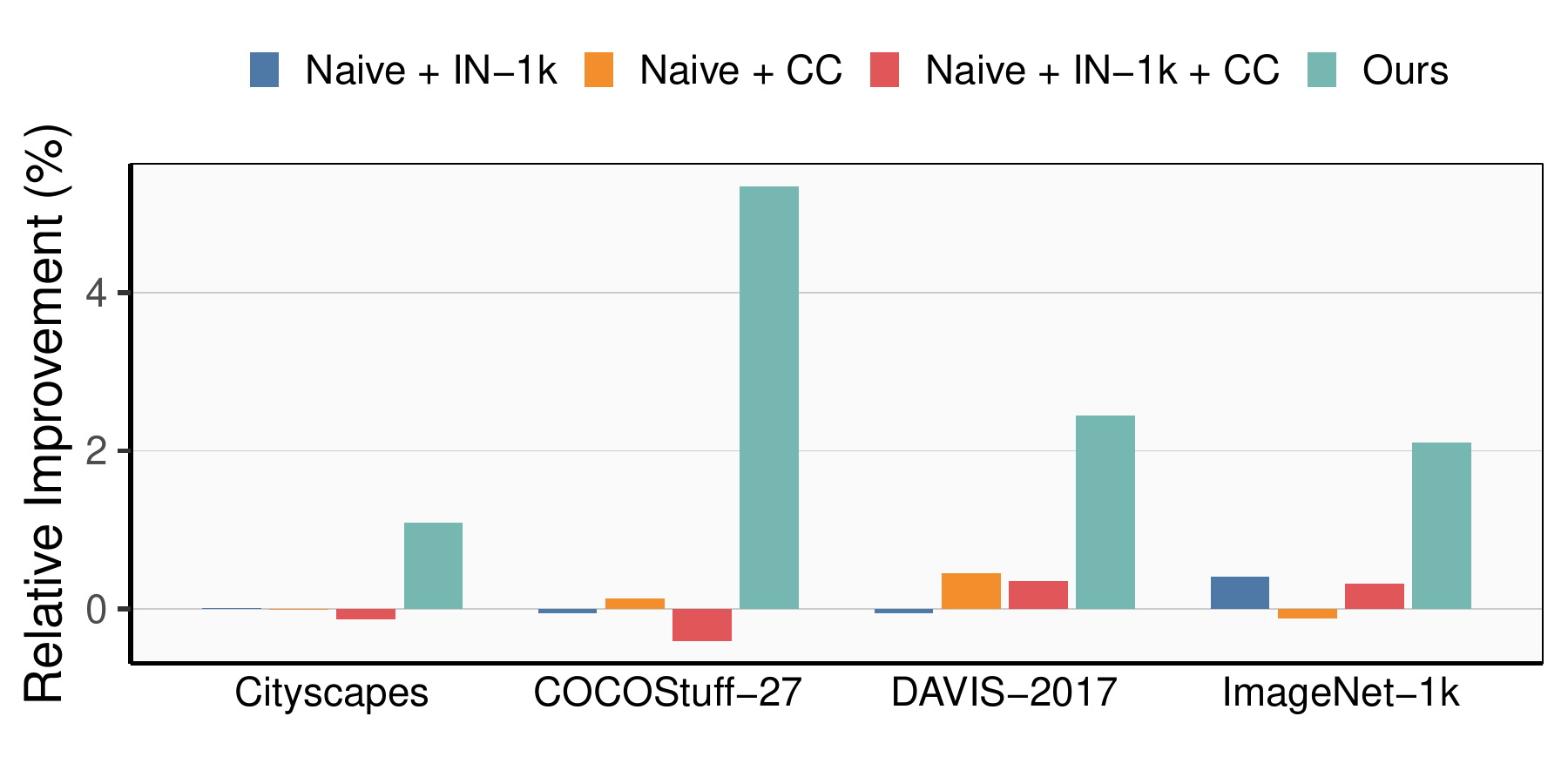}
      }
      \subfigure[Initialized with iBOT]{
        \includegraphics[scale=0.27]{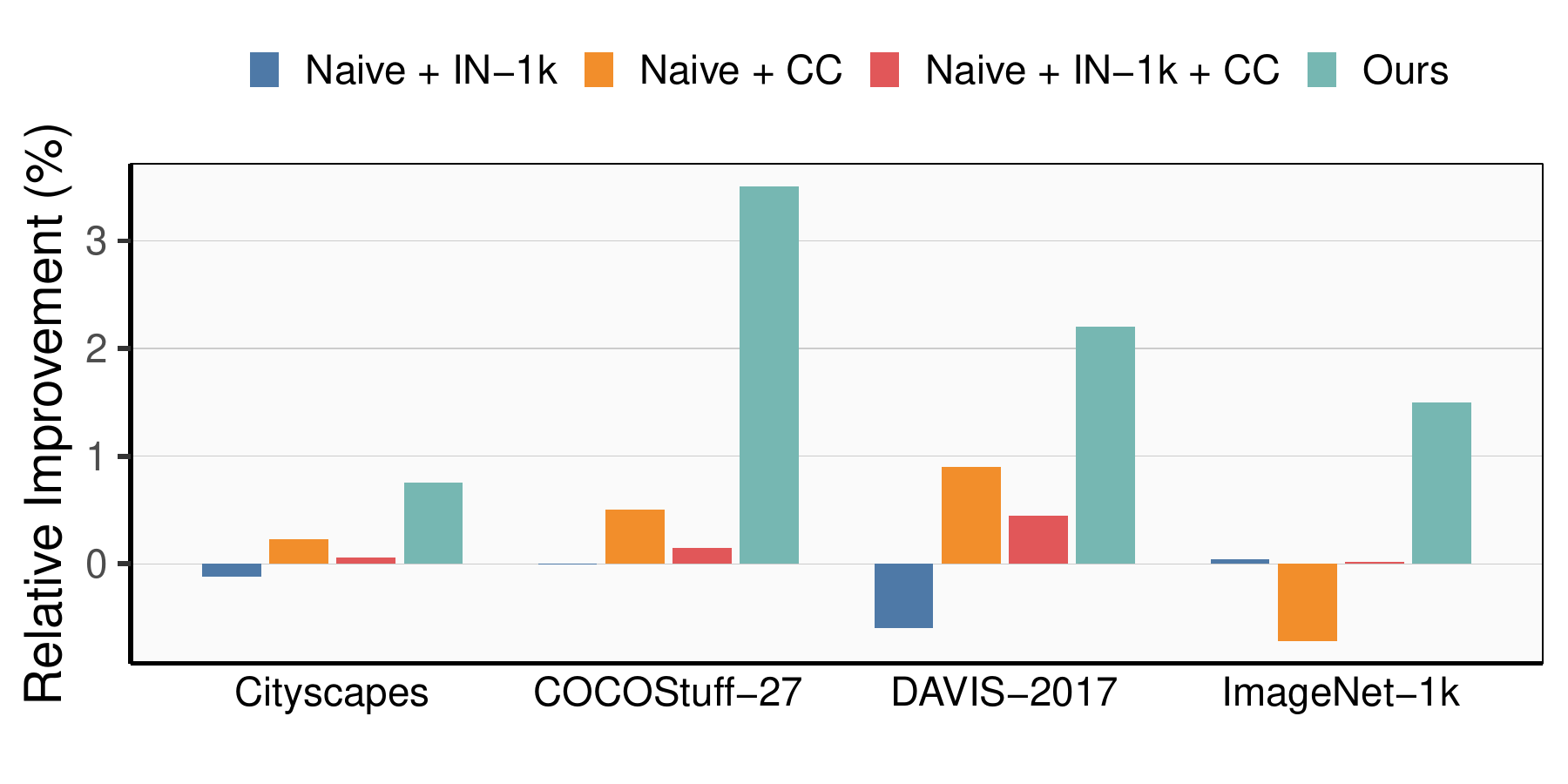}
      }
      \subfigure[Initialized with Mugs]{
        \includegraphics[scale=0.27]{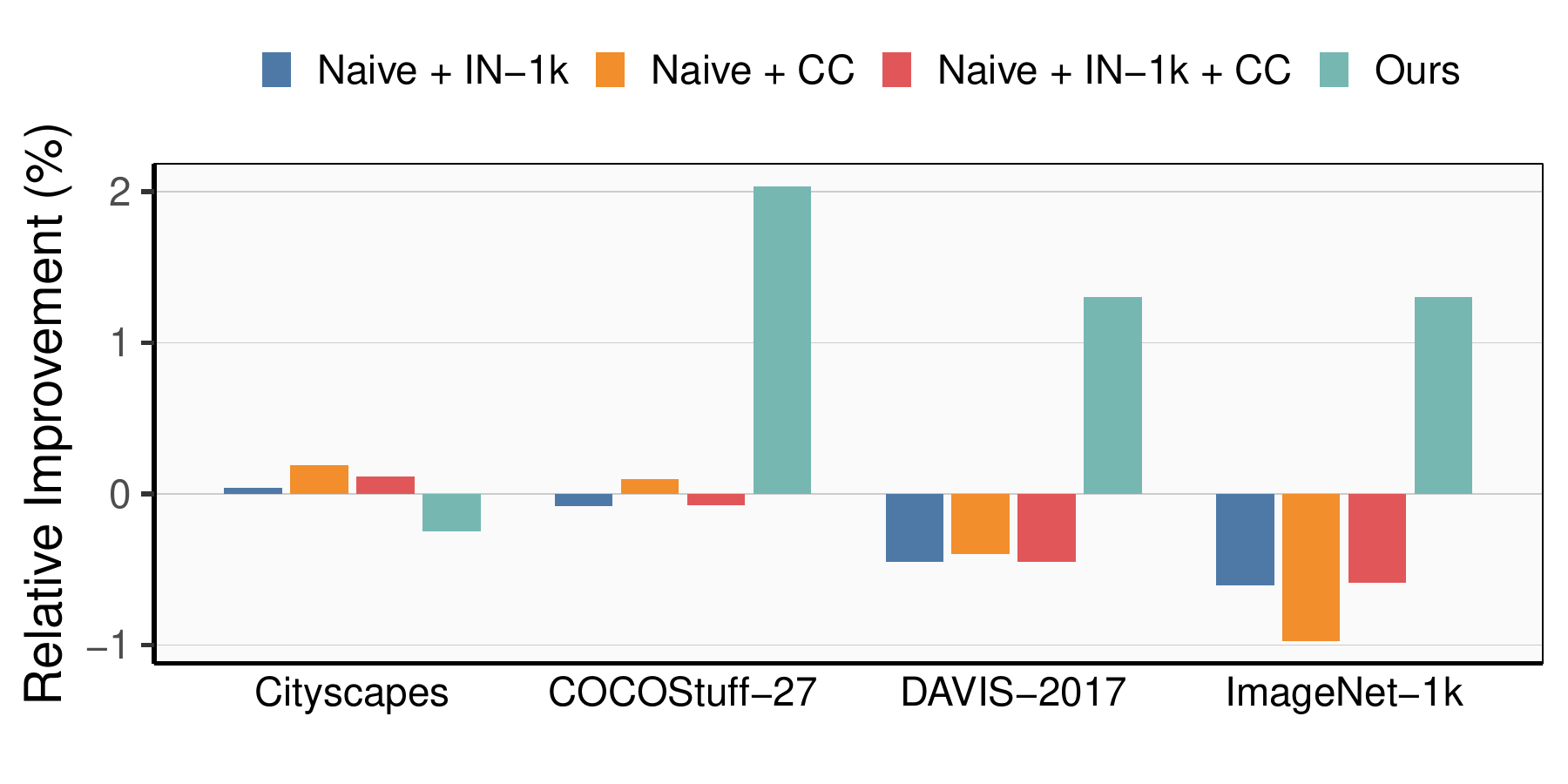}
      }
      \vspace{-2mm}
      \caption{Relative improvement of additional training epochs and the training dataset over different baselines. We report the mIoU improvement for FCN fine-tuning on Cityscapes and COCOStuff-27, Acc@1 for linear probing on ImageNet-1k, and $\mathcal{J}\&\mathcal{F}$ on DAVIS-2017.}
      \label{fig:longer_training}
    \end{figure}

    \begin{figure*}[h]
      \centering
      \includegraphics[scale=0.26]{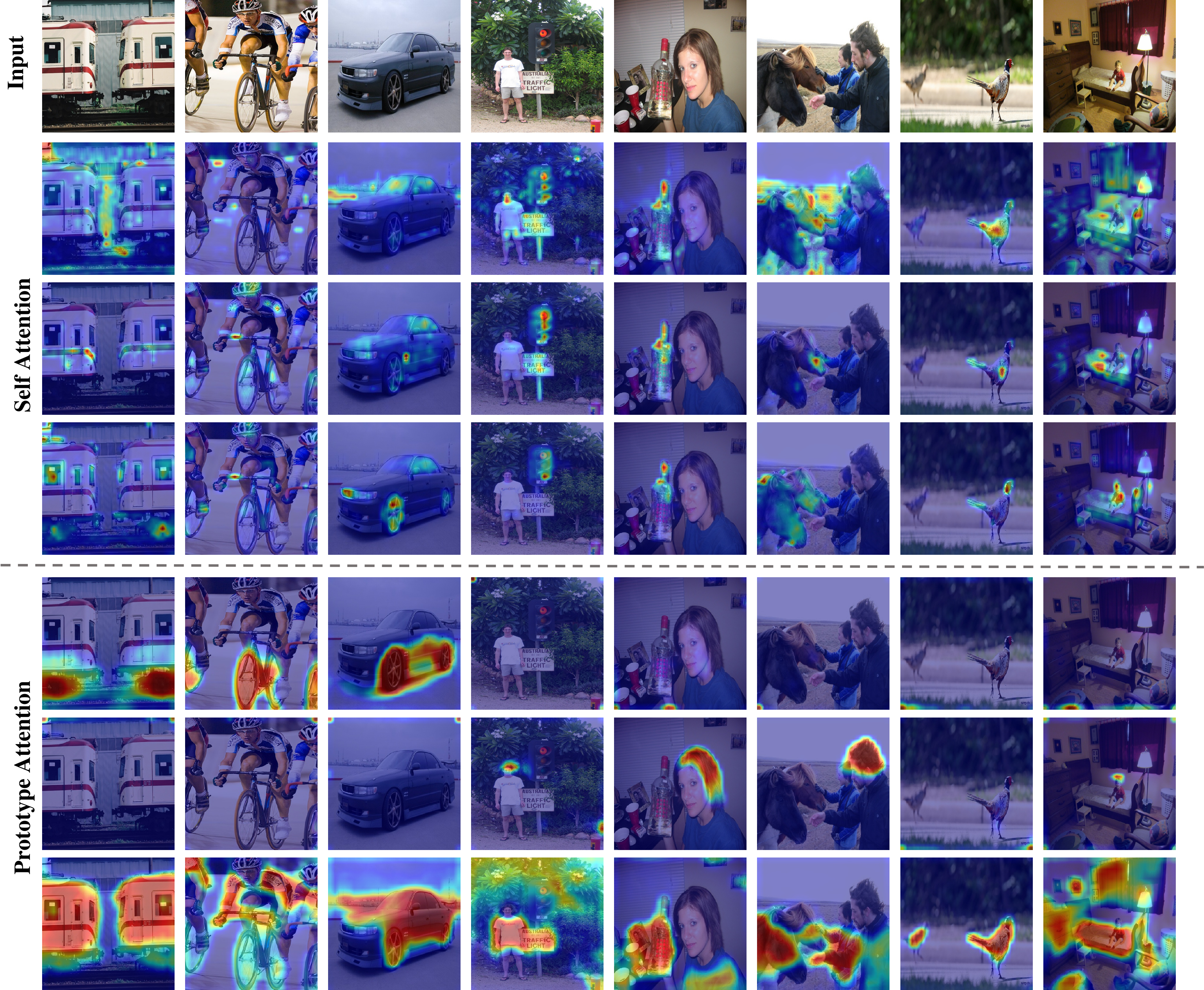}
      \caption{Self-attention maps (\textit{row 2-4}) and cross-attention maps (\textit{row 5-7}) of the OAF module. Best viewed in colors.}
      \label{fig:vis_attn}
    \end{figure*}
  
    \begin{figure*}[h]
      \centering
      \includegraphics[scale=0.28]{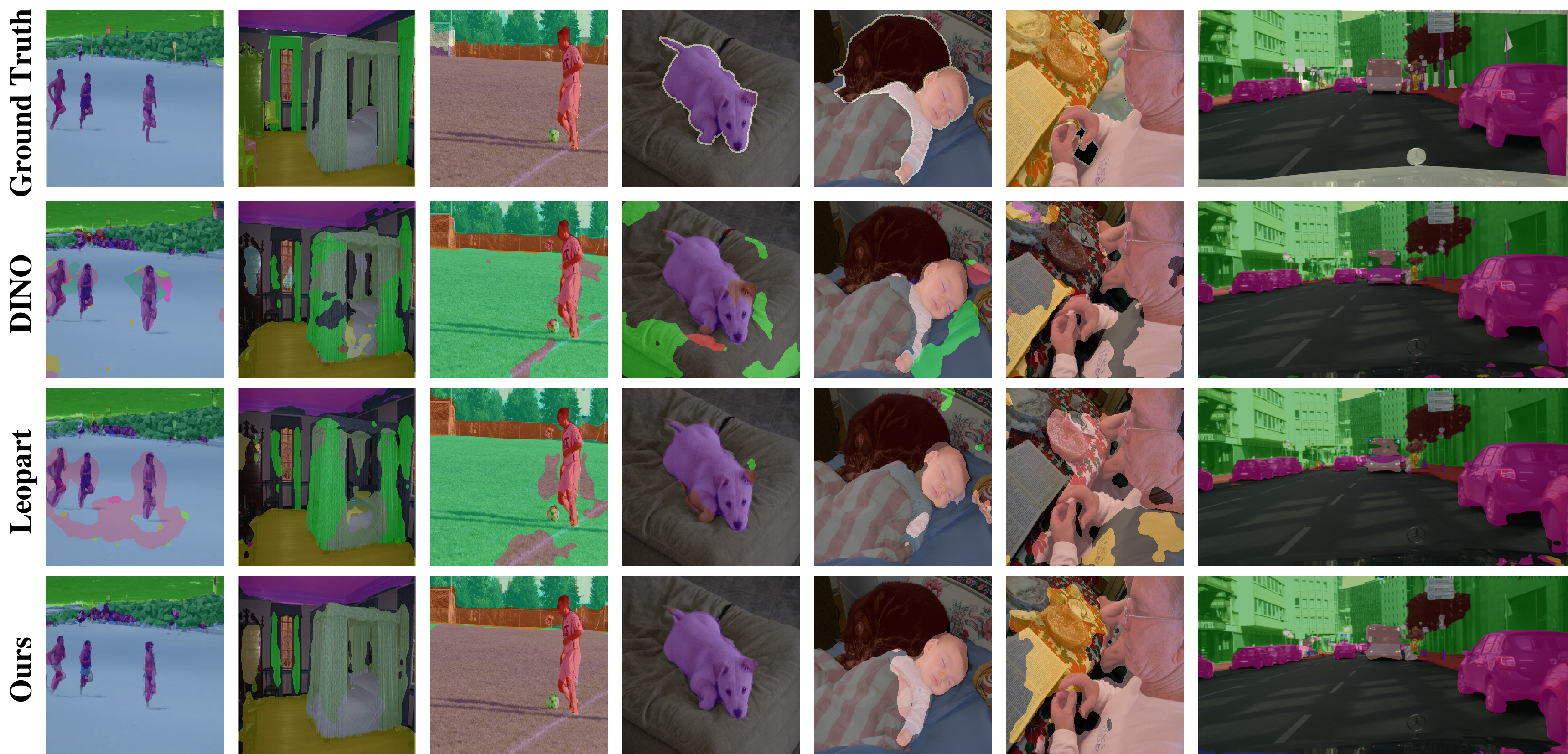}
      \caption{Visualizations of the segmentation results for FCN transfer learning. Best viewed in colors.}
      \label{fig:vis_seg}
    \end{figure*}

    \subsection{Main Results on ImageNet \& COCO}
    \label{subsec:main_results}
  
    \textbf{Evaluation protocol.}
    After the pretraining phase, the online branch is frozen as an embedding extractor. The quality of learned representations is quantitatively measured via two downstream tasks at different granularities: \textbf{1) Semantic segmentation.} A linear model is trained in a supervised manner to map the patch embeddings into the category predictions (27 classes). We utilize the COCOStuff-27 annotations as ground truth labels and report the mean intersection over union (mIoU) and pixel accuracy (Acc) performance. \textbf{2) Image classification.} Following the standard setup in SSL \cite{he2020momentum,swav,zhou2021ibot}, we build a $k$-NN classifier or train a linear classifier to map the $\texttt{[CLS]}$ embedding into the category (1000 classes). Top-1 and Top-5 accuracy are reported.

    \textbf{Optimization strategy.} For the semantic segmentation task, the images are randomly cropped and resized to $336 \times 336$ at the fine-tuning stage and resized to $448\times 448$ at the testing stage. The linear layer is optimized by an Adam optimizer, where the learning rate and the batch size are set to $3\times 10^{-3}$ and $256$, respectively. The model is trained for one epoch. For the image classification task, images are resized and center cropped to $224\times 224$. The linear image classifier is trained on ImageNet-1k for 100 epochs by an AdamW optimizer, where the initial learning rate and the batch size are set to $10^{-5}$ and $4096$, respectively. Details of the $k$-NN classifier are the same as \cite{he2020momentum}. 
  
    \textbf{Quantitative results.} The evaluation results are summarized in \Tbref{tab:main}. Consequently, we have the following observations: \textbf{1)} Our proposed semantic concentration techniques bring consistent improvements in various settings. Specifically, for image classification, our proposed method increases the top-1 accuracy by $1.1\%\sim 4.2\%$. Since previous dense SSL methods ignore the semantic concentration mechanism, the proposed method is more effective in dense tasks, improving the mIoU by $2.5\%\sim10.8\%$. \textbf{2)} Benefitting from the transformer structure, some image-level methods such as DINO and SERE are capable of generating reasonable dense representations. However, compared with dense methods, the performance of dense tasks could be further improved. \textbf{3)} Taking DINO and Leopart as examples, a single training loss might lead to a model preference for specific granularity tasks. Therefore, it is necessary to build a multi-granularity unified learning framework to improve model versatility.
  
  
    \subsection{Transfer Learning on Dense Tasks}
    \label{subsec:transfer}
    \textbf{Evaluation protocol.}
    We verify the generalization of learned representations on three dense tasks:
    \textbf{1) Semantic segmentation with fixed representations.} On top of the frozen encoder, we train a linear model or a fully convolutional network (FCN) to fit four datasets including COCOStuff-27 \cite{caesar2018coco}, PASCAL VOC \cite{everingham2010pascal}, ADE20k \cite{zhou2017scene}, and Cityscapes \cite{cordts2016cityscapes}.
    \textbf{2) Full fine-tuning for semantic segmentation with complex models.} Following \cite{zhou2021ibot,wangpoodle}, the encoder is initialized with SSL pretrained models, and combined with FPN or UPerNet to predict per-pixel classes from multi-level features. All parameters, including the encoder, are jointly fine-tuned.
    \textbf{3) Object detection \& instance segmentation.} 
    The whole network is fine-tuned on MS COCO \cite{lin2014microsoft}. Following the same protocol as \cite{zhou2021ibot}, we predict bounding boxes and instance masks simultaneously on top of Cascade Mask R-CNN \cite{cai2019cascade} for fairness.
    \textbf{4) Video object segmentation (VOS).} Following \cite{dino}, based on the frozen dense presentations, we leverage a $k$-NN strategy to propagate the initial mask and predictions of previous frames. Evaluation results on DAVIS-2017 \cite{pont20172017} are reported.
  
    \textbf{Quantitative results.} The semantic segmentation results are summarized in \Tbref{tab:semantic_seg} and \Tbref{tab:semantic_seg_ft}, the object detection and VOS results are shown in \Tbref{tab:obj_det} and \Tbref{tab:video_seg}, respectively. Consequently, we have the following conclusions: \textbf{1)} The consistent improvement over various tasks and metrics shows the generalization of dense representations generated by our model. Specifically, it improves the metrics by about $1\%\sim 4\%$ for semantic segmentation and $0.4\%\sim 1.6\%$ for objective detection. As for VOS, the region measure $\mathcal{J}_{\mathrm{mean}}$ and boundary measure $\mathcal{F}_{\mathrm{mean}}$ increase by $1.8\%\sim 2.6\%$ and $0.8\%\sim 2.5\%$, respectively.
    \textbf{2)} Fully fine-tuning with more complex models leads to significant improvements in semantic segmentation performance, especially for complex datasets with greater differences from the pretraining data. For instance, it improves from 34\% to 48\% on ADE20k, and from 55\% to 79\% on Cityscapes. After fully fine tuning the backbone, the performance gap between different initialization methods tends to narrow. Nevertheless, our method still maintains a performance advantage, improving results by about 1\%-4\% in most cases.
    \textbf{3)} It is worth noting that VOS requires instance discriminative features while the other two tasks prefer category-aware features. The consistent improvements reveal that the proposed semantic concentration mechanisms can take into account the instance-level information simultaneously.
  
  

  Notably, VOS benefits from instance-discriminative features, whereas semantic segmentation and object detection tend to favor category-aware features. The across-the-board improvements underscore that the semantic concentration mechanisms we propose are adept at accommodating instance-level information, enhancing the model's versatility and effectiveness in diverse tasks.
    \subsection{Ablation Studies}
    \label{subsec:ablations}
    Next, we explore how the components of our framework affect the performance. To reduce the computation cost for linear fune-tining on ImageNet-1k, we apply a fast evaluation protocol: randomly sample $30k$ images and update the model for one epoch. All experiments are conducted based on DINO. The evaluation results are provided in \Tbref{tab:ablation} and detailed discussions are attached as follows:
  

    \textbf{Effect of alignment losses.}
    Comparing \textit{line 1} and \textit{line 2} in \Tbref{tab:ablation}, it can be seen that a single loss significantly reduces performance on another task. Therefore, it is necessary to jointly consider both image-level and dense representation learning. Besides, \textit{line 3} tells that combining two alignment losses cannot form a positive transfer between two types of representations, resulting in a trade-off between tasks.
  
  
    \textbf{Effect of image-level semantic concentration $\bar{\mathcal{L}}_{sc}$.} Comparing \textit{line 1} and \textit{line 4}, we find that simply alignment of the $k$-NN samples improves the top-1 accuracy by $2.4\%\sim 3\%$ on the classification task, which shows the advance of explicit semantic concentration over implicit one. 
  
  
    \textbf{Effect of dense semantic concentration $\mathcal{L}_{sc}$.} Comparing \textit{line 2} and \textit{line 5}, $\mathcal{L}_{sc}$ improves the mIoU by $3.4\%$, from which can be seen that the proposed CoTAP loss successfully distills the noisy correspondences to enhances the semantic concentration. Comparison between \textit{line 3} and \textit{line 6} further supposes the soundness of the above conclusions. However, similar to the experiment in \textit{line 3}, our proposed losses still suffer from the trade-off between different tasks.

    \textbf{Effect of Object-Aware Filter (\textbf{OAF}).} Comparing \textit{line 6} and \textit{line 7}, the proposed OAF block consistently improves the performance in multiple settings, and surpasses the model trained with single-granularity losses (\textit{line 4} and \textit{line 5}). Therefore, to break the trade-off between different tasks, it is a feasible path to enhance the shared pattern. From this perspective, how to generate effective views is worth exploring in future work.
  
  
    \subsection{Effects of Extended Training and Dataset}
    \label{subsec:longer_training}
    Note that our proposed method is trained for extra epochs and an additional dataset COCO. 
    To eliminate the potential fairness issue in experimental comparisons, we control these factors by including baselines with extended training and an extra dataset COCO. Concretely, we keep training the pretrained baselines (DINO, iBOT, and Mugs) with the corresponding loss for the same epochs. We provide three dataset settings for the extra baselines: \textbf{(a)} ImageNet-1k only; \textbf{(b)} COCO only; \textbf{(c)} a 1:1 mixture of ImageNet-1k and COCO, which is exactly the same data configuration as in ours. The resulting models are evaluated on image classification and semantic segmentation. 
  
    The relative performance improvement over the initial model is visualized in \Fgref{fig:longer_training}, from which we have following conclusions: \textbf{1)} On most downstream tasks, extending the training time has little impact on the three baselines. This is because the original training schedule (800 epochs) already brings the models close to convergence on the original loss, making it difficult for a few additional epochs (about 9.3 epochs) to yield significant improvements. In contrast, we introduces the OAF module and two SC losses, effectively altering the representation hypothesis space and reshaping the loss landscape, where the model has not yet converged, so even short additional training can lead to substantial improvements. \textbf{2)} The training dataset affects the trade-off between tasks. Compared with ImageNet-1k, models further trained on COCO perform better on dense tasks, but suffer from a performance drop in image classification. Training on the mixed dataset falls in between. The main reason lies in the weakened prior of the shared pattern. Ours alleviating this trade-off via OAF.
  
    \subsection{Qualitative Results}
    \label{subsec:vis}
    \textbf{Attention maps.} In \Fgref{fig:vis_attn}, we visualize two types of attention maps involved in our method: self-attention maps (\textit{row 2-4}), and prototype attention maps used in the object-aware filter (\textit{row 5-7}). Consequently, it can be observed that although self-attention successfully captures the key points, it might fail to reflect the shared patterns under complicated scenes. For example, in the $7$-th column, the attention map focuses on the large object but ignores the small one. By contrast, the object-prototype attention highlights the shared patterns, such as wheels in the $5$-th row, and hairs in the $6$-th row. Additionally, as shown in the last row, by learning a more general concept like "foreground objects", object-prototype attention is effective in capturing the main body of objects, especially for small objects. See  Appendix {\color{blue}C} for more visualizations.
  

    \textbf{Segmentation results.} To intuitively demonstrate the improvements, we provide some visualization results in \Fgref{fig:vis_seg}. The models are implemented with ViT-S/16 and an FCN block. The predictions of DINO suffer from chaotic features. Although Leopart significantly improves the dense representations, it might segment an object into multiple parts due to the over-dispersion. In contrast, our proposed method avoids this problem and leads to finer details.

  \section{Conclusion \& Future Work}
  \label{sec:conclusion}
  In this paper, we focus on self-supervised learning for dense vision tasks. From practice and theory, we verify that the lack of semantic concentration mechanism in dense SSL leads to over-dispersion in downstream tasks. To address this issue, we propose a self-distillation framework to introduce the non-strict spatial alignment and highlight the shared patterns. Technically, we propose a ranking-based loss called \textit{\textbf{Co}ntinuous-\textbf{T}arget \textbf{AP} (\textbf{CoTAP})} loss to mitigate the effect of noisy and imbalanced pseudo-labels. The key idea is to extend the robust AP loss to continuous targets, making full use of the properties of pseudo-labels such as the noise distribution. Furthermore, we propose the \textit{\textbf{O}bject-\textbf{A}ware \textbf{F}ilter (\textbf{OAF})} module to focus on object-aware patterns. By mapping the image features to the object-prototype-span space, the shared patterns are more noticeable. Experiments on real-world benchmarks validate the advantages of the proposed framework.
  One limitation of this paper is the requirement of object-centric data to extract clean object prototypes. In the future, we will explore the adaptive extraction of object prototypes from uncurated images to fit large-scale data.



%



\ifCLASSOPTIONcaptionsoff
  \newpage
\fi



%
\bibliographystyle{abbrv}
\bibliography{ref}

%

\begin{IEEEbiography}
	[{\includegraphics[width=1in,height=1.25in,clip,keepaspectratio]{./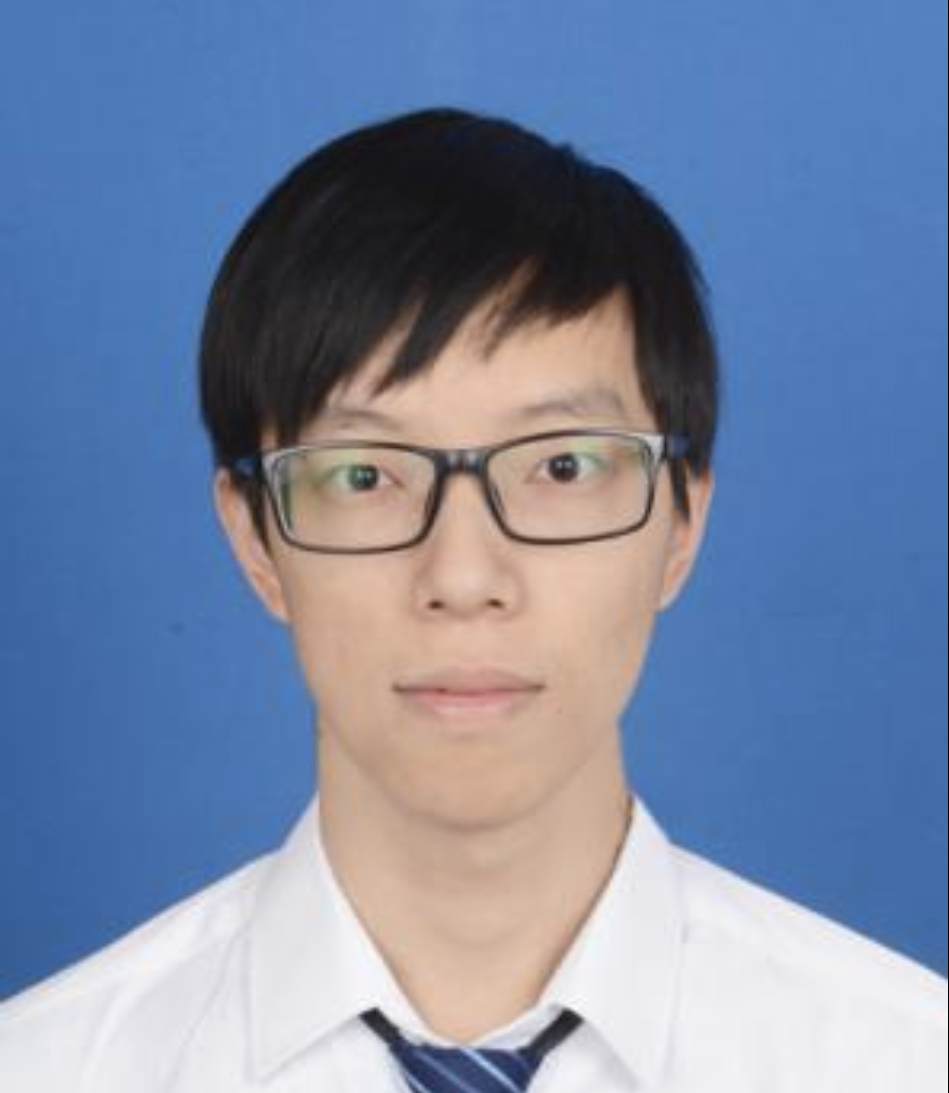}}]
  {Peisong Wen} received the B.S. degree in intelligent science and technology from Nankai University in 2020 and the Ph.D. degree in computer science from University of Chinese Academy of Sciences in 2025. He is currently a Postdoc fellow with University of Chinese Academy of Sciences. His research interest is machine learning and computer vision. He has authored or coauthored 20 academic papers in top-tier international journals and conferences (including T-PAMI, IJCV, NeurIPS, ICML, CVPR, ICCV, etc). 
\end{IEEEbiography}

\begin{IEEEbiography}
	[{\includegraphics[width=1in,height=1.25in,clip,keepaspectratio]{./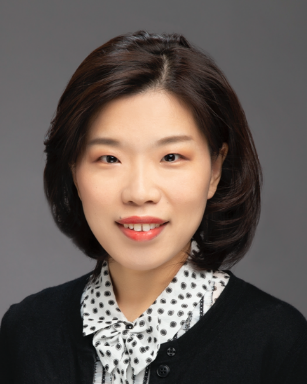}}]{Qianqian Xu} received the B.S. degree in computer science from China University of Mining and Technology in 2007 and the Ph.D. degree in computer science from University of Chinese Academy of Sciences in 2013. She is currently a Professor with the Institute of Computing Technology, Chinese Academy of Sciences, Beijing, China. Her research interests include statistical machine learning, with applications in multimedia and computer vision. She has authored or coauthored 100+ academic papers in prestigious international journals and conferences (including T-PAMI, IJCV, T-IP, NeurIPS, ICML, CVPR, AAAI, etc). Moreover, she serves as an associate editor of IEEE Transactions on Circuits and Systems for Video Technology, IEEE Transactions on Multimedia, and ACM Transactions on Multimedia Computing, Communications, and Applications.
\end{IEEEbiography}

\begin{IEEEbiography}
    [{\includegraphics[width=1in,height=1.25in,clip,keepaspectratio]{./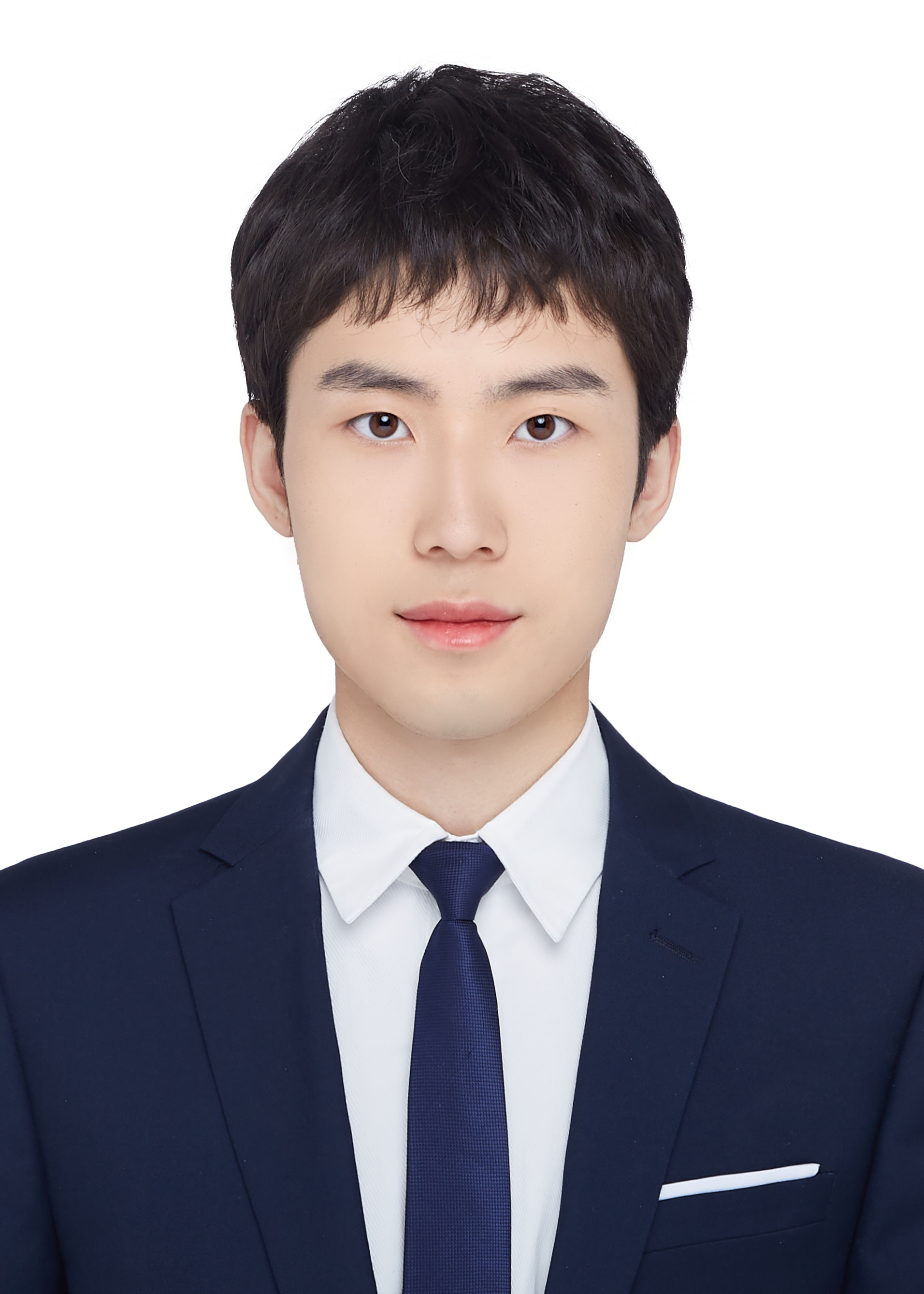}}]
  {Siran Dai} received the B.S. degree in robotic engineering from Beijing University of Technology in 2022. He is currently a Ph.D. student at University of Chinese Academy of Sciences. His research interest includes machine learning, computer vision, and information theory.
\end{IEEEbiography}

\vspace{1cm}
\begin{IEEEbiography}
	[{\includegraphics[width=1in,height=1.25in,clip,keepaspectratio]{./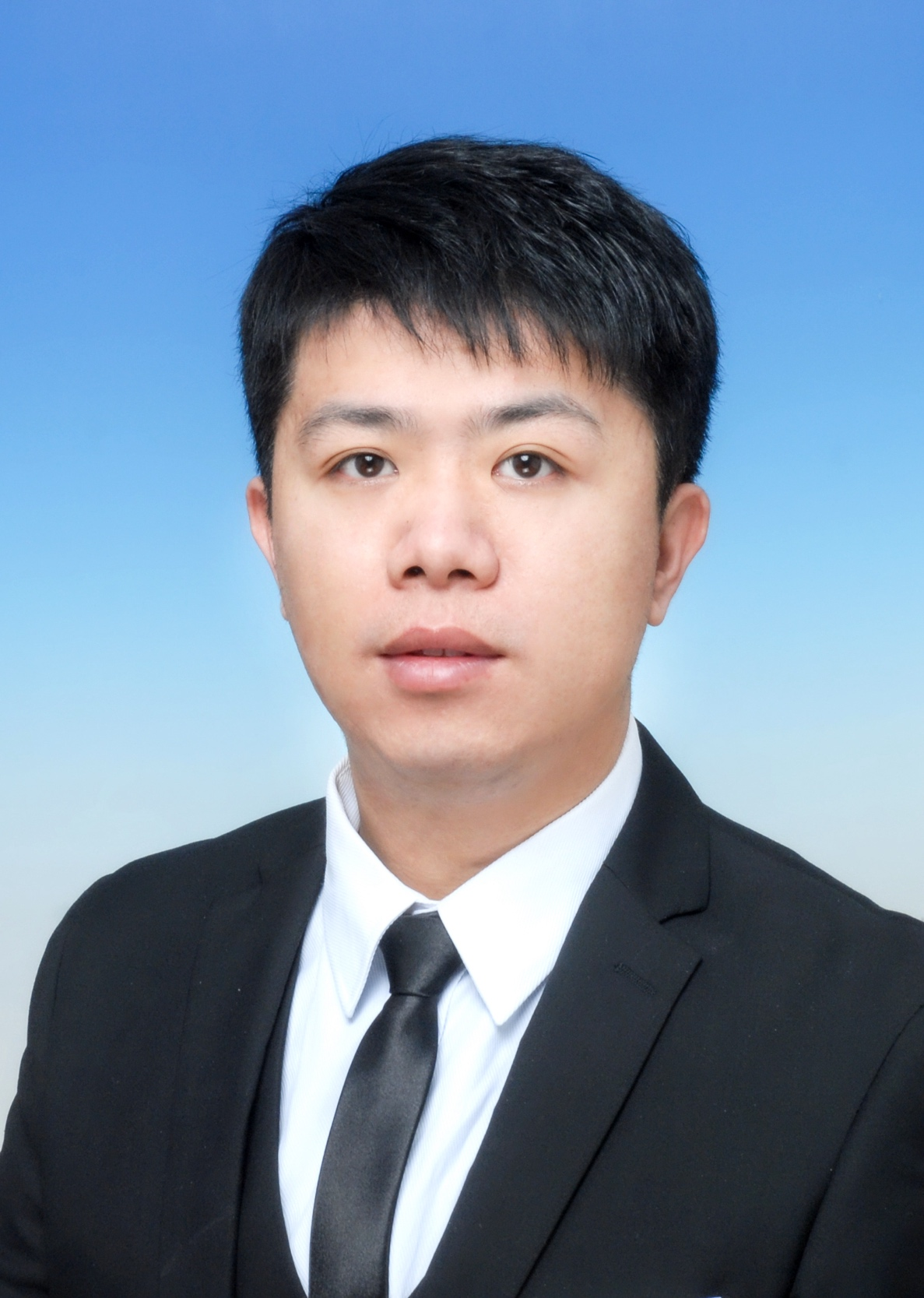}}]{\textbf{Runmin Cong}} received the Ph.D. degree in information and communication engineering from Tianjin University, Tianjin, China, in June 2019.
	He is currently a Professor with the School of Control Science and Engineering, Shandong University (SDU), Jinan, China. His research interests include computer vision, multimedia understanding, content enhancement, machine learning, etc. He has published more than 90 papers in prestigious international journals and conferences, including 2 ESI hot papers (Top 0.1\%), 15 ESI highly cited papers (Top 1\%). In addition, 30 China patents have been authorized.
	Dr. Cong was a recipient of the Young Elite Scientist Sponsorship Program by the China Association for Science and Technology, the Hong Kong Scholars Program, the IEEE ICME Best Student Paper Runner-Up Award, the ACM SIGWEB China Rising Star Award. He serves as an Associate Editor of the IEEE Transactions on Neural Networks and Learning Systems, Neurocomputing, IEEE Journal of Oceanic Engineering, the Youth Editorial Board Member of the CAAI Transactions on Intelligence Technology and the Area Chair/SPC/PC Member of NeurIPS, CVPR, ICML, ICLR, ICCV, ECCV, SIGGRAPH, ACM MM, AAAI, IJCAI, ICME, ICPR, ICMR, etc.
\end{IEEEbiography}

\begin{IEEEbiography}
	[{\includegraphics[width=1in,height=1.25in,clip,keepaspectratio]{./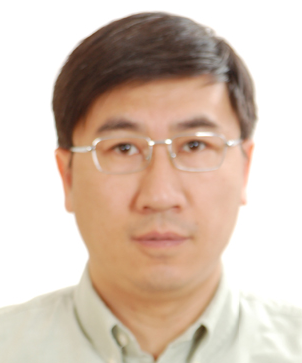}}]{Qingming Huang} is a chair professor in the University of Chinese Academy of Sciences and an adjunct research professor in the Institute of Computing Technology, Chinese Academy of Sciences. He graduated with a Bachelor degree in Computer Science in 1988 and Ph.D. degree in Computer Engineering in 1994, both from Harbin Institute of Technology, China. His research areas include multimedia computing, image processing, computer vision and pattern recognition. He has authored or coauthored more than 400 academic papers in prestigious international journals and top-level international conferences. He was the associate editor of IEEE Trans. on CSVT and Acta Automatica Sinica, and the reviewer of various international journals including IEEE Trans. on PAMI, IEEE Trans. on Image Processing, IEEE Trans. on Multimedia, etc. He is a Fellow of IEEE and has served as general chair, program chair, area chair and TPC member for various conferences, including ACM Multimedia, CVPR, ICCV, ICML, AAAI, ICMR, PCM, BigMM, PSIVT, etc.
\end{IEEEbiography}





\clearpage
\onecolumn
\appendices

\section*{\textcolor{blue}{\Large{Contents}}}
\startcontents[appendices]
\printcontents[appendices]{l}{1}{\setcounter{tocdepth}{3}}
\newpage

\section{Detailed Proofs}
\label{app:proofs}
\ClsCorrect*
\begin{proof}
  Without loss of generality, let $y(\xi) = 1$. According to the definition of $G_f$, $\xi$ is correctly classified if
  \begin{equation*}
    \|\zi - \bm{\mu}_k\| \geq \|\zi - \bm{\mu}_1\|, ~~\forall 2 \leq k \leq K.
  \end{equation*}
  Denote $\mathbb{E}_{j\in \mathcal{T}_2^{(i)}, l\in \mathcal{D}^{(k)}}[\bm{z}_j^\top \bm{z}_l] = e_k$. Recall that $\bm{\mu}_k = \mathbb{E}_{j\in \mathcal{D}^{(k)}}[\zj]$. According to the Young's inequality, for any $0 \leq \beta \leq 1$, the left side could be upper bounded:
  \begin{equation}
  \label{eq:acc_temp_1}
    \begin{aligned}
      \|\zi - \bm{\mu}_k\|^2 
      &\geq (1 - \beta)\left\|\zi - \mathbb{E}_{j\in \mathcal{T}_2^{(i)}}[\zj]\right\|^2 
        + (1 - 1 / \beta) \left\|\mathbb{E}_{j\in \mathcal{T}_2^{(i)}}[\zj] - \mathbb{E}_{l\in \mathcal{D}^{(k)}}[\bm{z}_l]\right\|^2 \\
      &= (1 - \beta)\left\|\zi - \mathbb{E}_{j\in \mathcal{T}_2^{(i)}}[\zj]\right\|^2
      - 2(1 - 1 / \beta) \mathbb{E}_{j\in \mathcal{T}_2^{(i)}, l\in \mathcal{D}^{(k)}}[\bm{z}_j^\top \bm{z}_l] \\
      &~~~~~~~~+ (1 - 1 / \beta)\left\|\mathbb{E}_{j\in \mathcal{T}_2^{(i)}}[\zj]\right\|^2 + (1 - 1 / \beta)\left\|\mathbb{E}_{j\in \mathcal{D}^{(k)}}[\zj]\right\|^2 \\
      &\geq (1 - \beta)\left\|\zi - \mathbb{E}_{j\in \mathcal{T}_2^{(i)}}[\zj]\right\|^2
      - 2(1 - 1 / \beta) e_k
      + 2(1 - 1 / \beta)r^2
    \end{aligned}
  \end{equation}
  Similarly, for any $0 \leq \alpha \leq 1$, we have
  \begin{equation}
  \label{eq:acc_temp_2}
    \begin{aligned}
      \|\zi - \bm{\mu}_1\|^2 
      &\leq (1 + \alpha)\left\|\zi - \mathbb{E}_{j\in \mathcal{T}_2^{(i)}}[\zj]\right\|^2 
        + (1 + 1 / \alpha) \left\|\mathbb{E}_{j\in \mathcal{T}_2^{(i)}}[\zj] - \mathbb{E}_{l\in \mathcal{D}^{(1)}}[\bm{z}_l]\right\|^2 \\
      &= (1 + \alpha)\left\|\zi - \mathbb{E}_{j\in \mathcal{T}_2^{(i)}}[\zj]\right\|^2 - 2(1 + 1 / \alpha) \mathbb{E}_{j\in \mathcal{T}_2^{(i)}, l\in \mathcal{D}^{(1)}}[\bm{z}_j^\top \bm{z}_l] \\
      &~~~~~~~~+ (1 + 1 / \alpha)\left\|\mathbb{E}_{j\in \mathcal{T}_2^{(i)}}[\zj]\right\|^2 + (1 + 1 / \alpha)\left\|\mathbb{E}_{j\in \mathcal{D}^{(1)}}[\zj]\right\|^2 \\
      &\leq (1 + \alpha)\left\|\zi - \mathbb{E}_{j\in \mathcal{T}_2^{(i)}}[\zj]\right\|^2 - 2(1 + 1 / \alpha) e_1 + 2(1 + 1 / \alpha)r^2
    \end{aligned}
  \end{equation}
  Combining \Eqref{eq:acc_temp_1} and \Eqref{eq:acc_temp_2}, we get
  \begin{equation}
    \begin{aligned}
      \|\zi - \bm{\mu}_k\|^2 - \|\zi - \bm{\mu}_1\|^2
      &\geq -(\alpha + \beta) \left\|\zi - \mathbb{E}_{j\in \mathcal{T}_2^{(i)}}[\zj]\right\|^2
      - 2(1 / \alpha + 1 / \beta)r^2 \\
      &~~~~~~~~+ 2(1 + 1 / \alpha) e_1 - 2(1 - 1 / \beta) e_k \\
      &= 2(e_1 - e_k) - \left(\alpha\left\|\zi - \mathbb{E}_{j\in \mathcal{T}_2^{(i)}}[\zj]\right\|^2 + 2 / \alpha \cdot (r^2 - e_1) \right) \\
      &~~~~~~~~- \left(\beta\left\|\zi - \mathbb{E}_{j\in \mathcal{T}_2^{(i)}}[\zj]\right\|^2 + 2 / \beta \cdot (r^2 - e_k) \right).
    \end{aligned}
  \end{equation}
  By setting $\alpha = \frac{\sqrt{2(r^2 - e_1)}}{\big\|\zi - \mathbb{E}_{j\in \mathcal{T}_2^{(i)}}[\zj]\big\|}$, $\beta = \frac{\sqrt{2(r^2 - e_k)}}{\big\|\zi - \mathbb{E}_{j\in \mathcal{T}_2^{(i)}}[\zj]\big\|}$, we get
  \begin{equation}
    \begin{aligned}
      \|\zi - \bm{\mu}_k\|^2 - \|\zi - \bm{\mu}_1\|^2
      &\geq 2(e_1 - e_k) - 2\sqrt{2}\left(\sqrt{r^2 - e_1} + \sqrt{r^2 - e_k}\right) \cdot \big\|\zi - \mathbb{E}_{j\in \mathcal{T}_2^{(i)}}[\zj]\big\| \\
      &\geq 2\left(4r - \sqrt{2}\left(\sqrt{r^2 - e_1} + \sqrt{r^2 - e_k}\right)\right) \cdot \big\|\zi - \mathbb{E}_{j\in \mathcal{T}_2^{(i)}}[\zj]\big\| \\
      &\geq 2\left(4r - 2\sqrt{2(r^2 - e_k)}\right) \cdot \big\|\zi - \mathbb{E}_{j\in \mathcal{T}_2^{(i)}}[\zj]\big\|.
    \end{aligned}
  \end{equation}
  The proof is completed by substituting $e_k \geq -r^2$ into the above inequality.
\end{proof}

\SumDelta*
\begin{proof}
    Consider the Lagrangian function of \OP{1}:
    \begin{equation*}
      \mathcal{L}_{Lag} = \mathcal{L}_{SSL} + \frac{1}{2N^2} \sum_{i.j=1}^N \lambda_{ij} (\vij \|\zi - \zj\|^2 - 1),
    \end{equation*}
    where $\vij = 1 / (\phi_f^2\|\xi - \xj\|^2 + \epsilon)$, $\epsilon$ is a small constant to avoid numerical problem, $\lambda_{ij} \geq 0$ are the multiplier. Let $\bm{A}$ be the adjacency matrix such that its element $a_{ij} = \whatij + \lambda_{ij} \vij$, $\bm{L} = \bm{D} - \bm{A}$ be the Laplacian matrix, where $\bm{D} = diag(\sum_{j}a_{1j},\cdots,\sum_{j}a_{nj})$ is the degree matrix, \ie, 
    \begin{equation}
    \bm{L}_{ij} = \left\{
        \begin{array}{cc}
        \begin{aligned}
            & \sum_{k\neq i} a_{ik}, & j = i, \\
            & -a_{ij}, & j \neq i.
        \end{aligned} 
        \end{array}
      \right.
    \end{equation}

    According to the spectral analysis theory for graphs, $\mathcal{L}_{Lag}$ can be reformulated as:
    \begin{equation*}
      \begin{aligned}
        \mathcal{L}_{Lag}(\bm{Z}, \lambda) = \frac{1}{N^2} Tr(\bm{Z}\bm{L}\bm{Z}^\top) + \alpha \cdot Tr\left(\Big(\bm{C}_f - \frac{r^2}{d}\mathbf{I}\Big)\Big(\bm{C}_f - \frac{r^2}{d}\mathbf{I}\Big)^\top\right) - \frac{1}{2N^2} \sum_{i,j=1}^N \lambda_{ij} \vij
      \end{aligned}
    \end{equation*}
    Consider the Karush-Kuhn-Tucker (KKT) conditions:
    \begin{equation}
    \label{eq:kkt}
      \begin{aligned}
        \frac{\partial \mathcal{L}_{Lag}}{\partial \bm{Z}} &= \frac{2}{N^2} \bm{Z}\bm{L} + \frac{4\alpha}{N} \bm{C}_f \bm{Z} - \frac{4\alpha r^2}{dN} \bm{Z} = 0 \Rightarrow \bm{Z}\bm{L} + 2 \alpha N (\bm{C}_f - \frac{r^2}{d}\mathbf{I})\bm{Z} = 0, \\
        \frac{\partial \mathcal{L}_{Lag}}{\partial \lambda_{ij}} &\propto \vij \|\zi - \zj\|^2 - 1 = 0 \text{ or } \lambda_{ij} = 0.
      \end{aligned}
    \end{equation}
    Here the first equation is due to the following facts:
    \begin{equation*}
      \begin{aligned}
        Tr\left(\Big(\bm{C}_f - \frac{r^2}{d}\mathbf{I}\Big)\Big(\bm{C}_f - \frac{r^2}{d}\mathbf{I}\Big)^\top\right) = Tr(\bm{C}_f^\top \bm{C}_f) - \frac{2r^2}{d}\cdot Tr(\bm{C}_f) + \frac{r^4}{d}
      \end{aligned}
    \end{equation*}
    
    \begin{equation*}
      \begin{aligned}
        Tr(\bm{C}_f^\top \bm{C}_f) 
        = \frac{1}{N^2} \sum_{i=1}^N (\zi^\top \zi)^2 + \frac{1}{N^2}\sum_{i=1}^N\sum_{j\neq i}(\zi^\top \zj)^2 \Rightarrow 
        \frac{\partial Tr(\bm{C}_f^\top \bm{C}_f)}{\partial \zi}
        = \frac{4}{N^2} \sum_{i=1}^N \zi \zi^\top \zi + \frac{4}{N^2}\sum_{j\neq i}\zj \zi^\top \zj
        = \frac{4}{N} \bm{C}_f \zi 
      \end{aligned}
    \end{equation*}

  Next, \Eqref{eq:kkt} can be reformulated as:
  \begin{equation}
    - \bm{Z}\bm{L} = 2 \alpha N (\bm{C}_f - \frac{r^2}{d}\mathbf{I})\bm{Z}.
  \end{equation}
  Consider the $i$-th column of the left side:
  \begin{equation}
    -\sum_{j=1}^N \bm{L}_{i,j} \zj = \sum_{j\neq i} a_{ij} \zj - \sum_{k\neq i} a_{ik} \zi.
  \end{equation}
  Similarly, for the right side, we have
  \begin{equation}
    2\alpha N (\bm{C}_f - \frac{r^2}{d}\mathbf{I})\cdot \zi = 2\alpha \sum_{j\neq i} \zj \zj^\top \zi + 2\alpha \cdot \zi\zi^\top \zi - \frac{2\alpha N r^2}{d} \zi
  \end{equation}
  Combining the above results we get:
  \begin{equation}
    0 = \sum_{j\neq i} \zj \zj^\top \zi - \frac{1}{2\alpha}\sum_{j\neq i} a_{ij} \zj + \frac{1}{2\alpha} \sum_{j\neq i} a_{ij} \zi + \zi\zi^\top \zi - \frac{N r^2}{d} \zi.
  \end{equation}
  Then, by left multiplying $\zi$ we get:
  \begin{equation}
  \label{eq:sim_equ}
    \begin{aligned}
      0 &= \sum_{j\neq i} (\zi^\top \zj)^2 - \frac{1}{2\alpha}\sum_{j\neq i} a_{ij} \cdot \zi^\top\zj + \frac{r^2}{2\alpha} \sum_{j\neq i} a_{ij} + r^4 - \frac{Nr^4}{d} \\
      &= \sum_{j\neq i} (\zi^\top \zj)^2 - \frac{1}{2\alpha}\sum_{j\neq i} \whatij \cdot \zi^\top\zj + \frac{r^2}{2\alpha} \sum_{j\neq i} \whatij + r^4 - \frac{Nr^4}{d} - \frac{1}{2\alpha}\sum_{j\neq i} \lambda_{ij} \vij \cdot \zi^\top\zj + \frac{r^2}{2\alpha} \sum_{j\neq i} \lambda_{ij} \vij \\
      &= \sum_{j\neq i} (\zi^\top \zj)^2 - \frac{1}{2\alpha}\sum_{j\neq i} \whatij \cdot \zi^\top\zj + \frac{r^2}{2\alpha} \sum_{j\neq i} \whatij + r^4 - \frac{Nr^4}{d} + \frac{1}{2\alpha}\sum_{j\neq i} \lambda_{ij} \vij \cdot (r^2 - \zi^\top\zj) \\
      &= \sum_{j\neq i} \left(\sij - \frac{\whatij}{4\alpha}\right)^2 + \sum_{j\neq i} \left(2r^2 \cdot \frac{\whatij}{4\alpha} - (\frac{\whatij}{4\alpha})^2\right) + r^4 - \frac{Nr^4}{d} + \frac{1}{2\alpha}\sum_{j\neq i} \lambda_{ij} \vij \cdot (r^2 - \zi^\top\zj) \\
      &= \sum_{j\neq i} \left(\sij - \frac{\whatij}{4\alpha}\right)^2 - \sum_{j\neq i} \left(\frac{\whatij}{4\alpha} - r^2\right)^2 + Nr^4 (1- 1/d) + \frac{1}{2\alpha}\sum_{j\neq i} \lambda_{ij} \vij \cdot (r^2 - \zi^\top\zj).
    \end{aligned}
  \end{equation}

  According to \Eqref{eq:kkt}, we have $\sij = r^2 - 1 / \vij$ if $\lambda_{ij} > 0$, or equivalently 
  \begin{equation}
  \label{eq:temp3}
    \lambda_{ij} \vij \cdot (r^2 - \zi^\top\zj) = \lambda_{ij} \vij \cdot (1 / \vij) = \lambda_{ij}.
  \end{equation}
  By substituting \Eqref{eq:temp3} into \Eqref{eq:sim_equ}, we get
  \begin{equation}
    \sum_{j\neq i} \left(\sij - \frac{\whatij}{4\alpha}\right)^2 - \sum_{j\neq i} \left(\frac{\whatij}{4\alpha} - r^2\right)^2 = Nr^4 (1- 1/d) + \frac{1}{2\alpha}\sum_{j\neq i} \lambda_{ij} \geq Nr^4 (1- 1/d).
  \end{equation}
  Notice that $\whatij = 1$ iff $j \in \mathcal{T}_2^{(i)}$, hence we have
  \begin{equation}
    \begin{aligned}
      \sum_{j\in \mathcal{T}_1^{(i)}} \left(\sij - \frac{\whatij}{4\alpha}\right)^2 - \sum_{j\in \mathcal{T}_1^{(i)}} \left(\frac{\whatij}{4\alpha} - r^2\right)^2 &= \sum_{j\in \mathcal{T}_1^{(i)}}\big(r^4 - (\sij)^2\big) = \Delta^{(i)}_1, \\
      \sum_{j\in \mathcal{T}_2^{(i)}} \left(\sij - \frac{\whatij}{4\alpha}\right)^2 - \sum_{j\in \mathcal{T}_2^{(i)}} \left(\frac{\whatij}{4\alpha} - r^2\right)^2 &= \sum_{j\in \mathcal{T}_2^{(i)}}\Big(\big(r^2 - \frac{1}{4\alpha}\big)^2 - \big(\sij - \frac{1}{4\alpha}\big)^2\Big) = \Delta^{(i)}_2, \\
      \sum_{j\in \mathcal{D}^{(k)}} \left(\sij - \frac{\whatij}{4\alpha}\right)^2 - \sum_{j\in \mathcal{D}^{(k)}} \left(\frac{\whatij}{4\alpha} - r^2\right)^2 &= \sum_{j\in \mathcal{D}^{(k)}}\big(r^4 - (\sij)^2\big) = \Delta^{(i,k)}.
    \end{aligned}
  \end{equation}
  The proof is completed since $\bigcup_{k\in [K]\setminus \{y(\xi)\}} \mathcal{D}^{(k)} \cup \mathcal{T}_1^{(i)} \cup \mathcal{T}_2^{(i)} = [N]$ and $\left(\zi^\top \zi - \frac{\whatij}{4\alpha}\right)^2 = \left(\frac{\whatij}{4\alpha} - r^2\right)^2$.
\end{proof}

\begin{lem}
\label{lem:prop_in_class}
  Let \Asmpref{asm:in-of-class-dist} holds, then for any $i \in [N]$, a feasible solution $\bm{Z}$ satisfies $\Prob_{j \in \mathcal{T}_2^{(i)}, l\in \mathcal{T}_1^{(j)}}\Big[\bm{z}_j^\top \bm{z}_l \leq d_T\Big] \leq q_T$.
\end{lem}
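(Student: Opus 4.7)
The plan is to exploit the feasibility constraint of \OP{1} to push \Asmpref{asm:in-of-class-dist} from the input space into the embedding space. Any feasible $\bm{Z}$ satisfies, for every index pair $(j,l)$,
\begin{equation*}
\zj^\top \zl \;\geq\; r^2 - \frac{\phi_f^2 \|\xj - \bm{x}_l\|^2}{2},
\end{equation*}
so a small inner product in the embedding space forces a large distance in the input space, which is exactly the quantity controlled by \Asmpref{asm:in-of-class-dist}.

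First I would fix any $i \in [N]$ and consider the joint sampling law on $(j,l)$ with $j \in \mathcal{T}_2^{(i)}$ and $l \in \mathcal{T}_1^{(j)}$. For such a pair, if $\zj^\top \zl \leq d_T$, then combining with the feasibility constraint above yields
\begin{equation*}
r^2 - \frac{\phi_f^2 \|\xj - \bm{x}_l\|^2}{2} \;\leq\; \zj^\top \zl \;\leq\; d_T,
\end{equation*}
which, after rearranging, gives $\|\xj - \bm{x}_l\|^2 \geq \frac{2(r^2 - d_T)}{\phi_f^2}$. Therefore the embedding-space event is contained in the corresponding input-space event:
\begin{equation*}
\bigl\{(j,l) : \zj^\top \zl \leq d_T\bigr\} \;\subseteq\; \Bigl\{(j,l) : \|\xj - \bm{x}_l\|^2 \geq \tfrac{2(r^2 - d_T)}{\phi_f^2}\Bigr\}.
\end{equation*}

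By monotonicity of probability under the same joint law and \Asmpref{asm:in-of-class-dist}, the right-hand event has probability at most $q_T$, so the left-hand event does too, which is the claim. No genuine obstacle is expected: the lemma is essentially a one-line contrapositive of the Lipschitz-based feasibility constraint, and the only care needed is to verify that the sampling distributions of the pair $(j,l)$ in the two events coincide, which is automatic since the same indexing law is used on both sides.
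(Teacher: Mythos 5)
Your proposal is correct and matches the paper's proof in substance: the paper also uses the feasibility constraint $\zj^\top\zl \geq r^2 - \phi_f^2\|\xj-\bm{x}_l\|^2/2$ together with Assumption 2 to conclude that with probability at least $1-q_T$ one has $\sjl > d_T$, which is just the contrapositive of your containment of events.
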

\begin{proof}
According to \Asmpref{asm:in-of-class-dist}, at least with probability $1 - q_T$ we have
\begin{equation*}
  \|\sjl\|^2 = 2r^2 - 2\sjl \leq \phi_f^2 \|\xj - \bm{x}_l\|^2 < 2(r^2 - d_T) ~~ \Rightarrow~~ \sjl > d_T.
\end{equation*}
\end{proof}

\begin{lem}
\label{lem:var_bnd}
  Let $\{X_i \in S\}_{i=1}^n$ be a sample set from $S \subseteq [a, b]$. Given a constant $c \in (a, b)$, denote $S_1 = \{X_i \in S | X_i \leq c\}$, $S_2 = \{X_i \in S | X_i > c\}$. If $\Prob_{X \in S}[X \in S_1] \leq p$, then we have
  \begin{equation*}
    \mathbb{V}_{X\in S} [X] \leq p\cdot \frac{(c - a)^2}{4} + \frac{(b - c)^2}{4} + p \cdot (b-a)^2
  \end{equation*}
  where $\mathbb{V}$ refers to the variance.

    Let \Asmpref{asm:in-of-class-dist} holds, then for any $i \in [N]$, a feasible solution $\bm{Z}$ satisfies
    \begin{equation*}
      \mathbb{V}_{j\in \mathcal{T}_2^{(i)}, l \in \mathcal{T}_1^{(j)}}[\sjl] \leq (1 - q_T)(r^2 - d_T)^2 / 4 - q_T (r^2 + d_T)^2 / 4 - q_Tr^4,
    \end{equation*}
  \end{lem}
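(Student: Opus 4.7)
The statement has two parts: a generic variance bound for a bounded random variable with a lopsided partition, and its instantiation to the similarity variable $\sjl$. My overall approach is to prove the generic inequality by the law of total variance (conditioning on which side of the threshold $c$ the sample lies on), controlling each conditional variance by Popoviciu's inequality and the between-group term by the range of $[a,b]$ together with the tail probability $p$. The second half is then a direct substitution.

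\textbf{Generic bound.} Let $q=\Prob[X\in S_1]\le p$, and write $\mu_1=\mathbb{E}[X\mid X\in S_1]$, $\mu_2=\mathbb{E}[X\mid X\in S_2]$. By the law of total variance,
\begin{equation*}
\mathbb{V}[X]=q\,\mathbb{V}[X\mid S_1]+(1-q)\,\mathbb{V}[X\mid S_2]+q(1-q)(\mu_1-\mu_2)^2.
\end{equation*}
Since $X\mid S_1$ is supported on $[a,c]$ and $X\mid S_2$ on $[c,b]$, Popoviciu's inequality gives $\mathbb{V}[X\mid S_1]\le (c-a)^2/4$ and $\mathbb{V}[X\mid S_2]\le (b-c)^2/4$. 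For the between-group term, $\mu_1,\mu_2\in[a,b]$ implies $(\mu_1-\mu_2)^2\le (b-a)^2$, and $q(1-q)\le q\le p$, hence this term is at most $p(b-a)^2$. Dropping the nonnegative factor $(1-q)$ on the second piece and bounding $q\le p$ on the first yields the claimed
\begin{equation*}
\mathbb{V}[X]\le p\cdot\frac{(c-a)^2}{4}+\frac{(b-c)^2}{4}+p\,(b-a)^2.
\end{equation*}

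\textbf{Application to $\sjl$.} I would then instantiate with $X=\sjl$ sampled over $j\in\mathcal{T}_2^{(i)}$ and $l\in\mathcal{T}_1^{(j)}$. The Cauchy--Schwarz inequality together with $\|\bm{z}\|=r$ gives $\sjl\in[-r^2,r^2]$, so set $a=-r^2$, $b=r^2$; choose the threshold $c=d_T$. The tail condition $\Prob[\sjl\le d_T]\le q_T$ is exactly \Lemref{lem:prop_in_class}, so $p=q_T$ is admissible. Substituting $(c-a)^2=(r^2+d_T)^2$, $(b-c)^2=(r^2-d_T)^2$, and $(b-a)^2=4r^4$ into the generic inequality produces a bound of the same structural form as the one stated in the lemma.

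\textbf{Expected obstacles.} The only subtle step is the law-of-total-variance decomposition, which requires verifying the partition probabilities without assuming $q=p$ exactly; the monotonicity bound $q(1-q)\le p$ handles this cleanly as long as $p\le 1$. A minor issue is sign conventions in the stated right-hand side: a direct calculation produces nonnegative summands of the form $(r^2-d_T)^2/4+q_T(r^2+d_T)^2/4+4q_Tr^4$, whereas the displayed expression contains subtractive terms; I would expect the intended form to match the former after correcting typographic signs, and my plan yields that expression verbatim.
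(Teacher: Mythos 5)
Your proof of the generic bound matches the paper's argument step for step: the law-of-total-variance decomposition $\mathbb{V}[X]=q\,\mathbb{V}[X\mid S_1]+(1-q)\,\mathbb{V}[X\mid S_2]+q(1-q)(\mu_1-\mu_2)^2$, Popoviciu's inequality on each conditional piece, and the crude bounds $q(1-q)\le q\le p$, $1-q\le 1$, $(\mu_1-\mu_2)^2\le(b-a)^2$. You are also right that the second displayed inequality in the lemma has sign typos; substituting $a=-r^2$, $b=r^2$, $c=d_T$, $p=q_T$ into the generic bound yields $q_T(r^2+d_T)^2/4+(r^2-d_T)^2/4+4q_Tr^4$, which is exactly the quantity the paper uses (after the further bound $d_T^2+(r^2+d_T)^2/4+4r^4\le 6r^4$) at step $(i)$ in the proof of the subsequent lemma.
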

  \begin{proof}
    Denote $q = \Prob_{X \in S}[X \in S_1]$, $m_1 = \Expt_{X \in S_1}[X]$, $m_2 = \Expt_{X \in S_2}[X]$, then we have $\Expt_{X \in S}[X] = q m_1 + (1 - q) m_2$, and
    \begin{equation}
    \label{eq:var_bnd}
      \begin{aligned}
        \mathbb{V}_{X\in S} [X] 
        &= \Expt_{X \in S}[(X - qm_1 - (1-q)m_2)^2] \\
        &= q \Expt_{X \in S_1}[(X - m_1 + (1-q)(m_1 - m_2))^2] + (1-q) \Expt_{X \in S_2}[(X - m_2 + q (m_2 - m_1))^2] \\
        &= q \Expt_{X \in S_1}[(X - m_1)^2] + q(1-q)^2 (m_1 - m_2)^2 \\
        &~~~~+ (1-q) \Expt_{X \in S_2}[(X - m_2)^2] + (1-q)q^2 (m_2 - m_1)^2 \\
        &= q \mathbb{V}_{X \in S_1}[X] + (1 - q) \mathbb{V}_{X \in S_2}[X] + (1-q)q(m_1 - m_2)^2 \\
        &\leq q \mathbb{V}_{X \in S_1}[X] + \mathbb{V}_{X \in S_2}[X] + q(m_1 - m_2)^2 \\
        &\leq p \mathbb{V}_{X \in S_1}[X] + \mathbb{V}_{X \in S_2}[X] + p(m_1 - m_2)^2 \\
      \end{aligned}
    \end{equation}
    Notice that $S_1 \subseteq [a, c]$, $S_2 \subseteq [c, b]$, thus we have $\mathbb{V}_{X \in S_1}[X] \leq \frac{(c - a)^2}{4}$, $\mathbb{V}_{X \in S_2}[X] \leq \frac{(b - c)^2}{4}$. By combining the above results, we have 
    \begin{equation}
      \mathbb{V}_{X\in S} [X] \leq p\cdot \frac{(c - a)^2}{4} + \frac{(b - c)^2}{4} + p \cdot (b-a)^2.
    \end{equation}
  \end{proof}

\begin{lem}
  \label{lem:bounds_of_expt}
  Let \Asmpref{asm:in-of-class-dist} holds. For all $i \in [N]$, $0 \leq \delta_T \leq r^2$, denote $\sigma_i(\delta_{T}) = \Prob_{j,l \in \mathcal{T}_2^{(i)}}\left[\sjl \leq \delta_{T} \right]$. If $\bm{Z}$ achieves the optimum of \OP{1}, $\frac{1}{8r^2} \leq \alpha \leq \frac{1}{4r^2}$, $d_T \geq \frac{r^2}{3}$, then for any constant $0 < \delta_D \leq r^2$, $k \in [N] \setminus \{y(\xi)\}$, the following conclusions hold:
  \begin{itemize}
    \item[\textbf{(a)}] $ \mathbb{E}_{j\in \mathcal{T}_2^{(i)}, l \in \mathcal{T}_1^{(j)}}[\sjl] 
      \geq 3d_T / 2 - (1/2 + \sqrt{6q_T}) r^2$;
    \item[\textbf{(b)}] $\mathbb{E}_{j,l\in \mathcal{T}_2^{(i)}}[\sjl]
      \geq \delta_T - \sqrt{5\sigma_i(\delta_T) (r^2+\delta_T)r^2}$;
    \item[\textbf{(c)}] $\mathbb{E}_{j\in \mathcal{T}_2^{(i)}, l \in \mathcal{D}^{(k)}}[\sjl]
    \leq \left(\sqrt{\frac{Nr^4}{d} + 30|\mathcal{T}_2^{(i)}|r^4 - |\mathcal{D}^{(k)}| d_T^2}
    + \sqrt{|\mathcal{T}_1^{(i)}| q_T}d_T + \sqrt{|\mathcal{T}_2^{(i)}|\sigma_i(\delta_T)}\delta_T\right) / \sqrt{|\mathcal{D}^{(k)}|}$
  \end{itemize}
\end{lem}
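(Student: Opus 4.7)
The plan is to handle (a) and (b) by a common truncation template, and reserve the technical effort for (c), which combines the global second-moment bound implicit in \Lemref{lem:sum_delta} with localized Cauchy--Schwarz.

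For (a), \Lemref{lem:prop_in_class} guarantees $\Prob[\sjl > d_T] \geq 1 - q_T$ under the joint measure over $j \in \mathcal{T}_2^{(i)}$ and $l \in \mathcal{T}_1^{(j)}$. I would split $\mathbb{E}[\sjl] = \mathbb{E}[\sjl \mathbbm{1}_{\sjl > d_T}] + \mathbb{E}[\sjl \mathbbm{1}_{\sjl \leq d_T}]$. Since $d_T \geq r^2/3 > 0$, the first piece is at least $d_T(1-q_T)$, while Cauchy--Schwarz applied to the second piece yields $|\mathbb{E}[\sjl \mathbbm{1}_{\sjl \leq d_T}]| \leq \sqrt{\mathbb{E}[\sjl^2] \cdot q_T} \leq r^2 \sqrt{q_T}$. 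The announced form $3d_T/2 - (1/2 + \sqrt{6q_T})r^2$ then emerges after rewriting $d_T(1-q_T) - r^2 \sqrt{q_T}$ with the help of $d_T \geq r^2/3$ and absorbing the $q_T d_T$ slack into the $\sqrt{6q_T}$ coefficient. For (b), the identical template applies with $(\delta_T, \sigma_i(\delta_T))$ in place of $(d_T, q_T)$: one obtains $\mathbb{E}[\sjl] \geq \delta_T - (\delta_T + r^2)\sqrt{\sigma_i(\delta_T)}$, and the constant $5$ appears from the elementary bound $(\delta_T + r^2)^2 \leq 5 r^2 (\delta_T + r^2)$, which holds because $\delta_T + r^2 \leq 2r^2 \leq 5r^2$.

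Part (c) is the main effort. The strategy is to upper-bound $\sum_{l \in \mathcal{D}^{(k)}}(\sjl)^2$ for each fixed $j \in \mathcal{T}_2^{(i)}$, then apply Cauchy--Schwarz to $\sum_{l \in \mathcal{D}^{(k)}} \sjl$. The first ingredient is a global second-moment bound of the form $\sum_{l} (\sjl)^2 \leq Nr^4/d + C|\mathcal{T}_2^{(i)}|r^4$, extracted from the KKT identity derived inside the proof of \Lemref{lem:sum_delta}; the constant $C = 30$ comes from bounding the slack term $\frac{1}{2\alpha}\sum_{l \in \mathcal{T}_2^{(j)}}(r^2 - \sjl)$ using the prescribed range $\alpha \in [1/(8r^2), 1/(4r^2)]$, which ensures $1/(4\alpha) \in [r^2, 2r^2]$. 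The second ingredient isolates in-class contributions. Since $j \in \mathcal{T}_2^{(i)}$ forces $\mathcal{T}_1^{(j)} = \mathcal{T}_1^{(i)}$ and $\mathcal{T}_2^{(j)} = \mathcal{T}_2^{(i)}$, \Lemref{lem:prop_in_class} and the definition of $\sigma_i(\delta_T)$ give the lower bounds $\sum_{l \in \mathcal{T}_1^{(i)}}(\sjl)^2 \geq |\mathcal{T}_1^{(i)}|d_T^2(1 - q_T)$ and $\sum_{l \in \mathcal{T}_2^{(i)}}(\sjl)^2 \geq |\mathcal{T}_2^{(i)}|\delta_T^2(1 - \sigma_i(\delta_T))$. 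Subtracting these from the global bound, and using $|\mathcal{T}_1^{(i)}| + |\mathcal{T}_2^{(i)}| = |\mathcal{D}^{(k)}|$ under uniform class sizes, produces
$$\sum_{l \in \mathcal{D}^{(k)}}(\sjl)^2 \leq \tfrac{Nr^4}{d} + 30|\mathcal{T}_2^{(i)}|r^4 - |\mathcal{D}^{(k)}|d_T^2 + q_T|\mathcal{T}_1^{(i)}|d_T^2 + \sigma_i(\delta_T)|\mathcal{T}_2^{(i)}|\delta_T^2.$$

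Finally, Cauchy--Schwarz gives $\sum_{l \in \mathcal{D}^{(k)}}\sjl \leq \sqrt{|\mathcal{D}^{(k)}| \sum_l (\sjl)^2}$, and subadditivity $\sqrt{a+b+c} \leq \sqrt{a}+\sqrt{b}+\sqrt{c}$ splits the resulting expression into the three claimed summands. Averaging over $j \in \mathcal{T}_2^{(i)}$ then yields the stated inequality after dividing by $|\mathcal{D}^{(k)}|$ on both sides. The main obstacle I anticipate is in the first ingredient of (c): carving out the exact constant $30$ from the Lagrangian slack requires careful sign tracking on the factored form $(r^2 - \sjl)(2/(4\alpha) - \sjl - r^2)$ for $\sjl \in [-r^2, r^2]$, combined with the Lipschitz constraint $r^2 - \sjl \leq \phi_f^2 \|\xj - \bm{z}_l\|^2/2$ on $\mathcal{T}_2^{(j)}$. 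The remaining steps are routine Cauchy--Schwarz and elementary algebra.
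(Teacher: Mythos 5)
Your arguments for parts (a) and (b) are correct, and they are actually simpler than the paper's proof, which routes through the variance identity
$\frac{1}{|\mathcal{T}_2^{(i)}||\mathcal{T}_1^{(i)}|}\sum_{j\in\mathcal{T}_2^{(i)}}\Delta^{(j)}_1 = r^4 - (\mathbb{E}[\sjl])^2 - \mathbb{V}[\sjl]$ together with the separate variance bound \Lemref{lem:var_bnd}. Your direct truncation plus Cauchy--Schwarz gives $\mathbb{E}_{j,l}[\sjl] \ge d_T(1-q_T) - r^2\sqrt{q_T}$, which one can verify dominates $3d_T/2-(1/2+\sqrt{6q_T})r^2$ because $(r^2-d_T)/2 + (\sqrt{6}-1)\sqrt{q_T}r^2 - d_Tq_T \ge (r^2-d_T)/2 + (\sqrt6-2)\sqrt{q_T}r^2 \ge 0$; the $5$ in (b) is handled exactly as you say. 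Both (a) and (b) crucially use only the \emph{joint} probability bound over $(j,l)$, which is what the assumption and \Lemref{lem:prop_in_class} actually supply.

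Part (c) has a genuine gap. You claim the \emph{per-$j$} lower bounds $\sum_{l\in\mathcal{T}_1^{(i)}}(\sjl)^2 \ge |\mathcal{T}_1^{(i)}|d_T^2(1-q_T)$ and $\sum_{l\in\mathcal{T}_2^{(i)}}(\sjl)^2 \ge |\mathcal{T}_2^{(i)}|\delta_T^2(1-\sigma_i(\delta_T))$ and invoke \Lemref{lem:prop_in_class} and the definition of $\sigma_i$. But both of those only control the probability under the \emph{joint} measure over $(j,l)$ with $j\in\mathcal{T}_2^{(i)}$: for a single fixed $j$, the conditional probability $\Prob_{l}[\sjl\le d_T]$ can be as large as $1$ as long as the average over $j$ stays below $q_T$. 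Concretely, if $q_T \ge 1/|\mathcal{T}_2^{(i)}|$ there can exist a $j_0\in\mathcal{T}_2^{(i)}$ for which $\sjl=0$ for all $l\in\mathcal{T}_1^{(j_0)}$, so $\sum_{l\in\mathcal{T}_1^{(i)}}(s_{j_0 l})^2 = 0$, falsifying your per-$j$ inequality. Consequently the per-$j$ bound on $\sum_{l\in\mathcal{D}^{(k)}}(\sjl)^2$ that you feed into Cauchy--Schwarz does not hold, so the chain ``per-$j$ second-moment bound $\to$ per-$j$ Cauchy--Schwarz $\to$ average over $j$'' does not close.

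This is fixable, and the fix is essentially what the paper does: average over $j\in\mathcal{T}_2^{(i)}$ \emph{before} subtracting the in-class contributions. The per-$j$ second-moment control $\sum_{l\in[N]}(\sjl)^2 \le Nr^4/d$ \emph{does} hold (you can even drop the $30|\mathcal{T}_2^{(i)}|r^4$ slack entirely, since the Lagrangian slack term $\tfrac{1}{2\alpha}\sum_{l\in\mathcal{T}_2^{(j)}}(r^2-\sjl)$ is nonnegative, so the paper's $30$ does not originate where you think it does). Averaging that over $j$, subtracting the in-class pieces using the joint expectations $\mathbb{E}_{j,l}[(\sjl)^2]$ (where the probability bounds are valid), and only then applying Jensen/Cauchy--Schwarz in the form $\mathbb{E}_{j,l}[\sjl]\le\sqrt{\mathbb{E}_{j,l}[(\sjl)^2]}$ gives the claimed inequality --- this is precisely what the paper's chain (temp\_delta\_k\_2)--(temp\_delta\_k\_3)--(upb\_pd\_temp1) implements. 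As written, however, your step that produces the per-$j$ displayed inequality is false, and you should not regard it as proved.
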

\begin{proof}
  To begin with, we consider the bounds of $\Delta^{(i)}_1$, $\Delta^{(i)}_2$, and $\Delta^{(i,k)}$. According to \Lemref{lem:prop_in_class}, we have
  \begin{equation}
  \label{eq:upb_delta_1_temp_1}
    \begin{aligned}
      \frac{1}{\big|\mathcal{T}_2^{(i)}\big|\big|\mathcal{T}_1^{(i)}\big|}\sum_{j\in \mathcal{T}_2^{(i)}}\Delta^{(j)}_1 
      &= \frac{1}{\big|\mathcal{T}_2^{(i)}\big|\big|\mathcal{T}_1^{(i)}\big|} \sum_{j\in \mathcal{T}_2^{(i)}, l\in \mathcal{T}_1^{(j)}}\left(r^4 - (\sjl)^2\right) \\
      &\leq \Prob_{j\in\mathcal{T}_2^{(i)}, l\in\mathcal{T}_1^{(j)}}[\sjl > d_T]\cdot \left(r^4 - d_T^2\right) 
      + \Prob_{j\in\mathcal{T}_2^{(i)}, l\in\mathcal{T}_1^{(j)}}[\sjl \leq d_T]\cdot r^4 \\
      &= r^4 - \Prob_{j\in\mathcal{T}_2^{(i)}, l\in\mathcal{T}_1^{(j)}}[\sjl > d_T]\cdot d_T^2 \\
      &\leq r^4 - d_T^2 + q_T \cdot d_T^2.
    \end{aligned}
  \end{equation}
  Notice that
  \begin{equation}
  \label{eq:eq_delta_1_temp_1}
    \begin{aligned}
      \frac{1}{\big|\mathcal{T}_2^{(i)}\big|\big|\mathcal{T}_1^{(i)}\big|}\sum_{j\in \mathcal{T}_2^{(i)}}\Delta^{(j)}_1 
      =& r^4 - \mathbb{E}_{j\in \mathcal{T}_2^{(i)}, l \in \mathcal{T}_1^{(j)}}[(\sjl)^2] \\
      =& r^4 - \left(\mathbb{E}_{j\in \mathcal{T}_2^{(i)}, l \in \mathcal{T}_1^{(j)}}[\sjl]\right)^2 
        - \mathbb{V}_{j\in \mathcal{T}_2^{(i)}, l \in \mathcal{T}_1^{(j)}}[\sjl]
    \end{aligned}
  \end{equation}
  By substituting \Eqref{eq:eq_delta_1_temp_1} into \Eqref{eq:upb_delta_1_temp_1} we get \textbf{(a)}:
  \begin{equation}
    \begin{aligned}
      \mathbb{E}_{j\in \mathcal{T}_2^{(i)}, l \in \mathcal{T}_1^{(j)}}[\sjl] 
      \geq& \sqrt{(1 - q_T)d_T^2 - \mathbb{V}_{j\in \mathcal{T}_2^{(i)}, l \in \mathcal{T}_1^{(j)}}[\sjl]} \\
      \overset{(i)}{\geq}& \sqrt{(1 - q_T)d_T^2 - q_T (r^2+d_T)^2 / 4 - (r^2 - d_T)^2 / 4 - 4q_T \cdot r^2} \\
      =& \sqrt{d_T^2 - \frac{(r^2 - d_T)^2}{4} - 6q_T r^4} \\
      \overset{(ii)}{\geq}& d_T - (r^2 - d_T) / 2 - \sqrt{6q_T}r^2 \\
      =& 3d_T / 2 - (1/2 + \sqrt{6q_T}) r^2,
    \end{aligned}
  \end{equation}
  where $(i)$ is due to \Lemref{lem:var_bnd} and $(ii)$ requires $d_T \geq r^2 / 3$.
  
  Similarly, for $\Delta^{(j)}_2$ we have
  \begin{equation}
  \label{eq:temp_delta_2_2}
    \begin{aligned}
      \frac{1}{\big|\mathcal{T}_2^{(i)}\big|^2}\sum_{j\in \mathcal{T}_2^{(i)}}\Delta^{(j)}_2 
      &= \frac{1}{\big|\mathcal{T}_2^{(i)}\big|^2}\sum_{j, l\in \mathcal{T}_2^{(i)}}\left(\left(r^2 - \frac{1}{4\alpha}\right)^2 - \left(\sjl - \frac{1}{4\alpha}\right)^2\right) \\
      &\geq \left(r^2 - \frac{1}{4\alpha}\right)^2 
        - \Prob_{j, l \in \mathcal{T}_2^{(i)}}[\sjl > \delta_T] \left(\delta_T - \frac{1}{4\alpha}\right)^2
        - \Prob_{j, l \in \mathcal{T}_2^{(i)}}[\sjl \leq \delta_T] \left(-r^2 - \frac{1}{4\alpha}\right)^2 \\
      &= \left(r^2 - \frac{1}{4\alpha}\right)^2 
        - (1 - \sigma_i(\delta_T)) \left(\delta_T - \frac{1}{4\alpha}\right)^2
        - \sigma_i(\delta_T) \left(r^2 + \frac{1}{4\alpha}\right)^2 \\
      &= \left(r^2 - \frac{1}{4\alpha}\right)^2 
      - \left(\delta_T - \frac{1}{4\alpha}\right)^2
      + \sigma_i(\delta_T) \left(\delta_T^2 - r^4 - \frac{r^2 + \delta_T}{2\alpha}\right) \\
      &= \left(r^2 - \frac{1}{4\alpha}\right)^2 
      - \left(\delta_T - \frac{1}{4\alpha}\right)^2
      - \sigma_i(\delta_T) (r^2+\delta_T)(r^2+\frac{1}{2\alpha}), \\
    \end{aligned},
  \end{equation}
  and
  \begin{equation}
    \begin{aligned}
      \frac{1}{\big|\mathcal{T}_2^{(i)}\big|^2}\sum_{j\in \mathcal{T}_2^{(j)}}\Delta^{(i)}_2 
      =& (r^2 - \frac{1}{4\alpha})^2 - \left(\mathbb{E}_{j,l\in \mathcal{T}_2^{(i)}}[(\sjl - \frac{1}{4\alpha})^2]\right) \\
      =& (r^2 - \frac{1}{4\alpha})^2 
        - \left(\mathbb{E}_{j,l\in \mathcal{T}_2^{(i)}}[\sjl] - \frac{1}{4\alpha}\right)^2 
        - \mathbb{V}_{j,l\in \mathcal{T}_2^{(i)}}[\sjl],
    \end{aligned}
  \end{equation}
  leading to
  \begin{equation}
    \begin{aligned}
      \left(\mathbb{E}_{j,l\in \mathcal{T}_2^{(i)}}[\sjl] - \frac{1}{4\alpha}\right)^2
      \leq& \left(\delta_T - \frac{1}{4\alpha}\right)^2
        + \sigma_i(\delta_T) (r^2+\delta_T)(r^2+\frac{1}{2\alpha}) \\
      \leq& \left(\delta_T - \frac{1}{4\alpha}\right)^2
        + 3\sigma_i(\delta_T) (r^2+\delta_T)r^2. \\
      \end{aligned}
  \end{equation}
  Since $\mathbb{E}_{j,l\in \mathcal{T}_2^{(i)}}[\sjl] \leq r^2 \leq \frac{1}{4\alpha}$, we get \textbf{(b)}:
  \begin{equation}
    \begin{aligned}
      \mathbb{E}_{j,l\in \mathcal{T}_2^{(i)}}[\sjl]
      \geq& \frac{1}{4\alpha} - \sqrt{\left(\delta_T - \frac{1}{4\alpha}\right)^2 + 3\sigma_i(\delta_T) (r^2+\delta_T)r^2} \\
      \geq& \frac{1}{4\alpha} - \left(\frac{1}{4\alpha} - \delta_T\right) - \sqrt{3\sigma_i(\delta_T) (r^2+\delta_T)r^2} \\
      =& \delta_T - \sqrt{3\sigma_i(\delta_T) (r^2+\delta_T)r^2}.
    \end{aligned}
  \end{equation} 

  Next, we consider the upper bound of $\sum_{j\in \mathcal{T}_2^{(i)}}(\Delta_1^{(j)} + \Delta_2^{(j)})$. First, we have
  \begin{equation}
    \begin{aligned}
      \frac{1}{\big|\mathcal{T}_2^{(i)}\big|^2}\sum_{j\in \mathcal{T}_2^{(i)}}\Delta^{(j)}_2 
      &= \frac{1}{\big|\mathcal{T}_2^{(i)}\big|^2}\sum_{j, l\in \mathcal{T}_2^{(i)}}\left(\left(r^2 - \frac{1}{4\alpha}\right)^2 - \left(\sjl - \frac{1}{4\alpha}\right)^2\right) \\
      &\leq \Prob_{j, l \in \mathcal{T}_2^{(i)}}[\sjl \leq \delta_T] \cdot \left(\left(r^2 - \frac{1}{4\alpha}\right)^2 - \left(\delta_T - \frac{1}{4\alpha}\right)^2\right) \\
      &= \sigma_i(\delta_T) \cdot \left(r^4 - \delta_T^2 + \frac{\delta_T - r^2}{2\alpha}\right). \\
    \end{aligned}
  \end{equation}
  Notice that
  \begin{equation}
    \begin{aligned}
      r^4 - \delta_T^2 - \frac{r^2 - \delta_T}{2\alpha} 
      =& -2\left(\delta_T - \frac{1}{2\alpha}\right)^2 + \left(r^2 - \frac{1}{4\alpha}\right)^2 + \frac{3\delta_T}{2\alpha} - \frac{7}{16\alpha^2} + \delta_T^2 \\ 
      \leq& 2\left(r^2 - \frac{1}{4\alpha}\right)^2 - \frac{3\delta_T}{2\alpha} + \frac{7}{16\alpha^2} + \delta_T^2 \\
      \leq& 2\left(r^2 - \frac{1}{4\alpha}\right)^2 + 28r^2 + \delta_T^2,
    \end{aligned}
  \end{equation}
  where the last inequality is due to $\delta_T \geq 0$ and $\alpha \leq 1 / (4r^2)$. If we further set $\alpha \geq 1 / (8r^2)$, we have
  \begin{equation}
    2\left(r^2 - \frac{1}{4\alpha}\right)^2 \leq 2r^4 \Rightarrow r^4 - \delta_T^2 - \frac{r^2 - \delta_T}{2\alpha} \leq 30r^4 + \delta_T^2.
  \end{equation}
  Combining the above results we get
  \begin{equation}
  \label{eq:temp_delta_2_1}
    \frac{1}{\big|\mathcal{T}_2^{(i)}\big|^2}\sum_{j\in \mathcal{T}_2^{(i)}}\Delta^{(j)}_2 
    \leq \sigma_i(\delta_T) \cdot (30r^4 + \delta_T^2) \leq r^4 + \sigma_i(\delta_T) \cdot (29r^4 + \delta_T^2).
  \end{equation}

  Adding \Eqref{eq:upb_delta_1_temp_1} and \Eqref{eq:temp_delta_2_1} we get
  \begin{equation}
  \label{eq:temp_delta_1_add_delta_2}
    \begin{aligned}
      \frac{1}{|\mathcal{T}_2^{(i)}|} \sum_{j\in \mathcal{T}_2^{(i)}}\left(\Delta^{(j)}_1 + \Delta^{(j)}_2\right)
      &\leq |\mathcal{T}_1^{(i)}| \left(r^4 - d_T^2 + q_T \cdot d_T^2\right) + |\mathcal{T}_2^{(i)}| r^4 + |\mathcal{T}_2^{(i)}|\sigma_i(\delta_T) \cdot (29r^4+\delta_T^2) \\
      &= \big(|\mathcal{T}_1^{(i)}| + |\mathcal{T}_2^{(i)}|\big)r^4 - |\mathcal{T}_1^{(i)}| (1 - q_T) d_T^2 + |\mathcal{T}_2^{(i)}|\sigma_i(\delta_T) \cdot (29r^4 + \delta_T^2)
    \end{aligned}
  \end{equation}
  According to \Lemref{lem:sum_delta} and the fact that $\Delta^{(j,k)} \leq |\mathcal{D}^{(k)}| r^4$, we have
  \begin{equation}
  \label{eq:temp_delta_k_2}
  \begin{aligned}
      \frac{1}{|\mathcal{T}_2^{(i)}|}\sum_{j\in \mathcal{T}_2^{(i)}}\Delta^{(j,k)}
      &\geq (1 - \frac{1}{d}) N r^4 
      - \frac{1}{|\mathcal{T}_2^{(i)}|} \sum_{j\in \mathcal{T}_2^{(i)}}\left(\sum_{t \in [K] \setminus \{k, y(\xi)\}} \Delta^{(j,t)} - \Delta^{(j)}_1 - \Delta^{(j)}_2\right) \\
      &\geq (1 - \frac{1}{d})N r^4 
      - \sum_{t \in [K] \setminus \{k, y(\xi)\}} |\mathcal{D}^{(t)}| r^4 
      - \frac{1}{|\mathcal{T}_2^{(i)}|} \sum_{j\in \mathcal{T}_2^{(i)}}\left(\Delta^{(j)}_1 + \Delta^{(j)}_2\right) \\
      &= \left(|\mathcal{D}^{(y(\xi))} \cup \mathcal{D}^{(k)}| - \frac{N}{d}\right) r^4 
      - \frac{1}{|\mathcal{T}_2^{(i)}|} \sum_{j\in \mathcal{T}_2^{(i)}}\left(\Delta^{(j)}_1 + \Delta^{(j)}_2\right).
    \end{aligned}
  \end{equation}
  Notice that 
  \begin{equation}
  \label{eq:temp_delta_k_3}
    \begin{aligned}
      \frac{1}{\big|\mathcal{T}_2^{(i)}\big|\big|\mathcal{D}^{(k)}\big|}\sum_{j\in \mathcal{T}_2^{(i)}}\Delta^{(j,k)} 
      &= r^4 - \mathbb{E}_{j\in \mathcal{T}_2^{(i)}, l \in \mathcal{D}^{(k)}}[(\sjl)^2] \\
      &= r^4 - \left(\mathbb{E}_{j\in \mathcal{T}_2^{(i)}, l \in \mathcal{D}^{(k)}}[\sjl]\right)^2 - \mathbb{V}_{j\in \mathcal{T}_2^{(i)}, l \in \mathcal{D}^{(k)}}[\sjl] \\
      &\leq r^4 - \left(\mathbb{E}_{j\in \mathcal{T}_2^{(i)}, l \in \mathcal{D}^{(k)}}[\sjl]\right)^2
    \end{aligned}
  \end{equation}
  By substituting $\mathcal{D}^{(y(\xi))} = \mathcal{T}_1^{(i)} \cup \mathcal{T}_2^{(i)}$, \Eqref{eq:temp_delta_k_3}, and \Eqref{eq:temp_delta_1_add_delta_2} into \Eqref{eq:temp_delta_k_2}, we get
  \begin{equation}
  \label{eq:upb_pd_temp1}
    \begin{aligned}
      |\mathcal{D}^{(k)}| \left(\mathbb{E}_{j\in \mathcal{T}_2^{(i)}, l \in \mathcal{D}^{(k)}}[\sjl]\right)^2 
      \leq& \left(\frac{N}{d} - |\mathcal{T}_1^{(i)}| - |\mathcal{T}_2^{(i)}|\right)r^4
        +\frac{1}{|\mathcal{T}_2^{(i)}|} \sum_{j\in \mathcal{T}_2^{(i)}}\left(\Delta^{(j)}_1 + \Delta^{(j)}_2\right) \\
      \leq& \left(\frac{N}{d} - |\mathcal{T}_1^{(i)}| - |\mathcal{T}_2^{(i)}|\right)r^4 
      + \big(|\mathcal{T}_1^{(i)}| + |\mathcal{T}_2^{(i)}|\big)r^4 \\
      &- |\mathcal{T}_1^{(i)}| (1 - q_T) d_T^2 
      + |\mathcal{T}_2^{(i)}|\sigma_i(\delta_T) \cdot (29r^4 + \delta_T^2) \\
      =& \frac{Nr^4}{d} + 29|\mathcal{T}_2^{(i)}|r^4 - |\mathcal{T}_1^{(i)}| d_T^2 
      + |\mathcal{T}_1^{(i)}| q_T d_T^2 
      + |\mathcal{T}_2^{(i)}|\sigma_i(\delta_T) \cdot \delta_T^2 \\
      \leq & \frac{Nr^4}{d} + 30|\mathcal{T}_2^{(i)}|r^4 - |\mathcal{D}^{(k)}| d_T^2 
      + |\mathcal{T}_1^{(i)}| q_T d_T^2 
      + |\mathcal{T}_2^{(i)}|\sigma_i(\delta_T) \cdot \delta_T^2 \\
    \end{aligned}
  \end{equation} 
  or equivalently
  \begin{equation}
  \label{eq:upb_pd_temp2}
    \begin{aligned}
      \sqrt{|\mathcal{D}^{(k)}|} \cdot \mathbb{E}_{j\in \mathcal{T}_2^{(i)}, l \in \mathcal{D}^{(k)}}[\sjl]
      \leq& \sqrt{\frac{Nr^4}{d} + 30|\mathcal{T}_2^{(i)}|r^4 - |\mathcal{D}^{(k)}| d_T^2
      + |\mathcal{T}_1^{(i)}| q_T d_T^2 
      + |\mathcal{T}_2^{(i)}|\sigma_i(\delta_T) \cdot \delta_T^2} \\
      \leq& \sqrt{\frac{Nr^4}{d} + 30|\mathcal{T}_2^{(i)}|r^4 - |\mathcal{D}^{(k)}| d_T^2}
      + \sqrt{|\mathcal{T}_1^{(i)}| q_T}d_T + \sqrt{|\mathcal{T}_2^{(i)}|\sigma_i(\delta_T)}\delta_T. \\
    \end{aligned}
  \end{equation}

\end{proof}

\begin{thm}[Formal version of \Thmref{thm:main}]
\label{thm:main_formal}
  Let \Asmpref{asm:in-of-class-dist} hold and assume that all categories have $n = \bar{N} / K$ samples. Consider a proper feature dimension such that $K / 2 \leq d \leq K$. Denote $p_T = \sup_{i\in [N]}\Prob_{j,l \in \mathcal{T}_2^{(i)}}\left[\sjl \leq \delta_{T} \right]$. By minimizing $\mathcal{L}_{SSL}$ and setting $\delta_T \geq d_T \geq (3 + 2\sqrt{K/d + 30/n - 1/13})r^2 / \sqrt{13}$, for all $q_T, p_T$ s.t. $C_1 > C_2 \sqrt{p_T} + C_3 \sqrt{q_T}$, the error rate on the downstream task is upper bounded by:
  \begin{equation*}
    \mathcal{E}(f) \leq 32r^2 \cdot \frac{r^2 - \delta_T + r \sqrt{5p_T\left(\delta_T + r^2\right)}}{(C_1 - C_2 \sqrt{p_T} - C_3 \sqrt{q_T})^2},
  \end{equation*}
  where
  \begin{equation*}
    \begin{aligned}
      C_1 &= \frac{2\delta_T + (n-1)(3d_T - r^2)}{2n} - \sqrt{\frac{Kr^4}{d} - d_T^2 + \frac{30r^4}{n}} \geq 0, \\
      C_2 &= \frac{r}{n} \sqrt{5\left(\delta_T + r^2\right)} + \sqrt{\frac{1}{n}}\delta_T, \\
      C_3 &= \frac{n-1}{n}\sqrt{6}r^2 + \sqrt{\frac{n-1}{n}}  d_T \geq 0.
    \end{aligned}
  \end{equation*}
\end{thm}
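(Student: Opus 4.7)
The plan is to invoke the sufficient condition from \Lemref{lem:condition_cls_correct}, uniformly bound each of its three terms via \Lemref{lem:bounds_of_expt}, and then close via Markov's inequality.

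First I would use the equal-class-size setup: since each of the $K$ classes contains $n$ distinct images augmented $m$ times, $|\mathcal{D}^{(k)}| = mn$, $|\mathcal{T}_2^{(i)}| = m$, and $|\mathcal{T}_1^{(i)}| = m(n-1)$. The intra-class similarity in \Lemref{lem:condition_cls_correct} then splits as
\begin{equation*}
\mathbb{E}_{j\in \mathcal{T}_2^{(i)},\, l\in \mathcal{D}^{(y(\xi))}}[\sjl] = \frac{n-1}{n}\,\mathbb{E}_{j\in \mathcal{T}_2^{(i)},\, l\in \mathcal{T}_1^{(j)}}[\sjl] + \frac{1}{n}\,\mathbb{E}_{j,l\in \mathcal{T}_2^{(i)}}[\sjl],
\end{equation*}
and parts (a)--(b) of \Lemref{lem:bounds_of_expt} lower-bound the two pieces (using $\sigma_i(\delta_T)\leq p_T$), while part (c) upper-bounds $\mathbb{E}_{j\in \mathcal{T}_2^{(i)}, l\in \mathcal{D}^{(k)}}[\sjl]$ uniformly in $k\ne y(\xi_i)$. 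Collecting the constant part, the $\sqrt{p_T}$-coefficient, and the $\sqrt{q_T}$-coefficient of the resulting difference yields exactly $C_1$, $C_2$, and $C_3$, giving the deterministic inequality $\text{intra}_i - \text{cross}_{i,k} \geq C_1 - C_2\sqrt{p_T} - C_3\sqrt{q_T}$ for every $k\neq y(\xi_i)$.

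Next I would handle the inter-distance term. Let $\mu_i := \mathbb{E}_{j\in \mathcal{T}_2^{(i)}}[\zj]$. Then $\|\mu_i\|^2 = \mathbb{E}_{j,l\in \mathcal{T}_2^{(i)}}[\sjl]$ and, because $\mu_i$ is shared among the $m$ indices in a given group $\mathcal{T}_2^{(i_0)}$, the cross-term telescopes so that
\begin{equation*}
\mathbb{E}_{i\in [N]}\|\zi - \mu_i\|^2 = r^2 - \mathbb{E}_{i_0}\|\mu_{i_0}\|^2 \leq r^2 - \delta_T + r\sqrt{5p_T(r^2+\delta_T)},
\end{equation*}
after applying part (b) of \Lemref{lem:bounds_of_expt} a second time. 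The sufficient condition of \Lemref{lem:condition_cls_correct} is met whenever $\|\zi - \mu_i\| \leq (C_1 - C_2\sqrt{p_T} - C_3\sqrt{q_T})/(4r)$, so Markov's inequality applied to $\|\zi - \mu_i\|^2$ converts the expected-squared-norm bound into an error-rate bound of the advertised order $r^2\bigl(r^2 - \delta_T + r\sqrt{p_T(\delta_T+r^2)}\bigr)/(C_1 - C_2\sqrt{p_T} - C_3\sqrt{q_T})^2$.

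The main obstacle I anticipate is ensuring $C_1 \geq 0$ and cleanly deriving the stated threshold on $d_T$. Under $\delta_T \geq d_T$, the inequality $C_1 \geq 0$ reads $((3n-1)d_T - (n-1)r^2)/(2n) \geq \sqrt{Kr^4/d + 30r^4/n - d_T^2}$; both sides are nonnegative once $d_T \geq r^2/3$, so squaring reduces the condition to a quadratic in $d_T$ whose positive root is precisely $d_T \geq (3+2\sqrt{K/d + 30/n - 1/13})\,r^2/\sqrt{13}$. Beyond this, the residual work is algebraic bookkeeping: grouping the $O(\sqrt{p_T})$ and $O(\sqrt{q_T})$ contributions into $C_2$ and $C_3$, and tracking the slack when replacing the sample-wise $\sigma_i(\delta_T)$ by its supremum $p_T$ inside part (c).
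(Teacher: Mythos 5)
Your proposal follows essentially the same route as the paper's proof: split $\mathbb{E}_{j\in\mathcal{T}_2^{(i)},\,l\in\mathcal{D}^{(y(\xi))}}[\sjl]$ as an $\frac{n-1}{n}$--$\frac{1}{n}$ convex combination over $\mathcal{T}_1$ and $\mathcal{T}_2$, plug in parts (a)--(c) of \Lemref{lem:bounds_of_expt} to extract $C_1, C_2, C_3$, bound $\Prob_i\bigl[\|\zi - \mathbb{E}_{j\in\mathcal{T}_2^{(i)}}[\zj]\|\geq \sqrt{2}a\bigr]$ via part (b), and close with \Lemref{lem:condition_cls_correct} by choosing $\sqrt{2}a = (C_1 - C_2\sqrt{p_T} - C_3\sqrt{q_T})/(4r)$. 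Your only substantive deviation is bounding the inter-distance term by computing $\mathbb{E}_{i\in[N]}\|\zi - \mu_i\|^2 = r^2 - \mathbb{E}_{i_0}\|\mu_{i_0}\|^2$ exactly over the partition $\{\mathcal{T}_2^{(i_0)}\}$ and applying Markov directly, which is cleaner and actually tightens the paper's constant by a factor of $2$ (the paper's route via $\mathbb{E}_{j,l}[\sjl]\le r^2 - a^2\Prob_j[\cdot]$ gives $a^2$ rather than $2a^2$ in the denominator), so it certainly suffices to establish the stated $32r^2$ bound.
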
  

\begin{proof}
  According to \Lemref{lem:bounds_of_expt}, we have
  \begin{equation}
  \label{eq:expt_left}
    \begin{aligned}
      &\mathbb{E}_{j\in \mathcal{T}_2^{(i)}, l\in \mathcal{D}^{(y(\xi))}}[\bm{z}_j^\top \bm{z}_l] 
        - \mathbb{E}_{j\in \mathcal{T}_2^{(i)}, l \in \mathcal{D}^{(k)}}[\sjl] \\
      =& \frac{1}{n} \mathbb{E}_{j, l \in \mathcal{T}_2^{(i)}}[\bm{z}_j^\top \bm{z}_l] 
      + \frac{n - 1}{n}\mathbb{E}_{j \in \mathcal{T}_2^{(i)}, l\in \mathcal{T}_1^{(j)}}[\bm{z}_j^\top \bm{z}_l]
      - \mathbb{E}_{j\in \mathcal{T}_2^{(i)}, l \in \mathcal{D}^{(k)}}[\sjl] \\
      \geq& \frac{\delta_T - r \sqrt{5\sigma_i(\delta_T) \left(\delta_T + r^2\right)}}{n} 
        + \frac{3(n-1)d_T / 2 - (n-1)(1/2 + \sqrt{6q_T}) r^2}{n} \\
        &- \sqrt{Kr^4 / d - d_T^2 + 30r^4/n}
        - \sqrt{\frac{n-1}{n} q_T}  d_T
        - \sqrt{\frac{1}{n}\sigma_i(\delta_T)}\delta_T \\
      \geq& \frac{2\delta_T + (n-1)(3d_T - r^2)}{2n} 
        - \frac{r}{n} \sqrt{5\sigma_i(\delta_T) \left(\delta_T + r^2\right)}
        - \frac{n-1}{n}\sqrt{6q_T}r^2 \\
        &- \sqrt{Kr^4 / d - d_T^2 + 30r^4/n}
        - \sqrt{\frac{n-1}{n} q_T}  d_T
        - \sqrt{\frac{1}{n}\sigma_i(\delta_T)}\delta_T \\
      =& C_1 - C_2 \sqrt{\sigma_i(\delta_T)} - C_3 \sqrt{q_T} \\
      \geq& C_1 - C_2 \sqrt{p_T} - C_3 \sqrt{q_T}
    \end{aligned}
  \end{equation}
  where the last inequality holds when $C_3 \geq 0$, which is obvious if $n \geq 4$. Besides, when $\delta_T \geq d_T \geq (3 + 2\sqrt{13K/d + 390/n - 1})r^2 / 13$ we get $3d_T - r^2 - 2\sqrt{Kr^4 / d - d_T^2 + 390r^4/n} \geq 0$, leading to
  \begin{equation}
    C_1 = \frac{2(\delta_T - d_T) + r^2 - d_T}{2n} + \frac{3d_T - r^2}{2} - \sqrt{Kr^4 / d - d_T^2 + 30r^4/n} \geq 0.
  \end{equation}
  For any positive constant $0 < a < r^2$ we have
  \begin{equation}
    \begin{aligned}
      \mathbb{E}_{j,l\in \mathcal{T}_2^{(i)}}[\sjl]
      \leq& \Prob_{j \in \mathcal{T}_2^{(i)}}\left[\mathbb{E}_{l\in \mathcal{T}_2^{(j)}}[\sjl] \leq r^2 - a^2\right] (r^2 - a^2) \\ 
        &+ \left(1 - \Prob_{j \in \mathcal{T}_2^{(i)}}\left[\mathbb{E}_{l\in \mathcal{T}_2^{(j)}}[\sjl] \leq r^2 - a^2\right]\right)r^2 \\
      =& r^2 - a^2 \Prob_{j \in \mathcal{T}_2^{(i)}}\left[\mathbb{E}_{l\in \mathcal{T}_2^{(j)}}[\sjl] \leq r^2 - a^2\right] \\
      \leq& r^2 - a^2 \Prob_{j \in \mathcal{T}_2^{(i)}}\left[\big\|\zj - \mathbb{E}_{l\in \mathcal{T}_2^{(j)}}[\zl]\big\| \geq \sqrt{2}a\right].
    \end{aligned}
  \end{equation}
  On the other side, according to \textbf{(b)} in \Lemref{lem:bounds_of_expt} we have
  \begin{equation}
    \mathbb{E}_{j,l\in \mathcal{T}_2^{(i)}}[\sjl] \geq \delta_T - r \sqrt{5\sigma_i(\delta_T) \left(\delta_T + r^2\right)} \geq \delta_T - r\sqrt{5p_T \left(\delta_T + r^2\right)}.
  \end{equation}
  The above two inequalities lead to
  $\Prob_{j \in \mathcal{T}_2^{(i)}}\left[\big\|\zj - \mathbb{E}_{l\in \mathcal{T}_2^{(j)}}[\zl]\big\| \geq \sqrt{2}a\right] \leq \left(r^2 - \delta_T + r \sqrt{5p_T\left(\delta_T + r^2\right)}\right) / a^2$. 
  Notice that there exists a subset $\mathcal{B} \subset [N]$ such that $\{\mathcal{T}_2^{(i)} | i \in \mathcal{B}\}$ forms a partition of $[N]$, thus the above conclusion can be easily extended to all $j \in [N]$, \ie, 
  \begin{equation}
  \label{eq:expt_right}
    \Prob_{j \in [N]}\left[\big\|\zj - \mathbb{E}_{l\in \mathcal{T}_2^{(j)}}[\zl]\big\| \geq \sqrt{2}a\right] \leq \left(r^2 - \delta_T + r \sqrt{5p_T\left(\delta_T + r^2\right)}\right) / a^2
  \end{equation}

Consider \Lemref{lem:condition_cls_correct}, \Eqref{eq:expt_left} and \Eqref{eq:expt_right}, by setting $\sqrt{2}a = \frac{C_1 - C_2 \sqrt{p_T} - C_3 \sqrt{q_T}}{4r}$ we get
\begin{equation}
  \begin{aligned}
    \mathcal{E}(f) =& \Prob_{i \in [N]}[G_f(\xi) \neq y(\xi)] \\
    \leq& \Prob_{i \in [N]}\left[4r\big\|\zi - \mathbb{E}_{j\in \mathcal{T}_2^{(i)}}[\zj]\big\| \geq \mathbb{E}_{j\in \mathcal{T}_2^{(i)}, l\in \mathcal{D}^{(y(\xi))}}[\bm{z}_j^\top \bm{z}_l] 
    - \mathbb{E}_{j\in \mathcal{T}_2^{(i)}, l \in \mathcal{D}^{(k)}}[\sjl] \right] \\
    \leq& 32r^2 \cdot \frac{r^2 - \delta_T + r \sqrt{5p_T\left(\delta_T + r^2\right)}}{(C_1 - C_2 \sqrt{p_T} - C_3 \sqrt{q_T})^2}.
  \end{aligned}
\end{equation}
\end{proof}

\CotapUpBnd*
\begin{proof}  
\begin{equation}
  \begin{aligned}
    \ell_{CTAP}(\bm{u}, \bm{v}) 
    =& \frac{1}{N_{pair}}\sum_{k=1}^{N_{pair}} \gamma(\qhat_k) \cdot \ell_{AP}(\bm{u}, \bm{v};\qhat_k) \\
    =& \frac{1}{N_{pair}}\sum_{k=1}^{N_{pair}} \frac{\gamma(\qhat_k)}{\sum_{i=1}^{N_{pair}}\one[\qhat_i \geq \qhat_k]} \sum_{\qhat_i \geq \qhat_k} g\left(\frac{\sum_{\qhat_j < \qhat_k} \one[\phat_i \leq \phat_j]}{\sum_{\qhat_j \geq \qhat_k} \one[\phat_i \leq \phat_j]}\right) \\
    =& \frac{1}{N_{pair}}\sum_{k=1}^{N_{pair}} \frac{\gamma(\qhat_k)}{\sum_{i=1}^{N_{pair}}\one[\qhat_i \geq \qhat_k]} \sum_{\qhat_i \geq \qhat_k} g\left(
      \frac{\sum_{j=1}^{N_{pair}} \one[\phat_i \leq \phat_j] \one[\qhat_j < \qhat_k]}
      {\sum_{j=1}^{N_{pair}} \one[\phat_i \leq \phat_j]\one[\qhat_j \geq \qhat_k]}
    \right) \\
    \leq& \frac{1}{N_{pair}}\sum_{k=1}^{N_{pair}} \frac{\gamma(\qhat_k)}{\sum_{i=1}^{N_{pair}}\one[\qhat_i \geq \qhat_k]} \sum_{\qhat_i \geq \qhat_k} g\left(
      \frac{\sum_{j=1}^{N_{pair}} \one[\phat_i \leq \phat_j] \one[\qhat_j < \qhat_i]}
      {\sum_{j=1}^{N_{pair}} \one[\phat_i \leq \phat_j]\one[\qhat_j \geq \qhat_i]}
    \right) \\
    \leq& \frac{1}{N_{pair}}\sum_{i=1}^{N_{pair}} g\left(
      \frac{\sum_{j=1}^{N_{pair}} \one[\phat_i \leq \phat_j] \one[\qhat_j < \qhat_i]}
      {\sum_{j=1}^{N_{pair}} \one[\phat_i \leq \phat_j]\one[\qhat_j \geq \qhat_i]}
    \right) \cdot \sum_{\qhat_k \leq \qhat_i} \frac{\gamma(\qhat_k)}{\sum_{j=1}^{N_{pair}}\one[\qhat_j \geq \qhat_k]} \\
    =& \frac{1}{N_{pair}}\sum_{i=1}^{N_{pair}} \tilde{\gamma}_i \cdot g\left(
      \psi_i \sum_{j=1}^{N_{pair}} \one[\phat_i \leq \phat_j] \one[\qhat_j < \qhat_i]
    \right), \\
  \end{aligned}
\end{equation}
where $\psi_i = 1 / \sum_{j=1}^{N_{pair}} \one[\phat_i \leq \phat_j]\one[\qhat_j \geq \qhat_i]$, $\tilde{\gamma}_i = \sum_{\qhat_k \leq \qhat_i} \frac{\gamma(\qhat_k)}{\sum_{j=1}^{N_{pair}}\one[\qhat_j \geq \qhat_k]}$.
\end{proof}


\section{Implementation Details \& Additional Results}
\label{app:imp_detail}

\subsection{Effects of Non-strict Spatial Alignment \& Augmentations}
\label{app:cifar}
\textbf{Overall Setting.} To explore the impact of position randomness on model performance, we conducted three experiments: \textbf{1)} For each view, we independently applied the random crop and augmentations. \textbf{2)} For both views, we resized the original image to $32\times 32$ and then applied additional augmentations. \textbf{3)} For both views, we randomly cropped the same area of the original image, resized it to $32\times 32$, and then applied other augmentations respectively.  For \textbf{2)} and \textbf{3)}, both views share the same position.

\textbf{Basic Setup.} Following \cite{huang2022towards}, we trained models for 800 epochs on both CIFAR-10 and CIFAR-100 datasets using a batch size of $512$. We utilize SGD as the optimizer, enhanced with a CosineAnnealing scheduler. The momentum was set at $0.9$, and the weight decay was fixed at $5 \times 10^{-4}$. The initial learning rates for SimCLR \cite{chen2020simple}, MoCo \cite{he2020momentum}, SimSiam \cite{simsiam}, and Barlow Twins \cite{zbontar2021barlow} were $0.06$, $0.08$, $0.05$, and $0.11$, respectively. For evaluation, we employed a $k$-NN classifier with $K=200$ and a temperature setting of $\tau=0.1$, presenting the average results from three separate trials.

\textbf{Augmentations.}
In terms of image augmentations, we set the scale for random cropping to $(0.08, 1)$. Gaussian blur was applied with a $50\%$ chance, using a mean kernel size of $3.2$. The random grayscale conversion was applied with a $20\%$ probability. For color jittering, we applied it $80\%$ of the time with an intensity of $0.5$. Lastly, we used random horizontal flipping with a $50\%$ chance.

\textbf{Additional results.} As shown in \Tbref{tab:effect_aug_full}, performance under the setting \textbf{2)} and \textbf{3)} consistently decreases, which supports our hypothesis that non-strict spatial alignment is crucial for semantic concatenation. 

\begin{table}[h]
  \caption{Downstream accuracy under different types of augmentations: (a) random cropping; (b) random Gaussian blur; (c) color dropping; (d) color distortion; (e) random horizontal flipping. Here \Circpipe~ means applying the same random cropping to generate two views for an image.}
  \label{tab:effect_aug_full}
  \centering
  \begin{tabular}{ccccc|cccc|cccc}
  \toprule
  \multicolumn{5}{c|}{Transformations} & \multicolumn{4}{c|}{CIFAR-10} &\multicolumn{4}{c}{CIFAR-100} \\
  (a) & (b) & (c) & (d) & (e) & SimCLR  & Barlow Twins  & MoCo  & SimSiam & SimCLR  & Barlow Twins  & MoCo  & SimSiam \\ 
  \midrule
  \cmark  & \cmark  & \cmark  & \cmark  & \cmark  & \cellcolor[rgb]{0.557 0.812 0.788}\textbf{\underline{89.8}} & \cellcolor[rgb]{0.557 0.812 0.788}\textbf{\underline{86.9}} & \cellcolor[rgb]{0.557 0.812 0.788}\textbf{\underline{90.1}} & \cellcolor[rgb]{0.557 0.812 0.788}\textbf{\underline{90.6}} & \cellcolor[rgb]{1.000 0.745 0.478}\textbf{\underline{57.7}} & \cellcolor[rgb]{1.000 0.745 0.478}\textbf{\underline{58.0}} & \cellcolor[rgb]{1.000 0.745 0.478}\textbf{\underline{64.2}} & \cellcolor[rgb]{1.000 0.745 0.478}\textbf{\underline{63.5}} ~\\
  \cmark  & \cmark  & \cmark  & \cmark  &  ~  & \cellcolor[rgb]{0.569 0.817 0.794}88.5 & \cellcolor[rgb]{0.569 0.817 0.794}85.4 & \cellcolor[rgb]{0.561 0.813 0.790}89.7 & \cellcolor[rgb]{0.566 0.816 0.792}89.3 & \cellcolor[rgb]{1.000 0.759 0.508}55.4 & \cellcolor[rgb]{1.000 0.760 0.509}55.2 & \cellcolor[rgb]{1.000 0.754 0.497}62.5 & \cellcolor[rgb]{1.000 0.759 0.507}60.3 ~\\
  \cmark  & \cmark  & \cmark  &  ~  &  ~  & \cellcolor[rgb]{0.614 0.836 0.815}83.5 & \cellcolor[rgb]{0.595 0.828 0.806}82.0 & \cellcolor[rgb]{0.587 0.825 0.803}86.8 & \cellcolor[rgb]{0.592 0.827 0.805}85.4 & \cellcolor[rgb]{1.000 0.824 0.639}45.1 & \cellcolor[rgb]{1.000 0.785 0.560}50.4 & \cellcolor[rgb]{1.000 0.784 0.558}57.0 & \cellcolor[rgb]{1.000 0.799 0.588}51.4 ~\\
  \cmark  & \cmark  &  ~  &  ~  &  ~  & \cellcolor[rgb]{0.798 0.914 0.903}63.2 & \cellcolor[rgb]{0.705 0.875 0.859}67.8 & \cellcolor[rgb]{0.695 0.870 0.854}75.1 & \cellcolor[rgb]{0.742 0.890 0.877}63.3 & \cellcolor[rgb]{1.000 0.931 0.858}28.0 & \cellcolor[rgb]{1.000 0.871 0.736}34.1 & \cellcolor[rgb]{1.000 0.875 0.744}40.2 & \cellcolor[rgb]{1.000 0.910 0.816}26.3 ~\\
  \cmark  &  ~  &  ~  &  ~  &  ~  & \cellcolor[rgb]{0.802 0.916 0.905}62.7 & \cellcolor[rgb]{0.705 0.875 0.859}67.8 & \cellcolor[rgb]{0.696 0.871 0.855}74.9 & \cellcolor[rgb]{0.754 0.895 0.882}61.5 & \cellcolor[rgb]{1.000 0.931 0.859}27.9 & \cellcolor[rgb]{1.000 0.872 0.737}34.0 & \cellcolor[rgb]{1.000 0.878 0.750}39.6 & \cellcolor[rgb]{1.000 0.912 0.820}25.9 ~\\
  & \cmark  & \cmark  & \cmark  & \cmark  & \cellcolor[rgb]{0.887 0.952 0.946}53.3 & \cellcolor[rgb]{0.917 0.965 0.960}40.6 & \cellcolor[rgb]{0.872 0.946 0.939}55.8 & \cellcolor[rgb]{0.955 0.981 0.979}31.8 & \cellcolor[rgb]{1.000 0.973 0.945}21.2 & \cellcolor[rgb]{1.000 0.955 0.908}18.1 & \cellcolor[rgb]{1.000 0.934 0.865}29.2 & \cellcolor[rgb]{1.000 0.974 0.948}11.8 ~\\
  ~  & \cmark  & \cmark  & \cmark  &  ~  & \cellcolor[rgb]{0.889 0.953 0.947}53.1 & \cellcolor[rgb]{0.925 0.968 0.964}39.6 & \cellcolor[rgb]{0.892 0.954 0.948}53.6 & \cellcolor[rgb]{0.921 0.967 0.962}36.8 & \cellcolor[rgb]{1.000 0.984 0.966}19.5 & \cellcolor[rgb]{1.000 0.957 0.913}17.7 & \cellcolor[rgb]{1.000 0.949 0.896}26.4 & \cellcolor[rgb]{1.000 0.990 0.980}8.2 ~\\
  ~  & \cmark  & \cmark  &  ~  &  ~  & \cellcolor[rgb]{1.000 1.000 1.000}40.8 & \cellcolor[rgb]{0.983 0.993 0.992}32.1 & \cellcolor[rgb]{1.000 1.000 1.000}41.9 & \cellcolor[rgb]{1.000 1.000 1.000}25.2 & \cellcolor[rgb]{1.000 1.000 1.000}16.9 & \cellcolor[rgb]{1.000 1.000 1.000}9.6 & \cellcolor[rgb]{1.000 1.000 1.000}17.0 & \cellcolor[rgb]{1.000 1.000 1.000}6.0 ~\\
  ~  & \cmark  &  ~  &  ~  &  ~  & \cellcolor[rgb]{0.960 0.983 0.981}45.2 & \cellcolor[rgb]{1.000 1.000 1.000}29.9 & \cellcolor[rgb]{0.942 0.975 0.972}48.2 & \cellcolor[rgb]{0.975 0.989 0.988}28.9 & \cellcolor[rgb]{1.000 0.959 0.917}23.4 & \cellcolor[rgb]{1.000 0.998 0.996}10.0 & \cellcolor[rgb]{1.000 0.973 0.945}22.0 & \cellcolor[rgb]{1.000 0.995 0.990}7.2 ~\\
  \Circpipe& \cmark  & \cmark  & \cmark  & \cmark  & \cellcolor[rgb]{0.856 0.939 0.931}56.7 & \cellcolor[rgb]{0.911 0.962 0.958}41.3 & \cellcolor[rgb]{0.894 0.955 0.949}53.4 & \cellcolor[rgb]{0.956 0.981 0.979}31.7 & \cellcolor[rgb]{1.000 0.941 0.878}26.4 & \cellcolor[rgb]{1.000 0.963 0.924}16.7 & \cellcolor[rgb]{1.000 0.946 0.889}27.0 & \cellcolor[rgb]{1.000 0.988 0.975}8.8 ~\\
  \Circpipe& \cmark  & \cmark  & \cmark  &  ~  & \cellcolor[rgb]{0.827 0.927 0.918}59.9 & \cellcolor[rgb]{0.932 0.971 0.968}38.6 & \cellcolor[rgb]{0.891 0.954 0.948}53.7 & \cellcolor[rgb]{0.960 0.983 0.981}31.1 & \cellcolor[rgb]{1.000 0.930 0.857}28.1 & \cellcolor[rgb]{1.000 1.000 1.000}6.3 & \cellcolor[rgb]{1.000 0.949 0.896}26.4 & \cellcolor[rgb]{1.000 1.000 1.000}5.4 ~\\
  \Circpipe& \cmark  & \cmark  &  ~  &  ~  & \cellcolor[rgb]{1.000 1.000 1.000}40.1 & \cellcolor[rgb]{0.981 0.992 0.991}32.3 & \cellcolor[rgb]{1.000 1.000 1.000}41.1 & \cellcolor[rgb]{1.000 1.000 1.000}24.8 & \cellcolor[rgb]{1.000 1.000 1.000}16.6 & \cellcolor[rgb]{1.000 1.000 1.000}9.5 & \cellcolor[rgb]{1.000 1.000 1.000}16.5 & \cellcolor[rgb]{1.000 0.992 0.983}7.9 ~\\
  \Circpipe& \cmark  &  ~  &  ~  &  ~  & \cellcolor[rgb]{0.993 0.997 0.997}41.6 & \cellcolor[rgb]{0.996 0.998 0.998}30.4 & \cellcolor[rgb]{0.939 0.974 0.971}48.5 & \cellcolor[rgb]{0.985 0.994 0.993}27.4 & \cellcolor[rgb]{1.000 0.980 0.960}20.0 & \cellcolor[rgb]{1.000 1.000 1.000}8.5 & \cellcolor[rgb]{1.000 0.965 0.928}23.5 & \cellcolor[rgb]{1.000 0.998 0.997}6.4 ~\\
  \bottomrule
  \end{tabular}%
\end{table}

\subsection{Transfer Learning on Semantic Segmentation}

\textbf{Datasets.}
We evaluate the learned dense presentations on three semantic segmentation benchmarks with various domains and scales, including \textbf{COCOStuff27}\footnote{\url{https://github.com/nightrome/cocostuff}} \cite{caesar2018coco}, \textbf{PASCAL VOC 2012}\footnote{\url{http://host.robots.ox.ac.uk/pascal/VOC}} \cite{everingham2010pascal}, \textbf{ADE20k}\footnote{\url{https://groups.csail.mit.edu/vision/datasets/ADE20K}} \cite{zhou2017scene}, and \textbf{Cityscapes}\footnote{\url{https://www.cityscapes-dataset.com}} \cite{cordts2016cityscapes}. We follow the official setting to split each dataset into a training set and a validation set. 

\noindent\textbf{Network architecture.}
Following \cite{dino}, we drop the projector and use the frozen patch representations extracted from ViT-S/16. In the linear transfer learning setting, we utilize a $1\times 1$ convolutional layer to map the patch representations into probability vectors. As for FCN, we stack three upsampling blocks, where each block consists of two $1\times 1$ convolutional layers, one $3\times 3$ convolutional layer, and a bilinear upsampling layer. The activation function is implemented as ReLU.

\noindent\textbf{Optimization strategy.}
Since the Cityscapes dataset involves numerous small objects, images are randomly cropped into $896\times 896$ at the phase of training and resized into $1024\times 1024$ at the testing phase. For the other three datasets, images are randomly cropped into $336\times 336$ at the phase of training and resized into $448\times 448$ at the testing phase.
The additional layers are optimized with the Adam \cite{loshchilovdecoupled} optimizer. The initial learning rate is searched in the range $\{10^{-4}, 10^{-3}, 3\times 10^{-3}, 1\times {-2}\}$. For FCN, the learning rate decays to $10^{-4}$ with a cosine decay schedule. By default, we set the batch size to $64$ and $128$ for Cityscapes and other datasets, respectively. Linear models are trained with $100k$ samples (or equivalently $1560$ iterations). FCN models are trained with $400k$ samples (or equivalently $6250$ iterations). It takes at most $3$ hours to train an FCN model on eight NVIDIA RTX 4090 GPUs.

\noindent\textbf{Additional Results.}
The pixel accuracy results are shown in \Tbref{tab:semantic_seg_acc}. It can be seen that the pixel accuracy is basically consistent with mIoU. However, limited by the imbalanced distribution of categories, the pixel accuracy metric is insensitive to comparable methods, especially for FCN models which are more discriminative.

\begin{table*}
  \caption{Transfer learning results (pixel accuracy) on semantic segmentation.}
  \centering
  \begin{tabular}{ll|cccccccc}
      \toprule
      \multirow{2}{*}{Methods} & \multirow{2}{*}{Dataset} & \multicolumn{2}{c}{\textbf{COCOStuff-27}} & \multicolumn{2}{c}{\textbf{PASCAL VOC}}  & \multicolumn{2}{c}{\textbf{ADE20k}} & \multicolumn{2}{c}{\textbf{Cityscapes}} \\
      & & Linear & FCN & Linear & FCN & Linear & FCN & Linear & FCN \\
      \midrule
      DINO \cite{dino} & IN-1k & \cellcolor[rgb]{1.000 1.000 1.000}71.7 & \cellcolor[rgb]{0.968 0.987 0.985}79.2 & \cellcolor[rgb]{1.000 1.000 1.000}88.4 & \cellcolor[rgb]{1.000 0.958 0.913}92.5 & \cellcolor[rgb]{1.000 1.000 1.000}66.5 & \cellcolor[rgb]{0.999 0.965 0.961}73.2 & \cellcolor[rgb]{0.871 0.918 0.954}88.5 & \cellcolor[rgb]{0.548 0.714 0.837}92.2 ~\\
      iBOT \cite{zhou2021ibot} & IN-1k & \cellcolor[rgb]{0.883 0.950 0.944}76.5 & \cellcolor[rgb]{0.731 0.886 0.871}80.7 & \cellcolor[rgb]{1.000 0.857 0.707}92.3 & \cellcolor[rgb]{1.000 0.847 0.687}93.8 & \cellcolor[rgb]{0.996 0.885 0.871}70.8 & \cellcolor[rgb]{0.986 0.654 0.611}75.0 & \cellcolor[rgb]{0.794 0.870 0.926}88.8 & \cellcolor[rgb]{0.585 0.738 0.851}92.1 ~\\
      Leopart \cite{leopart} & IN-1k + CC & \cellcolor[rgb]{0.848 0.936 0.928}76.8 & \cellcolor[rgb]{0.842 0.933 0.924}80.0 & \cellcolor[rgb]{1.000 0.913 0.822}91.4 & \cellcolor[rgb]{1.000 0.915 0.826}93.0 & \cellcolor[rgb]{1.000 1.000 1.000}69.4 & \cellcolor[rgb]{0.997 0.931 0.922}73.4 & \cellcolor[rgb]{0.665 0.788 0.879}\underline{89.3} & \cellcolor[rgb]{0.661 0.786 0.878}91.9 ~\\
      Mugs \cite{zhou2022mugs} & IN-1k & \cellcolor[rgb]{0.778 0.906 0.894}\underline{77.4} & \cellcolor[rgb]{0.652 0.852 0.834}\underline{81.2} & \cellcolor[rgb]{1.000 0.801 0.593}\underline{93.2} & \cellcolor[rgb]{1.000 0.788 0.565}\underline{94.5} & \cellcolor[rgb]{0.988 0.684 0.645}\underline{72.2} & \cellcolor[rgb]{0.982 0.550 0.494}\underline{75.6} & \cellcolor[rgb]{0.690 0.804 0.889}89.2 & \cellcolor[rgb]{0.510 0.690 0.824}\textbf{\underline{92.3}} ~\\
      SERE \cite{li2023sere} & IN-1k & \cellcolor[rgb]{0.802 0.916 0.905}77.2 & \cellcolor[rgb]{0.778 0.906 0.894}80.4 & \cellcolor[rgb]{1.000 0.820 0.631}92.9 & \cellcolor[rgb]{1.000 0.847 0.687}93.8 & \cellcolor[rgb]{0.990 0.742 0.710}71.8 & \cellcolor[rgb]{0.988 0.688 0.649}74.8 & \cellcolor[rgb]{0.690 0.804 0.889}89.2 & \cellcolor[rgb]{0.585 0.738 0.851}92.1 ~\\
      CrOC \cite{stegmuller2023croc} & IN-1k & \cellcolor[rgb]{0.953 0.980 0.978}75.9 & \cellcolor[rgb]{0.937 0.973 0.970}79.4 & \cellcolor[rgb]{1.000 0.857 0.707}92.3 & \cellcolor[rgb]{1.000 0.983 0.965}92.2 & \cellcolor[rgb]{0.999 0.986 0.984}70.1 & \cellcolor[rgb]{0.996 0.896 0.883}73.6 & \cellcolor[rgb]{0.819 0.886 0.935}88.7 & \cellcolor[rgb]{0.661 0.786 0.878}91.9 ~\\
      \textbf{DINO + Ours} & IN-1k + CC & \cellcolor[rgb]{0.604 0.832 0.811}78.9 & \cellcolor[rgb]{0.620 0.839 0.818}81.4 & \cellcolor[rgb]{1.000 0.776 0.542}93.6 & \cellcolor[rgb]{1.000 0.788 0.565}94.5 & \cellcolor[rgb]{0.981 0.512 0.451}73.4 & \cellcolor[rgb]{0.982 0.550 0.494}75.6 & \cellcolor[rgb]{0.510 0.690 0.824}\textbf{89.9} & \cellcolor[rgb]{0.510 0.690 0.824}92.3 ~\\
      \textbf{iBOT + Ours} & IN-1k + CC & \cellcolor[rgb]{0.557 0.812 0.788}\textbf{79.3} & \cellcolor[rgb]{0.557 0.812 0.788}\textbf{81.8} & \cellcolor[rgb]{1.000 0.751 0.491}94.0 & \cellcolor[rgb]{1.000 0.771 0.531}94.7 & \cellcolor[rgb]{0.982 0.527 0.468}73.3 & \cellcolor[rgb]{0.980 0.498 0.435}\textbf{75.9} & \cellcolor[rgb]{0.613 0.755 0.861}89.5 & \cellcolor[rgb]{0.623 0.762 0.864}92.0 ~\\
      \textbf{Mugs + Ours} & IN-1k + CC & \cellcolor[rgb]{0.592 0.827 0.805}79.0 & \cellcolor[rgb]{0.573 0.818 0.796}81.7 & \cellcolor[rgb]{1.000 0.745 0.478}\textbf{94.1} & \cellcolor[rgb]{1.000 0.745 0.478}\textbf{95.0} & \cellcolor[rgb]{0.980 0.498 0.435}\textbf{73.5} & \cellcolor[rgb]{0.980 0.498 0.435}75.9 & \cellcolor[rgb]{0.510 0.690 0.824}89.9 & \cellcolor[rgb]{0.623 0.762 0.864}92.0 ~\\
      \bottomrule
  \end{tabular}
  \label{tab:semantic_seg_acc}
  \end{table*}

\begin{table*}[t]
  \caption{Results on object detection and instance segmentation.}
  \setlength\tabcolsep{3.6pt}
  \centering
    \begin{tabular}{ll|cccccc|cccccc}
        \toprule
        \multirow{2}{*}{Methods} & \multirow{2}{*}{Arch.} & \multicolumn{6}{c|}{\textbf{Object Detection}} & \multicolumn{6}{c}{\textbf{Instance Segmentation}} \\
        & & $\mathrm{AP^b}$ & $\mathrm{AP^b_{50}}$ & $\mathrm{AP^b_{75}}$ & $\mathrm{AP^b_{small}}$ & $\mathrm{AP^b_{medium}}$ & $\mathrm{AP^b_{large}}$ & $\mathrm{AP^m}$ & $\mathrm{AP^m_{50}}$ & $\mathrm{AP^m_{75}}$ & $\mathrm{AP^m_{small}}$ & $\mathrm{AP^m_{medium}}$ & $\mathrm{AP^m_{large}}$ \\
        \midrule
        Sup. \cite{liu2021swin} & Swin-T & \cellcolor[rgb]{0.982 0.992 0.991}48.1 & \cellcolor[rgb]{0.986 0.994 0.993}67.1 & \cellcolor[rgb]{0.815 0.922 0.912}52.5 &  --  &  --  &  --  & \cellcolor[rgb]{0.994 0.854 0.835}41.7 & \cellcolor[rgb]{0.997 0.926 0.916}64.4 & \cellcolor[rgb]{0.992 0.807 0.783}45.0 &  --  &  --  &  --  ~\\
        MoBY \cite{xie2021self} & Swin-T & \cellcolor[rgb]{0.982 0.992 0.991}48.1 & \cellcolor[rgb]{0.986 0.994 0.993}67.1 & \cellcolor[rgb]{0.865 0.942 0.935}52.1 &  --  &  --  &  --  & \cellcolor[rgb]{0.996 0.895 0.882}41.5 & \cellcolor[rgb]{1.000 1.000 1.000}64.0 & \cellcolor[rgb]{0.995 0.865 0.848}44.7 &  --  &  --  &  --  ~\\
        Sup. \cite{liu2021swin} & ViT-S/16 & \cellcolor[rgb]{1.000 1.000 1.000}46.2 & \cellcolor[rgb]{1.000 1.000 1.000}65.9 & \cellcolor[rgb]{1.000 1.000 1.000}49.6 &  --  &  --  &  --  & \cellcolor[rgb]{1.000 1.000 1.000}40.1 & \cellcolor[rgb]{1.000 1.000 1.000}62.9 & \cellcolor[rgb]{1.000 1.000 1.000}42.8 &  --  &  --  &  --  ~\\
        iBOT \cite{zhou2021ibot} & ViT-S/16 & \cellcolor[rgb]{0.742 0.890 0.876}49.4 & \cellcolor[rgb]{0.757 0.897 0.884}68.7 & \cellcolor[rgb]{0.717 0.880 0.865}53.3 &  --  &  --  &  --  & \cellcolor[rgb]{0.987 0.665 0.624}42.6 & \cellcolor[rgb]{0.988 0.703 0.665}65.6 & \cellcolor[rgb]{0.986 0.652 0.609}45.8 &  --  &  --  &  --  ~\\
        Mugs \cite{zhou2022mugs} & ViT-S/16 & \cellcolor[rgb]{0.668 0.859 0.841}\underline{49.8} & \cellcolor[rgb]{0.728 0.885 0.870}\underline{68.9} & \cellcolor[rgb]{0.692 0.869 0.853}\underline{53.5} & \cellcolor[rgb]{1.000 0.967 0.932}\underline{31.3} & \cellcolor[rgb]{1.000 0.891 0.776}\underline{52.9} & \cellcolor[rgb]{1.000 0.777 0.544}\underline{66.1} & \cellcolor[rgb]{0.984 0.582 0.529}\underline{43.0} & \cellcolor[rgb]{0.988 0.684 0.644}\underline{65.7} & \cellcolor[rgb]{0.986 0.633 0.587}\underline{45.9} & \cellcolor[rgb]{0.808 0.879 0.931}\underline{24.9} & \cellcolor[rgb]{1.000 1.000 1.000}\underline{46.0} & \cellcolor[rgb]{0.510 0.690 0.824}\underline{59.8} ~\\
        \textbf{DINO + Ours} & ViT-S/16 & \cellcolor[rgb]{0.742 0.890 0.876}49.4 & \cellcolor[rgb]{0.771 0.903 0.891}68.6 & \cellcolor[rgb]{0.729 0.885 0.871}53.2 & \cellcolor[rgb]{1.000 0.900 0.796}31.9 & \cellcolor[rgb]{1.000 0.891 0.776}52.9 & \cellcolor[rgb]{1.000 0.904 0.804}64.9 & \cellcolor[rgb]{0.987 0.665 0.624}42.6 & \cellcolor[rgb]{0.988 0.703 0.665}65.6 & \cellcolor[rgb]{0.985 0.614 0.566}46.0 & \cellcolor[rgb]{0.680 0.798 0.885}25.5 & \cellcolor[rgb]{1.000 1.000 1.000}46.0 & \cellcolor[rgb]{0.864 0.914 0.951}58.5 ~\\
        \textbf{iBOT + Ours} & ViT-S/16 & \cellcolor[rgb]{0.668 0.859 0.841}49.8 & \cellcolor[rgb]{0.714 0.879 0.863}69.0 & \cellcolor[rgb]{0.643 0.848 0.829}53.9 & \cellcolor[rgb]{1.000 0.922 0.841}31.7 & \cellcolor[rgb]{1.000 0.854 0.702}53.2 & \cellcolor[rgb]{1.000 0.777 0.544}66.1 & \cellcolor[rgb]{0.984 0.603 0.553}42.9 & \cellcolor[rgb]{0.985 0.610 0.561}66.1 & \cellcolor[rgb]{0.983 0.575 0.522}46.2 & \cellcolor[rgb]{0.744 0.838 0.908}25.2 & \cellcolor[rgb]{0.777 0.859 0.920}46.5 & \cellcolor[rgb]{0.673 0.793 0.882}59.2 ~\\
        \textbf{Mugs + Ours} & ViT-S/16 & \cellcolor[rgb]{0.557 0.812 0.788}\textbf{50.4} & \cellcolor[rgb]{0.557 0.812 0.788}\textbf{70.1} & \cellcolor[rgb]{0.557 0.812 0.788}\textbf{54.6} & \cellcolor[rgb]{1.000 0.745 0.478}\textbf{33.3} & \cellcolor[rgb]{1.000 0.745 0.478}\textbf{54.1} & \cellcolor[rgb]{1.000 0.745 0.478}\textbf{66.4} & \cellcolor[rgb]{0.980 0.498 0.435}\textbf{43.4} & \cellcolor[rgb]{0.980 0.498 0.435}\textbf{66.7} & \cellcolor[rgb]{0.980 0.498 0.435}\textbf{46.6} & \cellcolor[rgb]{0.510 0.690 0.824}\textbf{26.3} & \cellcolor[rgb]{0.510 0.690 0.824}\textbf{47.1} & \cellcolor[rgb]{0.510 0.690 0.824}\textbf{59.8} ~\\
        \bottomrule
    \end{tabular}
  \label{tab:obj_det_full}
\end{table*}

  \begin{figure*}[t]
    \centering
    \vspace{-2mm}
    \subfigure[Image classification on ImageNet-1k]{
      \includegraphics[scale=0.27]{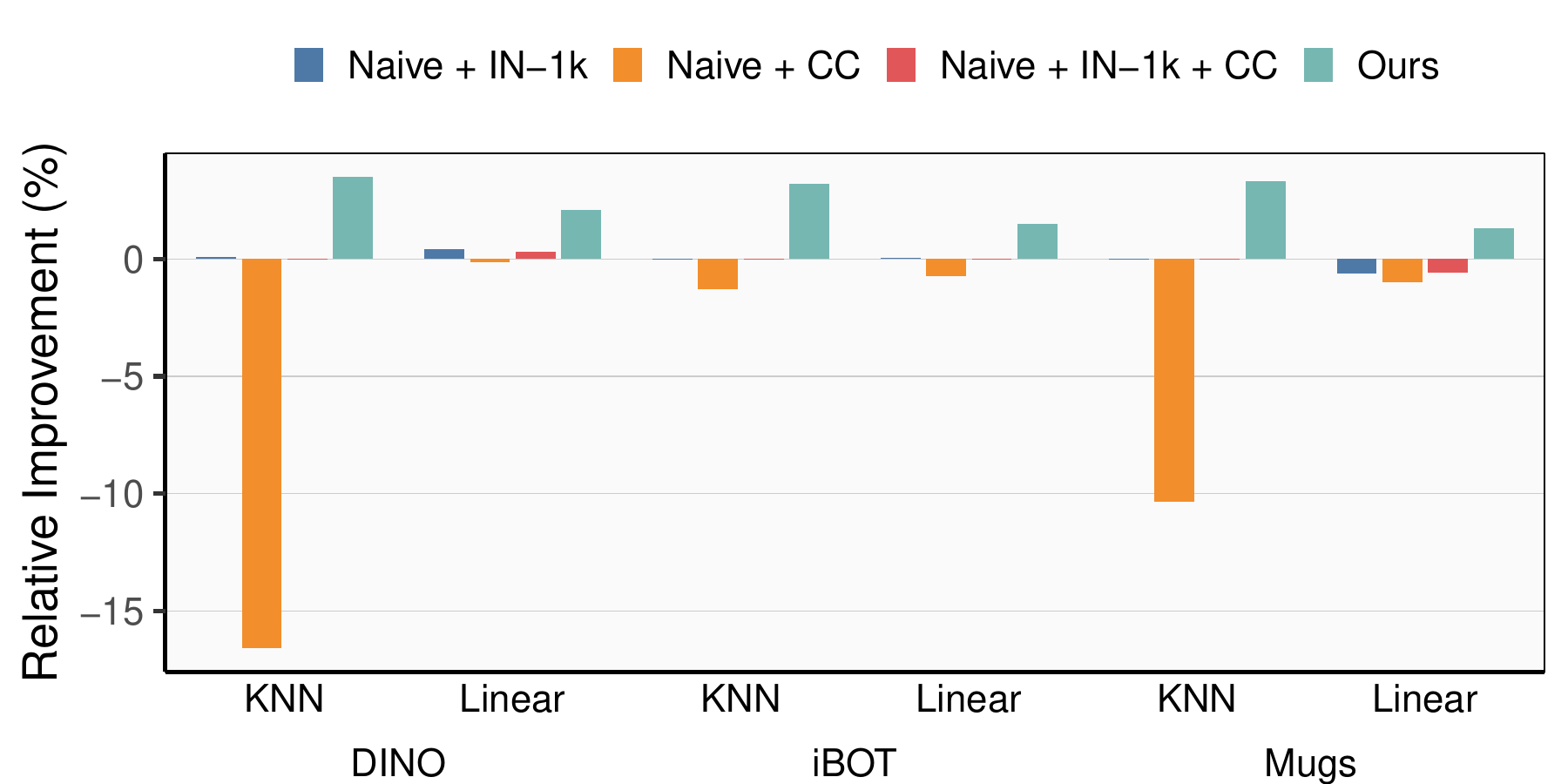}
    }
    \subfigure[Semantic segmentation on COCOStuff-27]{
      \includegraphics[scale=0.27]{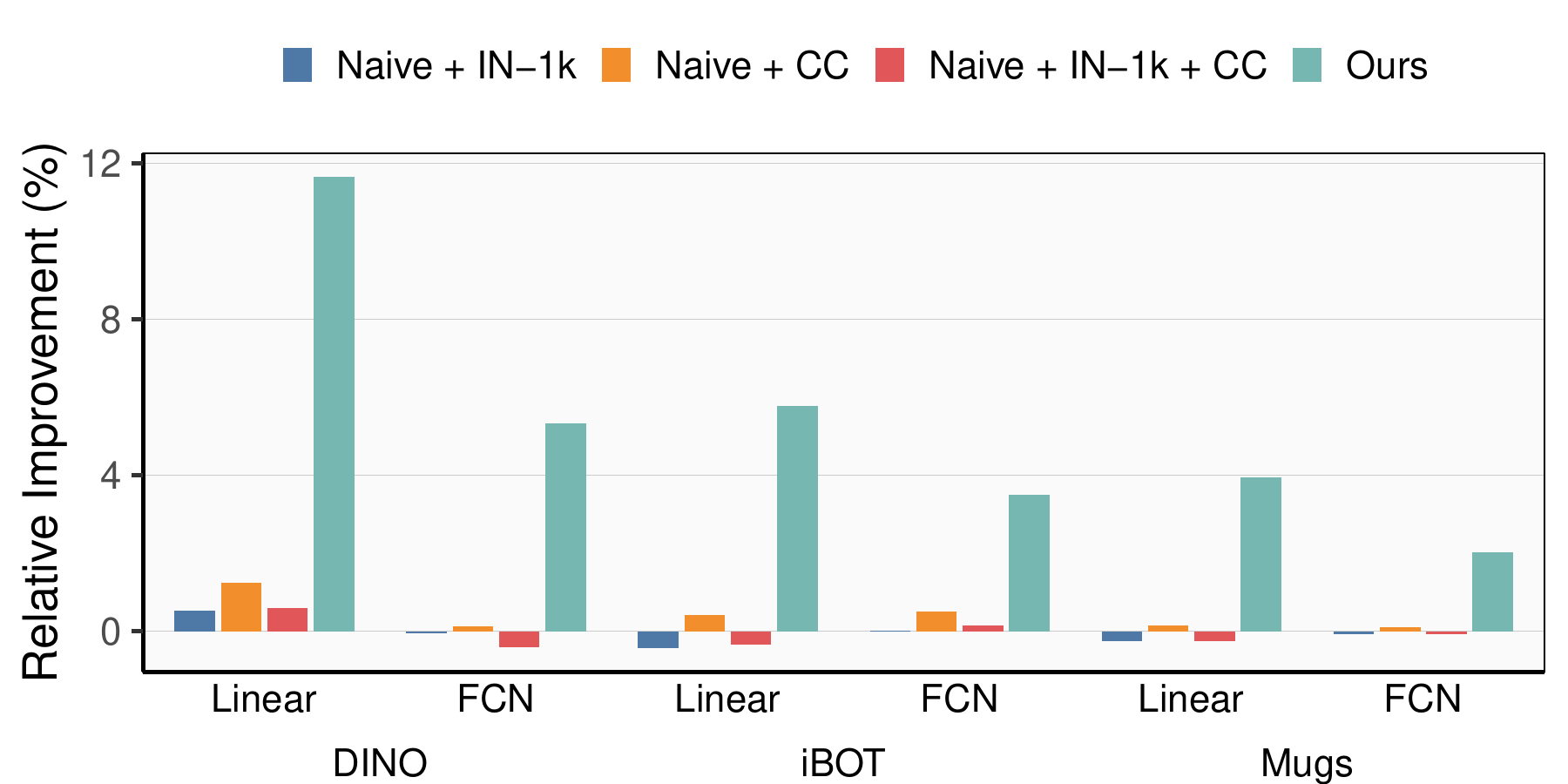}
    }
    \subfigure[Semantic segmentation on PASCAL VOC]{
      \includegraphics[scale=0.27]{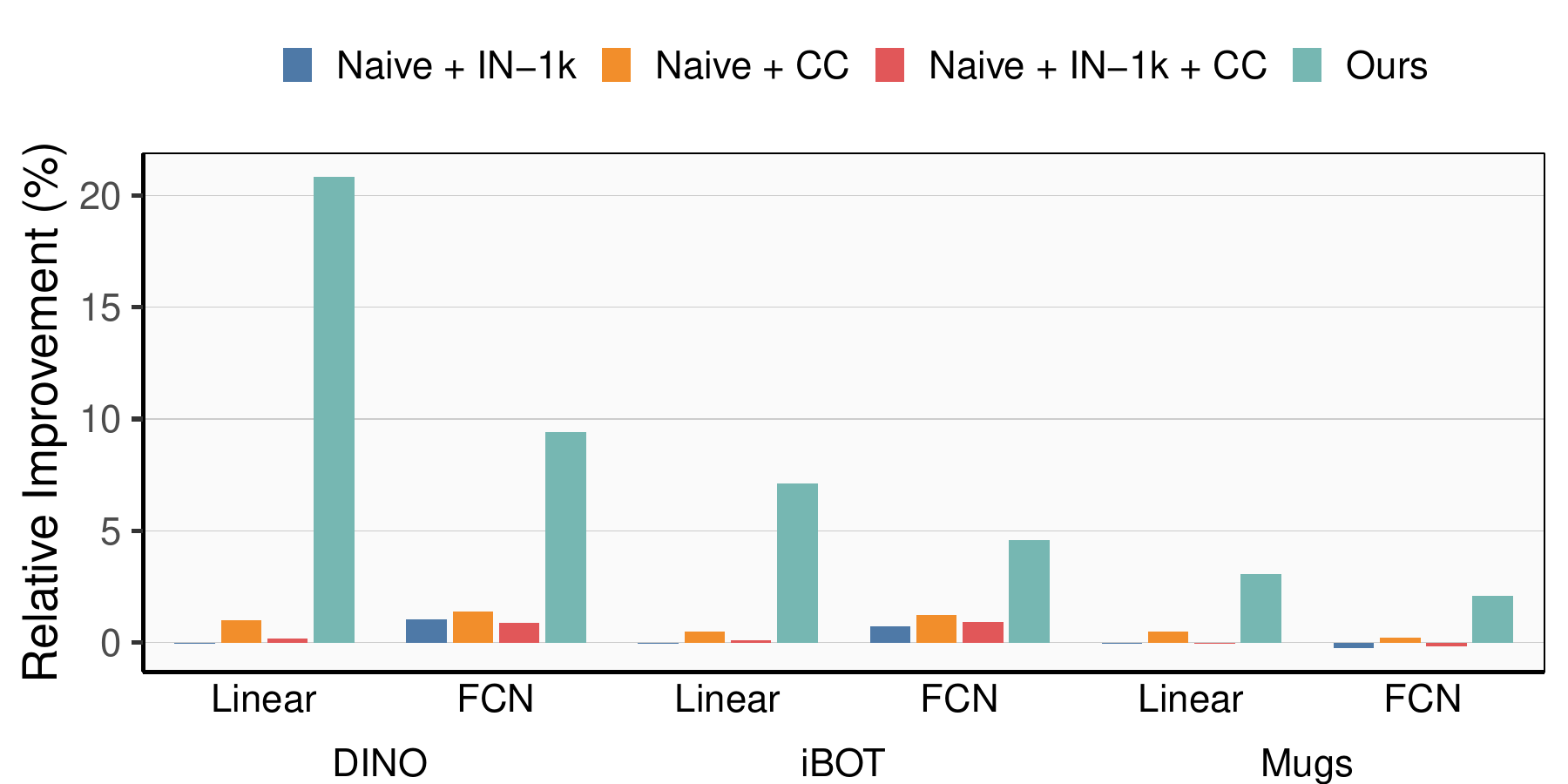}
    }
    \subfigure[Semantic segmentation on ADE20k]{
      \includegraphics[scale=0.27]{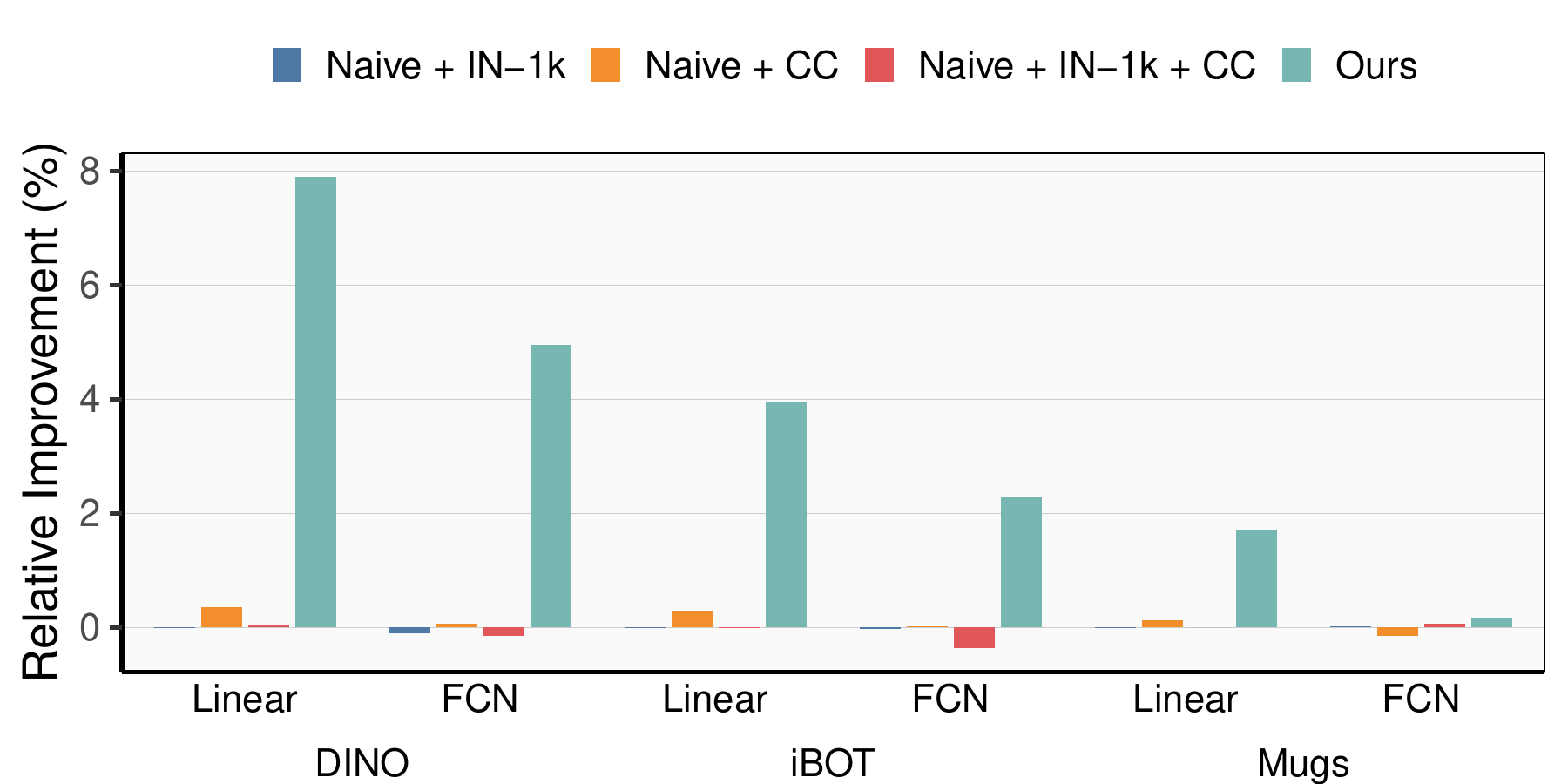}
    }
    \subfigure[Semantic segmentation on Cityscapes]{
      \includegraphics[scale=0.27]{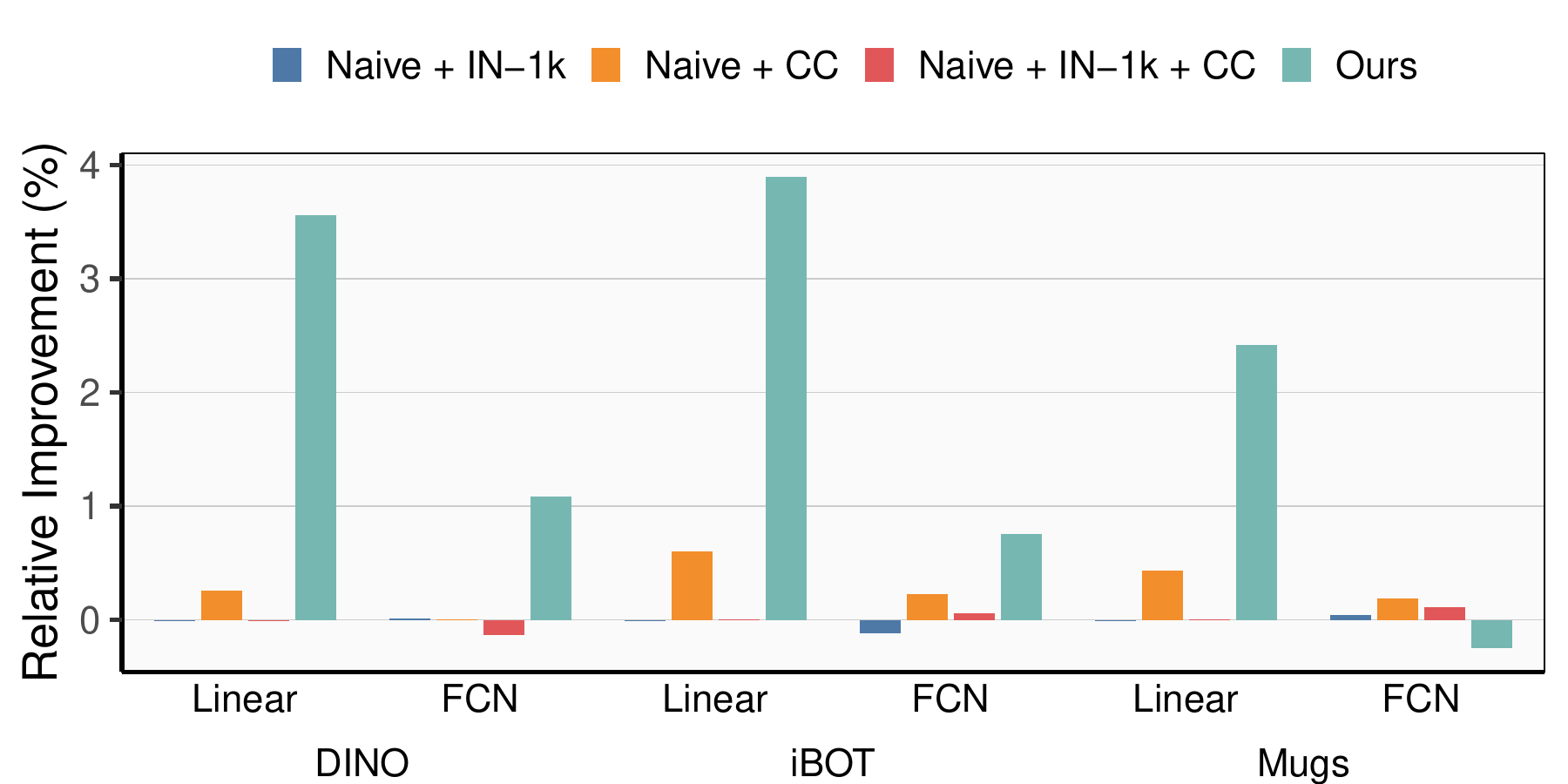}
    }
    \subfigure[Video object segmentation on DAVIS-2017]{
      \includegraphics[scale=0.27]{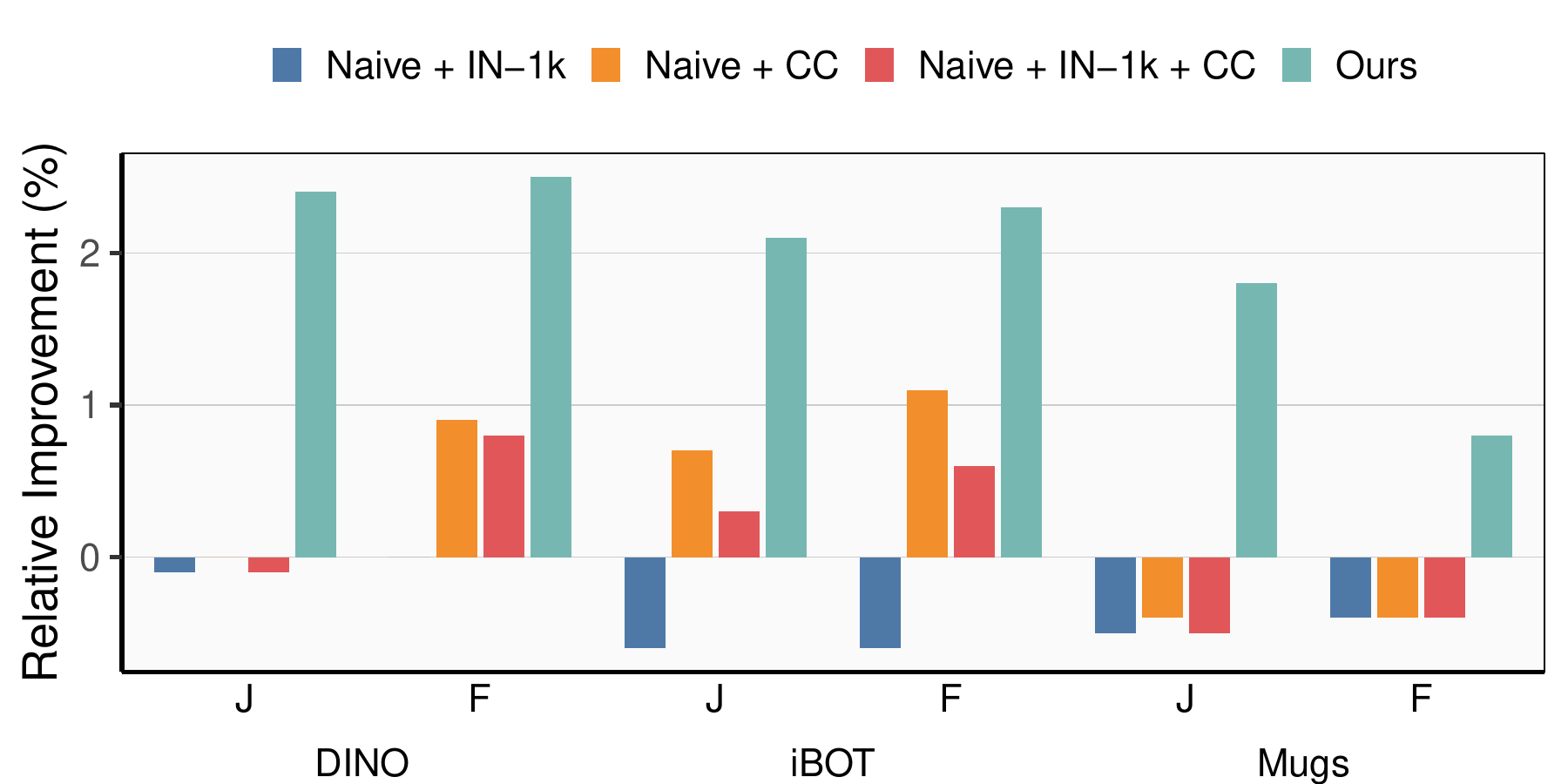}
    }
    \vspace{-2mm}
    \caption{Relative improvement of additional training epochs and the training dataset over different baselines.}
    \label{fig:longer_training_full}
  \end{figure*}
  
\subsection{Transfer Learning on Object Detection \& Instance Segmentation}
\noindent\textbf{Implementation.} Following \cite{zhou2021ibot,zhou2022mugs}, we plug the pretrained model into the Cascade Mask
R-CNN \cite{cai2019cascade}. Following \cite{li2022exploring}, we apply a simple feature pyramid without FPN. All parameters are fine-tuned for $12$ epochs with an AdamW optimizer. The learning rate is initialized as $0.0002$ and decays by $0.1$ at the $9$-th and $11$-th epoch.

\noindent\textbf{Additional Results.} Results in terms of other metrics are provided in \Tbref{tab:obj_det_full}, from which can be seen that our proposed method surpasses all competitors consistently. Compared with the state-of-the-art method Mugs, ours achieves comparable performance for large objects, while for small objects ours shows a significant advantage. Consequently, our method brings improvements by generating fine-grained representations.

\subsection{Zero-shot Transfer Learning on Video Object Segmentation}
\noindent \textbf{Implementation.} We leverage the learned dense representations to solve the semi-supervised VOS, \ie, given the target mask in the first frame, we need to track and segment the object in the rest frames. Following \cite{dino,zhou2021ibot}, masks of the first frame and the previous frames are propagated by finding the nearest neighbor between consecutive frames. Experiments are conducted on the DAVIS-2017 \cite{pont20172017} dataset, which constants $30$ videos and $59$ objects.

\subsection{Additional Results of Extended Training and Datasets}
The evaluation results on other testing settings are provided in \Fgref{fig:longer_training_full}. The trend is consistent with \Fgref{fig:longer_training}. It can be seen that the relative improvement is more obvious in $k$-NN classification and linear-based semantic segmentation, demonstrating that our proposed method generate more semantic-aware representations for both image-level token and patch tokens.

\section{Visualization Results}
\label{app:vis}
To qualitatively demonstrate the effects of our proposed method, we provided more visualization results here. All visualizations are conducted with a ViT-S/16 model pretrained with our proposed framework. The input images are resized into $336\times 336$. 

\textbf{Cross-attention maps of the OAF module.} In \Fgref{fig:vis_attn_full}, we visualize the cross-attention maps of different heads between object prototypes and the input image. Compared with self-attention maps shown in \Fgref{fig:vis_attn}, the proposed OAF is more effective in capturing the shared patterns between different objects.

\textbf{Clustering of patch representations.} The patch representations are clustered and visualized via t-SNE in \Fgref{fig:vis_tsne}, which presents a clear clustered structure. Object parts of the same category are well-aligned in the embedding space, even if their appearances could be various. 

\textbf{Sparse correspondences between images/views.} A natural concern is whether the semantic concentration techniques impair the distinguishability between patches from the same category. To answer this question, we utilize the correspondence visualization techniques proposed by \cite{amir2021deep} to perform key point matching. As shown in \Fgref{fig:vis_corr_pos} and \Fgref{fig:vis_corr}, representations of the object parts are robust to background scenes, spatial shift, and color distortion, showing satisfactory discriminability of fine-grained representations.

\begin{figure*}[h]
  \centering
  \includegraphics[scale=0.55]{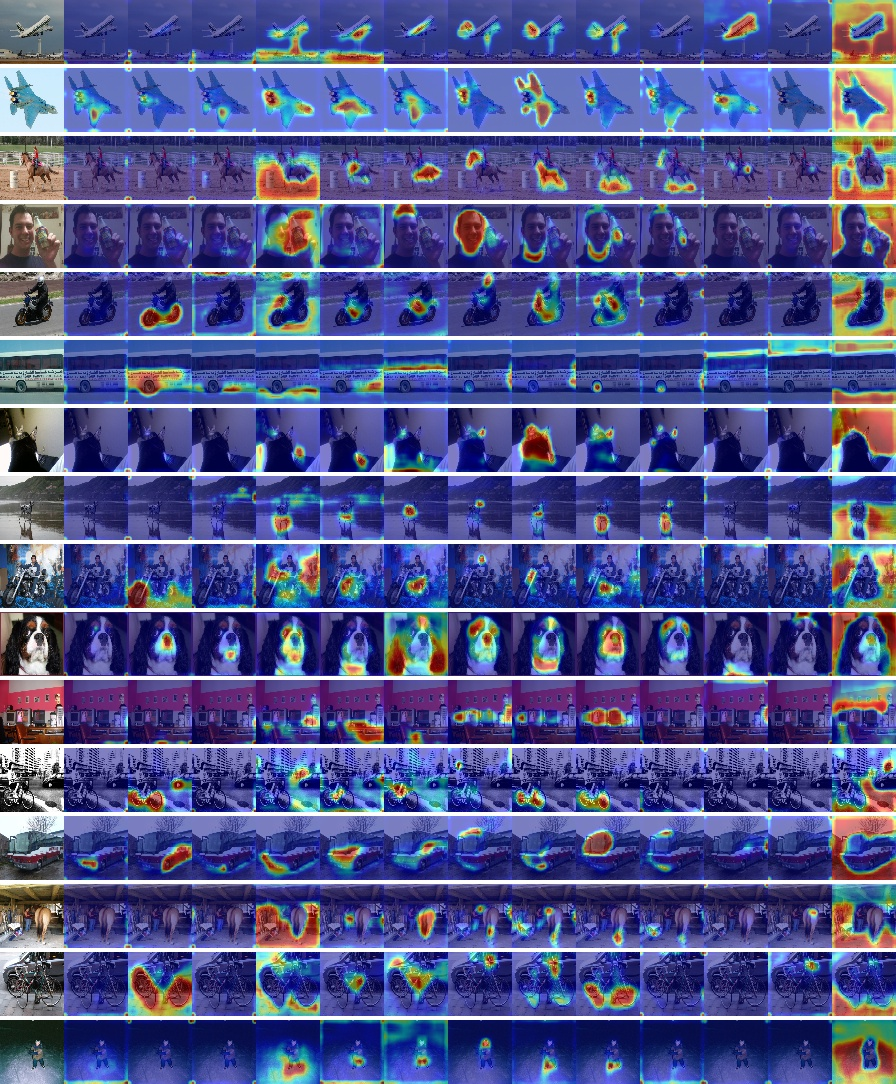}
  \caption{Cross-attention maps of the OAF module. Each column corresponds to one of the heads of an object prototype. The response in the attention maps is ignorable if no similar object appears in the image.}
  \label{fig:vis_attn_full}
\end{figure*}

\begin{figure*}[h]
  \centering
  \includegraphics[scale=0.52]{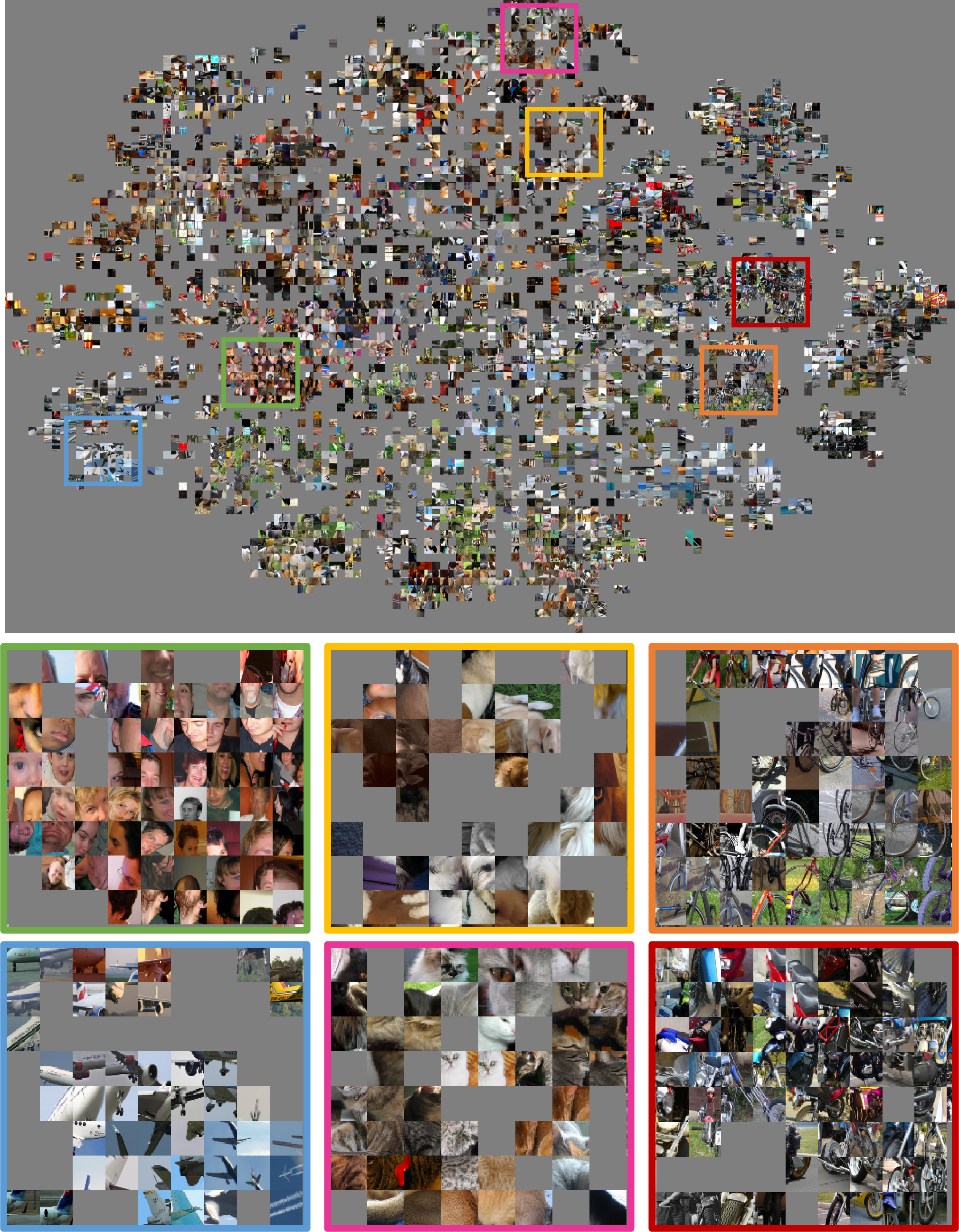}
  \caption{Clustering of patch representations in the PASCAL VOC dataset. The dimension of representations is reduced to 2 via t-SNE. The patches are randomly sampled from the validation set. Since it is difficult to discern the semantic information from the $16\times 16$ patches, the height and width of the patches shown in the figure are expanded 5 times for the sake of observation.}
  \label{fig:vis_tsne}
\end{figure*}

\begin{figure*}[h]
  \centering
  \includegraphics[scale=0.92]{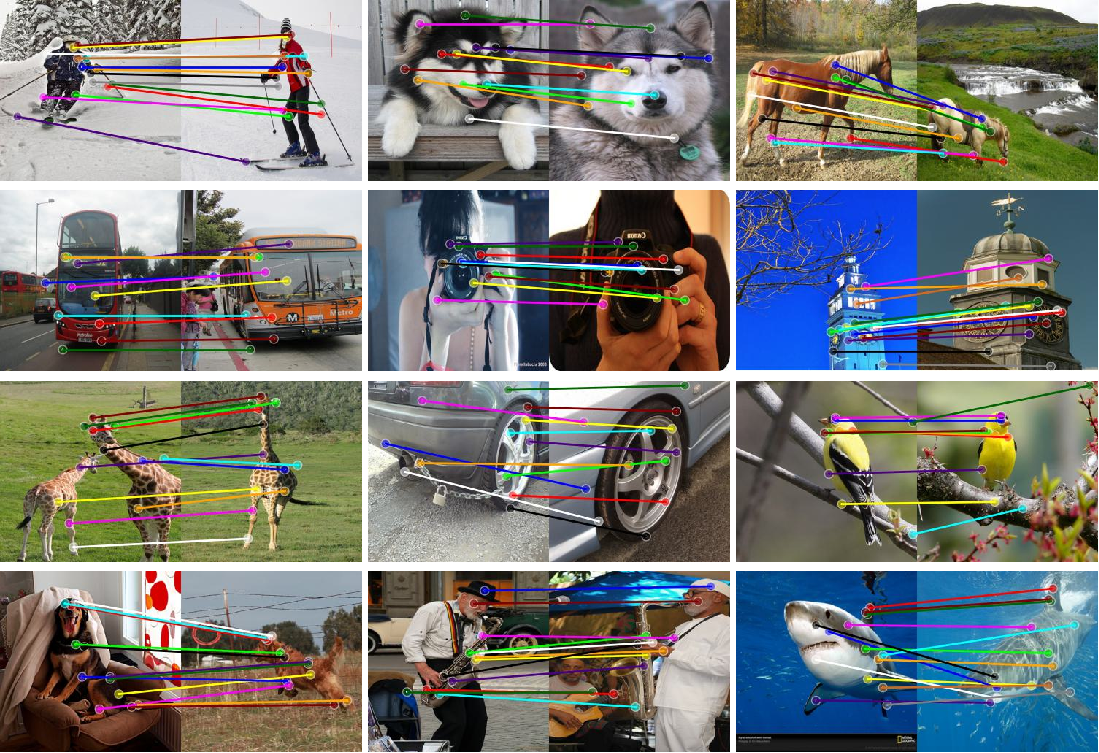}
  \caption{Visualization of sparse correspondences between images from the same category.}
  \label{fig:vis_corr_pos}
\end{figure*}

\begin{figure*}[h]
  \centering
  \includegraphics[scale=0.92]{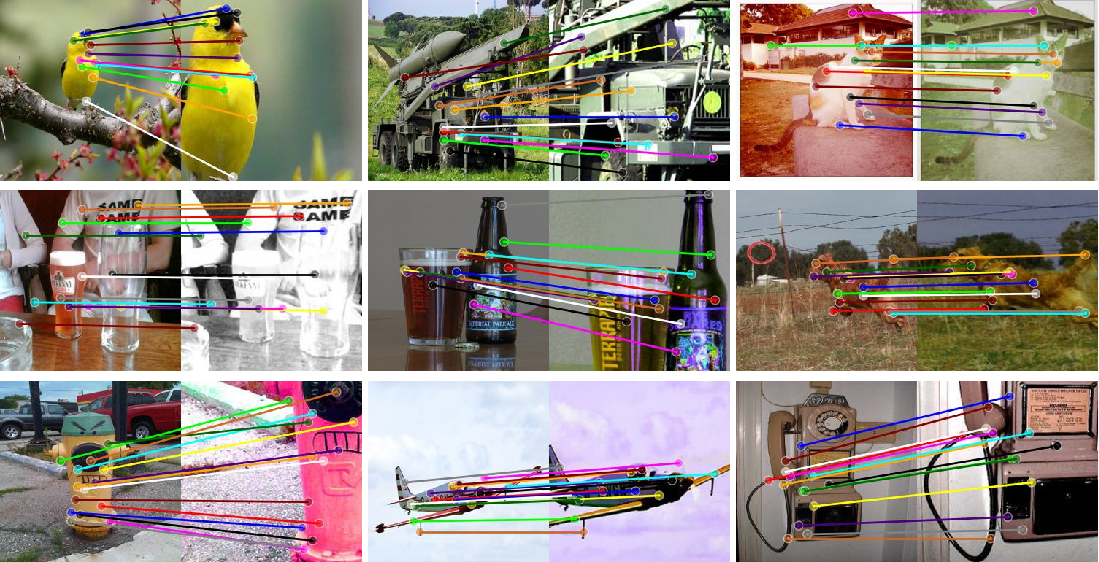}
  \caption{Visualization of sparse correspondences between different views.}
  \label{fig:vis_corr}
\end{figure*}

\end{document}